\def\blfootnote{\gdef\@thefnmark{}\@footnotetext}
\providecommand{\algorithmname}{Algorithm}
\setlist[itemize]{leftmargin=1em}
\setlist[enumerate]{leftmargin=1em}
\theoremstyle{plain} \newtheorem{lemma}{\textbf{Lemma}} 
\newtheorem{theorem}{\textbf{Theorem}}\setcounter{theorem}{0}
\newtheorem{assumption}{\textbf{Assumption}}
\newtheorem{proposition}{\textbf{Proposition}}
\newtheorem{example}{\textbf{Example}}
\newtheorem{definition}{\textbf{Definition}}
\newtheorem{remark}{\textbf{Remark}}
 \theoremstyle{definition}
\DeclareMathOperator{\E}{E}
\newcommand{\logn}{\log n}
\newcommand{\mbR}{\mathbb{R}}
\newcommand{\mcA}{\mathcal{A}}
\newcommand{\mcB}{\mathcal{B}}
\newcommand{\mcD}{\mathcal{D}}
\newcommand{\mcE}{\mathcal{E}}
\newcommand{\mcF}{\mathcal{F}}
\newcommand{\mcH}{\mathcal{H}}
\newcommand{\mcM}{\mathcal{M}}
\newcommand{\mcN}{\mathcal{N}}
\newcommand{\mcX}{\mathcal{X}}
\newcommand{\mcY}{\mathcal{Y}}
\begin{document}

\title{Fixed-Gaussian Spectral Algorithms: Minimax Optimal Rates for Misspecified Learning and Transfer}

\author{Haotian Lin\thanks{Department of Statistics, The Pennsylvania State University.} \and Matthew Reimherr\footnotemark[\value{footnote}]}


\date{}

\maketitle

\begin{abstract}
The principal objective of this work is twofold within nonparametric regression settings: (1) to establish the minimax optimal convergence rates for fixed-bandwidth Gaussian kernel spectral algorithms when the true regression function resides in a Sobolev space, and (2) to apply Gaussian spectral algorithms for achieving robust and adaptive transfer learning under concept shift. While minimax optimality of misspecified spectral algorithms has been established, existing guarantees are typically restricted to the non-saturation regime. We demonstrate that the infinite smoothness of fixed-bandwidth Gaussian kernels provides universal robustness to model misspecification by showing that this kernel choice enables any spectral algorithm to attain minimax optimal rates, provided the regularization parameter decays exponentially. This result effectively decouples optimality from the algorithm's inherent qualification. Building on this, we then advocate Gaussian spectral algorithms as powerful components in a learning framework for robust and adaptive transfer. Specifically, we derive the adaptive convergence rate of the excess risk for this framework and show that the rates are optimal up to logarithmic factors. Our results also reveal the impact of the magnitude of the concept shift and the sample size on the generalization error.
\end{abstract}

\noindent{\bf Keywords:} Spectral algorithms, Gaussian kernel, minimax optimality, robustness, distribution shift, transfer learning.

\section{Introduction}

In nonparametric regression, the primary objective of a learning algorithm is to construct an estimator over the hypothesis space $\mcH$ from $n$ noisy samples that minimizes the squared expected risk to the regression function $f^{*}$. Within this landscape, kernel methods, as one of the most popular approaches, choose a reproducing kernel Hilbert space (RKHS) associated with a kernel function $k$ as the hypothesis space. In this work, we focus on a broad class of kernel methods called \textit{spectral algorithms}, which include kernel ridge regression, a variety of different implementations of kernel gradient descent, kernel principal component regression, and many more. Beyond the standard single-task setting, spectral algorithms also serve as powerful building blocks for learning under distribution shift, where abundant data from a related source domain is leveraged to improve learning on a target domain with scarce labels. We study spectral algorithms from both perspectives: we first characterize their optimal learning behavior in the standard setting, and then exploit it to achieve robust transfer under distribution shift.

\subsection{Optimal learning with Gaussian spectral algorithms}
The minimax optimality of spectral algorithms has been studied extensively in the literature; see, e.g., \citet{caponnetto2006optimal,caponnetto2007optimal,fischer2020sobolev,zhang2024optimality}. Existing results show that the optimal convergence rate of the generalization error is attainable even when the estimator learned via spectral algorithms is from a misspecified RKHS, i.e., $f^{*}\notin \mcH$. Such robustness to misspecification typically depends on the qualification of the spectral algorithms (a quantity measuring the algorithm’s fitting capability; see definition in Section~\ref{subsec: def on SA}). In particular, when the relative regularity of the estimator to the regression function exceeds twice the qualification, these robustness results against misspecified learning cases will break down due to the well-known \textit{saturation effect} \citep{bauer2007regularization}. In this regime, the algorithm's convergence rate is strictly suboptimal and cannot attain the information-theoretic lower bound.

Our first main contribution is to provide some insights showing that using the Gaussian kernel, a particularly favored kernel, in spectral algorithms provides universal robustness, which decouples the selected algorithms from the saturation effect. 
\paragraph{Contribution 1 (Optimal rates of Gaussian spectral algorithms)} For regression functions that reside in fractional Sobolev spaces of order $m$ and dimension $d$, we show that employing \textit{fixed-bandwidth Gaussian kernels} within arbitrary spectral algorithms attains the classical minimax optimal convergence rates. Notably, the optimal order of the regularization parameter $\lambda$ for achieving non-adaptive rates should decay exponentially in sample size $n$, namely $\lambda \asymp \exp\{ -n^{\frac{2}{2m+d}} \}$. When $m$ is not known a priori, we show standard training validation achieves the same minimax optimality up to a logarithmic factor. To the best of our knowledge, these are the first optimal rates established for fixed-bandwidth Gaussian kernels within general spectral algorithms, serving as a complement to classical misspecified kernel methods research.


\subsection{Robust transfer learning under concept shift}
Distribution shift between training and testing domains poses a fundamental challenge for modern machine learning. A prominent instance of such a shift is known as \textit{concept shift}, where the conditional distribution of $Y|X$ varies across domains and the marginal distribution of covariates $X$ remains invariant. The problem of transfer learning under concept shift (also known as concept shift adaptation) posits that one has limited labeled samples from the target domain but abundant labeled samples from a similar source domain, with the goal of learning a predictor that performs well under the target distribution. This problem has been central to the literature of transfer learning, especially when the target labels are scarce or costly to obtain.

Hypothesis transfer learning (HTL) \citep{kuzborskij2013stability,kuzborskij2017fast} has emerged as a dominant framework for tackling concept shift. It operates by leveraging (pre-)trained source hypotheses from the source domain, followed by using labeled target samples to fine-tune the hypotheses. Existing theoretical guarantees for this framework focus predominantly on the \textit{well-specified} setting, where the regression function is assumed to reside in the specified hypothesis space. These existing works typically attain (near) optimal rates for excess risk only by assuming the function regularity is known a priori, allowing for perfect model specification and hyperparameter tuning \citep{li2022transfer,tian2022transfer,wang2016nonparametric,du2017hypothesis,lin2024on}. In contrast, provable and optimal guarantees for learning under concept shift under \textit{misspecification} remain largely unexplored, despite the ubiquity of misspecification in practice.

The second main contribution of this paper aims to close this gap. Instead of using existing misspecified results subject to saturation, we advocate using Gaussian spectral algorithms as the main component in the framework to obtain the first, to our knowledge, robust and rate-optimal transfer under the misspecified model setting. Here, our contribution is twofold.

\paragraph{Contribution 2 (A robust and adaptive learning framework under concept shift)} We propose a transfer learning framework that employs Gaussian spectral algorithms as the learning algorithms in both the source training and target fine-tuning phases. This design leverages the novel robustness against model misspecification exhibited by the Gaussian kernel and is thus free from saturation effects, enabling consistently rate-optimal learning without requiring prior knowledge of the function regularity.

\paragraph{Contribution 3 (Theoretical insights into transfer learning under concept shift)} We establish the minimax lower bound for the learning problem under concept shifts in terms of excess risk and demonstrate that our proposed framework attains the minimax optimal convergence rate (up to logarithmic factors). Crucially, our analysis reveals that the optimal rate depends not only on the smoothness of the source and intermediate shift functions but also on the \textit{shift-to-signal ratio}. We identify this ratio as a key factor governing transfer efficiency, a dependency largely overlooked in prior analyses.

    


The rest of the paper is organized as follows. Section~\ref{sec: related work} reviews related work on misspecified kernel methods and learning under concept shift. Section~\ref{sec: preliminaries} introduces spectral algorithms and formulates the nonparametric regression problem under concept shift. Section~\ref{sec: SA with Gaussians} establishes the minimax optimal non-adaptive and adaptive convergence rates for spectral algorithms with fixed-bandwidth Gaussian kernels. Section~\ref{sec: transfer learning} presents the proposed learning procedure under concept shift, followed by its minimax optimality analysis and discussions. Section~\ref{sec: numerical experiments} reports simulations confirming the derived convergence rates. Finally, Section~\ref{sec: discussion} concludes the paper.

\section{Related Work}\label{sec: related work}
\subsection{Misspecified spectral algorithm} 
For the theoretical landscape of misspecified spectral algorithms, a line of works \citep{steinwart2009optimal,dieuleveut2016nonparametric,rastogi2017optimal,dicker2017kernel,blanchard2018optimal,pillaud2018statistical,lin2018optimal,fischer2020sobolev,lin2020optimal} has investigated the cases where the true function $f^{*}$ belongs to the interpolation space of the imposed RKHS, i.e., $[\mcH]^{s}$, and derived the minimax optimal rate for spectral algorithms under different norms. To our knowledge, the state-of-the-art results in \citet{zhang2024optimality} recover the minimax optimal rate for spectral algorithms under weaker regularity conditions on $f^{*}$. However, a significant limitation of the existing literature is its reliance on the assumptions of polynomial eigenvalue decay or effective dimension bounded by $\lambda^{-\beta}$. Gaussian kernels, characterized by exponentially fast eigenvalue decay, do not satisfy these standard assumptions. While \citet{rastogi2023inverse} considers a logarithmic decay of the effective dimension, their results are restricted to specific cases, such as the univariate Gaussian kernel $K(x,y) = xy + \exp\{-8(x-y)^2\}$, which deviates from the classical multivariate Gaussian kernels, $\exp\{-\|x-y\|_{2}^2/2h^{2}\}$, considered in this work. Alternative approaches using variable bandwidth Gaussian kernels have demonstrated that polynomially decaying regularization parameters and bandwidths in KRR can yield convergence rates that are arbitrarily close to the optimal rate for $f^{*}$ Sobolev or Besov spaces \citep{eberts2013optimal}. This is further refined by \citet{hamm2021adaptive}, who improves convergence rates up to logarithmic factors under the so-called DIM condition.

Regarding the saturation effect, particularly within KRR, it was observed in practice and reported in \citep{bauer2007regularization,gerfo2008spectral}. It was only recently that the saturation effect on KRR was theoretically proved by \citet{li2023saturation} and later extended to spectral algorithms by \citet{li2024generalization}. In other specific settings, like (stochastic) distributed-based, online-based spectral algorithms and average gradient descent, the saturation effect was also noted and overcome by method-specified techniques \citep{zhang2015divide,lin2017distributed,mucke2019beating}. 

\subsection{Transfer learning under concept shift}
Under plausible distribution shifts relating two domains, the technique of leveraging data from source domains to enhance learnability in the target domain is known as transfer learning or domain adaptation \citep{pan2009survey,zhuang2020comprehensive}. This paradigm has been extensively studied within nonparametric methods and the RKHS context, covering scenarios such as covariate shift \citep{kpotufe2021marginal, ma2023optimally, wang2023pseudo}, and concept shift \citep{wang2014flexible, wang2015generalization,du2017hypothesis,cai2024transfer}. Within the study of concept shifts, HTL has emerged as a prominent methodological and theoretical tool. Early works \citep{kuzborskij2013stability,kuzborskij2017fast} study the shifted model being a linear model and establish generalization bounds through Rademacher complexity. The authors of \citet{wang2016nonparametric} assume true source functions in a Sobolev ellipsoid, while shifted offsets belong to smoother ones, and model them with finite basis functions. The work \citet{du2017hypothesis} proposes using general transformations to model domain shift, which is later refined by \citet{minami2024transfer} to derive optimal transformation functions under squared loss. A more recent line of research has been devoted to studying the minimax optimality for a special case of HTL, offset transfer learning. These works cover diverse models, including high-dimensional (generalized) linear regression \citep{li2022transfer,tian2022transfer,zhang2022class}, binary classification \citep{reeve2021adaptive,maity2022minimax,maity2024linear}, functional linear models \citep{lin2024on}, Gaussian mixture models \citep{tian2022unsupervised}, graphical models \citep{li2023transfer}, among others. These works, however, investigated minimax optimality exclusively under the well-specified case, leaving the robustness guarantees largely unaddressed.

Theoretical investigations into robustness against model misspecification in transfer learning and domain adaptation have historically concentrated on semiparametric models, as parametric components are particularly sensitive to misspecified models, which can degrade estimation efficiency. For example, \citet{hu2023optimal} employ model averaging to achieve asymptotic minimax optimality when the target model is misspecified. Another line of work \citep{liu2023augmented,zhou2024doubly,zhou2024model,cai2024semi} develops doubly or triply robust estimation procedures under the covariate shift setting, allowing for one or multiple specified models being misspecified. For RKHS-based nonparametric regression, \citet{lin2024smoothness} derives optimal rates using misspecified kernels, but their framework was confined to offset transfer learning with the learning algorithms in both phases being KRR. For nonparametric classification, several works obtain minimax optimal guarantees by using robust models, such as decision trees \citep{hamm2021adaptive} or nearest neighbor estimators \citep{cai2021transfer}, under the assumption that true classification functions reside in certain H\"older spaces. Despite variations in problem settings, these research efforts underscore a common objective: achieving robustness to model misspecification and adaptivity while maintaining minimax optimality is crucial for transfer learning and domain adaptation algorithms, as these properties underpin their performance guarantees.

\section{Preliminaries}\label{sec: preliminaries}
In this section, we provide the background of nonparametric regression and spectral algorithms and then formulate the problem of learning under concept shift.

\subsection{Background and basic concepts}\label{subsec: background and basic concepts} We consider the standard supervised learning setting on a compact input space $\mcX \subset \mathbb{R}^{d}$ with a Lipschitz boundary and an output space $\mcY \subseteq \mathbb{R}$. We assume access to a dataset of $n$ i.i.d. samples $D = \{(x_{i},y_{i})\}_{i=1}^{n}$, drawn from an unknown distribution $P$ over $\mcX \times \mcY$. The goal is to learn a function $f$ in the hypothesis space $\mcH$ based on the dataset $D$ to approximate the regression function $f^{*}(x) = \int_{\mcX} y d P(y|x)$ well for the excess risk, i.e., $L^{2}$-norm generalization error.

We investigate the setting where the $f^{*}$ resides in an RKHS $\mcH_{K}(\mcX)$ associated with a symmetric, positive-definite, and continuous kernel function $K$ over $\mcX \times \mcX$ satisfying $\sup_{x\in \mcX}K(x,x)\leq \kappa^2$. For a stationary kernel where $K(x,y) = K(x-y)$, with the domain $\mcX = \mathbb{R}^{d}$, one can characterize RKHS via Fourier transforms by Theorem 10.12 of \citet{wendland2004scattered}, i.e., 
\begin{equation}\label{eqn: Spectral Characetrization of RKHS}
    \mcH_{K}(\mbR^{d}) = \left\{ f \in L^{2}(\mathbb{R}^{d}) \cap C(\mathbb{R}^{d}) : \frac{\mcF(f)}{\sqrt{\mcF(K)}} \in L^{2}(\mathbb{R}^{d}) \right\}.
\end{equation}
where $\mcF(f)$ denotes the Fourier transform of a function $f$. When $\mcX \subset \mbR^{d}$, this definition remains valid via a norm equivalency result guaranteed by the extension theorem \citep{devore1993besov}. Denote $\mathcal{W}^{m,p}(\mcX)$ as the Sobolev space with order $m$, then $\mathcal{W}^{m,p}(\mcX)$ is equivalent to the RKHS of a stationary kernel when $p=2$. The following lemma from \citet{wendland2004scattered} describes the norm equivalency.  
\begin{lemma}\label{lemma: equivalence between RKHS and Sobolev}
    Let $K(x,x')$ be a stationary kernel. Suppose $\mcX$ has a Lipschitz boundary, and the Fourier transform of $K$ has the following spectral density of $m$, for $m >  d/2$, 
    \begin{equation}\label{eqn: Fourier Transform of reproducing kernel}
        c_{1}(1 + \|\cdot\|_{2}^{2})^{-m} \leq \mcF(K)(\cdot) \leq c_{2}(1 + \|\cdot\|_{2}^{2})^{-m}.
    \end{equation}
    for some constant $0 < c_{1} \leq c_{2}$. Then, the associated RKHS of $K$, $\mcH_{K}(\mcX)$, is norm-equivalent to the Sobolev space $\mathcal{W}^{m,2}(\mcX):= H^{m}(\mcX)$.
\end{lemma}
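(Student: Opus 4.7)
The plan is to first establish the norm equivalency on the whole space $\mbR^d$ using the Fourier characterizations of both spaces, and then transfer the equivalency to the bounded domain $\mcX$ via a Stein-type extension theorem combined with Aronszajn's restriction principle for RKHSs.

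First, on $\mbR^d$, I would use the Fourier characterization of $\mcH_K(\mbR^d)$ already stated in the excerpt, which gives
\begin{equation*}
    \|f\|_{\mcH_K(\mbR^d)}^2 = \int_{\mbR^d} \frac{|\mcF(f)(\omega)|^2}{\mcF(K)(\omega)} \, d\omega
\end{equation*}
up to an absolute constant, together with the classical Plancherel-based characterization of the Sobolev space
\begin{equation*}
    \|f\|_{H^m(\mbR^d)}^2 \asymp \int_{\mbR^d} (1 + \|\omega\|_2^2)^m \, |\mcF(f)(\omega)|^2 \, d\omega.
\end{equation*}
The spectral density hypothesis \eqref{eqn: Fourier Transform of reproducing kernel} sandwiches $\mcF(K)$ between two constant multiples of $(1+\|\omega\|_2^2)^m$, which directly yields the two-sided inequality $c_2^{-1}\|f\|_{H^m(\mbR^d)}^2 \lesssim \|f\|_{\mcH_K(\mbR^d)}^2 \lesssim c_1^{-1}\|f\|_{H^m(\mbR^d)}^2$. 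The condition $m > d/2$ is used to guarantee that functions in either space are continuous (via Sobolev embedding), so the pointwise/kernel interpretation of $\mcH_K$ is consistent.

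Second, to descend from $\mbR^d$ to the bounded domain $\mcX$, I would invoke two classical restriction/extension results. On the Sobolev side, since $\mcX$ has a Lipschitz boundary, the Stein extension theorem \citep{devore1993besov} furnishes a bounded linear extension operator $E: H^m(\mcX) \to H^m(\mbR^d)$ with $\|Ef\|_{H^m(\mbR^d)} \lesssim \|f\|_{H^m(\mcX)}$, while the restriction operator is trivially bounded in the other direction; hence $\|f\|_{H^m(\mcX)} \asymp \inf\{\|F\|_{H^m(\mbR^d)} : F|_{\mcX} = f\}$. On the RKHS side, Aronszajn's restriction theorem identifies $\mcH_K(\mcX)$ with the space of restrictions of functions in $\mcH_K(\mbR^d)$, equipped with the quotient norm $\|f\|_{\mcH_K(\mcX)} = \inf\{\|F\|_{\mcH_K(\mbR^d)} : F|_{\mcX} = f\}$. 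Because both spaces arise as quotients of their $\mbR^d$-counterparts by the same subspace (functions vanishing on $\mcX$), and because the two $\mbR^d$-norms are equivalent by the first step, the induced quotient norms on $\mcX$ are equivalent as well, yielding $\mcH_K(\mcX) \cong H^m(\mcX)$.

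The main obstacle I anticipate is the careful bookkeeping at the domain step: one must verify that the extension/restriction maps respect both structures simultaneously, and in particular that taking the infimum over extensions preserves the two-sided bound from $\mbR^d$. A clean way to handle this is to show $\|f\|_{\mcH_K(\mcX)} \lesssim \|E f\|_{\mcH_K(\mbR^d)} \lesssim \|Ef\|_{H^m(\mbR^d)} \lesssim \|f\|_{H^m(\mcX)}$ using Stein extension in one direction, and symmetrically replace the roles of the two spaces in the other direction. Beyond this, all steps are Fourier-analytic and essentially routine, so the result can be quoted directly from Corollary 10.13 of \citet{wendland2004scattered}.
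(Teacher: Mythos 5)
Your proposal is correct and is essentially the standard argument behind the cited result --- the paper does not prove this lemma but quotes it from \citet{wendland2004scattered}, whose proof follows exactly your route (Fourier/Plancherel characterization of both norms on $\mbR^{d}$, then Stein extension for Lipschitz domains combined with Aronszajn's restriction principle to pass to $\mcX$). One small caveat: as printed, the hypothesis reads $\mcF(K) \asymp (1+\|\cdot\|_{2}^{2})^{+m}$, which under $\|f\|_{\mcH_K}^2 = \int |\mcF(f)|^2/\mcF(K)$ would yield equivalence with $H^{-m}$ rather than $H^{m}$; your two-sided inequality silently uses the intended decay condition $\mcF(K) \asymp (1+\|\cdot\|_{2}^{2})^{-m}$, which is the correct hypothesis (and the one satisfied by Mat\'ern kernels).
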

Some prominent classes of stationary kernels that satisfy \eqref{eqn: Fourier Transform of reproducing kernel} include the (isotropic) Mat\'ern kernel \citep{stein1999interpolation} and the generalized Wendland kernels \citep{wendland2004scattered}. Let $H^{m}(\mcX,R)$ represent a ball centered at the origin with radius $R$, i.e. $H^{m}(\mcX,R) = \{f : f\in H^{m}(\mcX), \|f\|_{H^{m}(\mcX)} \leq R \}$, we then abbreviate $H^{m}(\mcX)$ as $H^{m}$ and $H^{m}(\mcX,R)$ as $H^{m}(R)$ unless otherwise specified.

The following assumptions will be made throughout the paper.
\begin{assumption}[Smoothness]\label{assumption: smoothness}
Suppose there exist positive constants $m > d/2$ and $R$ such that $f^{*} \in H^{m}(R)$.
\end{assumption}

\begin{assumption}[Moment of error] \label{assumption: error tail} 
There are constants $\sigma$,$L>0$ such that for any $r\geq 2$, 
\begin{equation*}
    \E \left[ | y -f^{*}(x)|^{r} \mid x  \right] 
    \leq \frac{1}{2}r! (\sigma)^{2} (L)^{r-2}.
\end{equation*}
\end{assumption}

Assumption~\ref{assumption: smoothness} indicates the eigenvalue decay rate (EDR) of the Sobolev space is polynomial, which is a standard assumption in nonparametric regression, and induces the classical optimal rate  $n^{-\frac{2m}{2m+d}}$ for the excess risk \citep{stone1982optimal}. Assumption~\ref{assumption: error tail} is a standard assumption in the kernel method literature \citep{fischer2020sobolev,zhang2024optimality}, which controls the noise tail-probability decay speed.

\subsection{Spectral algorithms}\label{subsec: def on SA}
Let $\mcH_{K}$ be an RKHS with kernel $K$. Denote the $L^{2}$-space with respect to the marginal distribution $P_X$ as $L^{2}(\mcX, P_{X})$ (in short $L^{2}$). Define the integral operator $T_{K}:L^{2} \rightarrow L^{2}$ as
\begin{equation*}
    T_{K}(f)(x^{\prime}) = \int_{\mcX} K(x,x^{\prime}) f(x) dP_{X}(x),
\end{equation*}
which is a compact operator since it is positive, self-adjoint, and trace-class. By Mercer's theorem, there exists an at most countable index set $N$ such that the decomposition $K(x,x^{\prime}) = \sum_{j\in N}s_{j} e_{j}(x) e_{j}(x^{\prime})$ holds, where $\{s_{j}\}_{j\in N}$ are the eigenvalues in non-increasing order and $\{e_{j}\}_{j\in N}$ are the corresponding eigenfunctions, which form an orthonormal basis of $L^{2}$. 

For any $x\in \mcX$, define the evaluation operator $K_{x}: \mcH_{K} \rightarrow \mbR, f \mapsto \langle f, K_{x} \rangle_{\mcH_{K}}$ and its adjoint operator $K_{x}^{*}: \mbR \rightarrow \mcH_{K}, y \mapsto yK_{x}$. Given a dataset $D$, we define the sample covariance operator $T_{K,n}:\mcH_{K}\rightarrow \mcH_{K}$ and the sample bias function $g_{n}:\mbR^{n} \rightarrow \mcH_{K}$ as 
\begin{equation*}
    T_{K,n} := \frac{1}{n}\sum_{i=1}^{n}K_{x_{i}}^{*}K_{x_{i}} \quad \text{and} \quad g_{n}:=\frac{1}{n} \sum_{i=1}^{n} K_{x_{i}}^{*}y_{i}.
\end{equation*}
The learning goal is to learn a function $\hat{f} \in \mcH_{K}$ that achieves low excess risk. Thus, a natural approach is to minimize the empirical squared loss over $\mcH_{K}$, i.e., 
\begin{equation}\label{eqn: empirical loss minimization problem}
    \hat{f} = \underset{f\in \mcH_{K}}{\operatorname{argmin}}  \frac{1}{n} \sum_{i=1}^{n} \left( f(x_{i}) - y_{i} \right)^2,
\end{equation}
which gives $\hat{f}$ as the solution of the empirical, linear equation, $T_{K,n}f = g_{n}$. However, solving this equation is an ill-posed inverse problem, as the inverse of the sample covariance operator, $T_{K,n}$, in general, does not exist. A common approach to address this issue is to replace the inverse of $T_{K,n}$ with a regularized operator. This replacement corresponds to selecting a specific filter function, which leads to the formulation of spectral algorithms \citep{rosasco2005spectral,caponnetto2006optimal,bauer2007regularization}.

\begin{definition}[Filter function]
    The family of functions $\{ \phi_{\lambda} : [0,\kappa^2] \rightarrow \mbR^{+} | \lambda \in \mbR \}$ are called filter functions with qualification $\tau \geq 0$ and regularization parameter $\lambda$, if there exist some positive constants $E, F_{\tau} < \infty$ such that the following two conditions hold:
    \[
        \sup _{\beta \in[0,1]} \sup _{\lambda \in \Lambda} \sup _{\left.u \in[ 0, \kappa^2\right]}\left|u^\beta \phi_{\lambda}(u)\right| \lambda^{1-\beta} \leq E ,
    \]
    \[
        \sup _{\beta \in[0, \tau]} \sup _{\lambda \in \Lambda} \sup _{\left.u \in[ 0, \kappa^2\right]}\left|\left(1-\phi_{\lambda}(u) u\right)\right| u^\beta \lambda^{-\beta} \leq F_\tau. 
    \]
\end{definition}

The motivation for introducing the class of filter functions is that $\phi_{\lambda}$ approximates the function $\phi(t) = t^{-1}$ but with better behavior around $0$, e.g. $\phi_{\lambda}$ is bounded by $\lambda^{-1}$. We now define the corresponding spectral algorithms given a specific filter function $\phi_{\lambda}$.
\begin{definition}[Spectral algorithm]\label{def: spectral algorithms}
    Given a filter function $\phi_{\lambda}$, with parameter $\lambda$, the estimator produced by the corresponding spectral algorithm is
    \begin{equation*}
        \hat{f} = \phi_{\lambda}(T_{K,n}) g_{n}.
    \end{equation*}
\end{definition}
Here, the filter function $\phi_{\lambda}$ is understood as acting on the eigenvalues of the self-adjoint, finite-rank sample covariance operator $T_{K,n}$. Different choices of filter functions correspond to different regularization schemes. We list common examples below and refer readers to \citet{gerfo2008spectral} for more examples.
\begin{enumerate}
    \item Kernel Ridge Regression (KRR): The choice of Tikhonov filter function $\phi_{\lambda}(z) = (z + \lambda)^{-1}$ corresponds to kernel ridge regression. In this case, $\tau= E = F_{\tau}=1$.

    \item Gradient Descent (GD) and Gradient Flow (GF):
    The choice of $\phi_{\lambda}(z) = \eta \sum_{k=1}^{t-1} (1 - \eta z)^{k}$ with $\lambda = (\eta t)^{-1}$ corresponds to gradient descent, where $\eta > 0$ is a constant step size. Gradient descent with more complex update rules can be expressed in terms of filter functions as well \citep{lin2018optimal,mucke2019beating}. For an infinitely small step size, this converges to gradient flow, corresponding to $\phi_{\lambda}(z) = (1 - \exp\{-z/\lambda\}) / z$. In both cases, $\tau$ could be any positive number, $E=1$, and $F_{\tau}=(\tau/e)^{\tau}$.

    \item Kernel Principal Component Regression (KPCR): The choice of spectral cut-off function $\phi_{\lambda}(z) = z^{-1} \mathbbm{1}_{z\geq \lambda}$ corresponds to Kernel Principal Component Regression, which is motivated by using finite components to recover $T_{K,n}$. In such case, $\tau$ could be any positive number, $\tau= E = F_{\tau}=1$.
\end{enumerate}

\subsection{Transfer learning under concept shift}\label{subsec: learning problem}
Suppose there are two unknown probability measures on $\mcX \times \mcY$, the measure $P$ for the source domain and $Q$ for the target domain. The concept shift setting, which is alternatively known as model shifts \citep{wang2014flexible,lei2021near} or posterior drift \citep{scott2019generalized,cai2021transfer}, posits the marginal distributions of $X$ ($P_{X}$ and $Q_{X}$) are identical, while the conditional distributions of $Y$ given $X$ ($P_{Y|X}$ and $Q_{Y|X}$) are different across the domains.

For data, we observe $n_{P}$ i.i.d. labeled source samples $\mcD^{P} = \{ (x_{i}^{P},y_{i}^{P}) \}_{i=1}^{n_{P}}$ drawn from distribution $P$, and $n_{Q}$ i.i.d. labeled target samples $\mcD^{Q} = \{ (x_{i}^{Q},y_{i}^{Q}) \}_{i=1}^{n_{Q}}$ drawn from distribution $Q$, with the data generation process as
\[
    y_{i}^{P} = f^{P}( x_{i}^{P} ) + \epsilon_{i}^{P} \quad \text{and} \quad y_{i}^{Q} = f^{Q}( x_{i}^{Q} ) + \epsilon_{i}^{Q},
\]
where $f^{P}$ and $f^{Q}$ are the underlying regression functions, and noise terms $\epsilon_{i}^{P}$ and $\epsilon_{i}^{Q}$ are i.i.d. random noise with zero mean. Given the observed samples from both domains, the goal is to learn an estimator $\hat{f}^{Q}$ that minimizes the excess risk over the target distribution $Q$, i.e., 
\[
\E_{X\sim Q_{X}}[ ( \hat{f}^{Q}(X) - f^{Q}(X) )^{2} ].
\]

The learning framework we consider is an extension of the HTL procedure adopted from \citet{du2017hypothesis}, which is both practically and theoretically prevalent for learning and adaptation under concept shift. The motivation of the framework is to decompose the difficult task of learning $f^{Q}$ into separately learning the source function $f^{P}$ and an intermediate function $f^{\delta}$, which captures the discrepancy between the two domains (formally defined in Section~\ref{subsection: convergence rate of RAHTL}). Typically, $f^{P}$ is less regularized (``more complex'') but can be learned effectively due to the abundance of source samples. Conversely, the intermediate function $f^{\delta}$ is assumed to be highly regularized (``simple'', e.g., linear functions), making it learnable even with limited target samples. The complete procedure is outlined in Algorithm~\ref{algo: HTL}.

\begin{algorithm}[ht]
\caption{Learning under Concept Shift via HTL}\label{algo: HTL}
    
    \textbf{Input:} Source samples $\mcD^{P}$,
    target samples $\mcD^{Q}$, hypothesis classes $\mcH^{P}$ and $\mcH^{\delta}$,
    learning algorithms $\mcA^{P}:(\mcX \times \mcY)^{n_{P}} \rightarrow \mcH^{P}$, and $\mcA^{\delta}:(\mcX \times \mcY)^{n_{Q}} \rightarrow \mcH^{\delta}$, and
    data transformation function $g: \mcY \times \mcH^{P} \rightarrow \mbR$ and model transformation function $G: \mbR \times \mcH^{P} \rightarrow \mcY$.

    {\textbf{Step 1 (Source Learning):}} Obtain the source hypothesis $\hat{f}^{P} \in \mcH^{P}$ via $\mcA^{P}$ and $\mcD^{P}$:
    \[ \hat{f}^{P} = \mcA^{P}( \mcD^{P} ). \]

    {\textbf{Step 2 (Transformation):}} Construct the intermediate dataset using the data transformation function $g$:
    \[
        \mcD^{\delta} = \left\{ (x_{i}^{Q},y_{i}^{\delta} ) \right\}_{i=1}^{n_{Q}}\quad \text{where} \quad y_{i}^{\delta} = g\left( y_{i}^{Q} , \hat{f}^{P}(x_{i}^{Q}) \right).
    \]

    {\textbf{Step 3 (Shift Estimation):}} Obtain the intermediate hypothesis $\hat{f}^{\delta} \in \mcH^{\delta}$ via $\mcA^{\delta}$ and $\mcD^{\delta}$:
    \[ \hat{f}^{\delta} = \mcA^{\delta}(\mcD^{\delta}). \]

    {\textbf{Step 4 (Target Reconstruction):}} Construct the final target estimator $\hat{f}^{Q}$ using the model transformation function $G$:
    \[
        \hat{f}^{Q}(X) = G\left( \hat{f}^{\delta}(X), \hat{f}^{P}(X) \right).
    \]
\end{algorithm}

This procedure encapsulates a wide range of transfer learning algorithms under concept shifts through specific choices of transformation functions in Steps 2 and 4 of Algorithm~\ref{algo: HTL}, namely $g$ for transforming target labels and $G$ for reconstructing the final predictor. For instance, selecting $g(x,y) = x - y$ and $G(x,y) = x+y$ represents the offset transfer learning, where the target function is modeled as the sum of the source and intermediate functions. This formulation recovers numerous algorithms based on biased regularization \citep{scholkopf2001generalized}, which regularize the target hypothesis towards the pre-trained hypothesis via regularized ERM, such as those by \citet{kuzborskij2013stability,wang2015generalization,li2022transfer,tian2022transfer}, to name a few. Additionally, based on \citet{minami2021general}, setting $g(x,y) = (x-\tau y)/(1-\tau)$ and $G(x,y)=(1-\rho)y+\rho x$ allows Algorithm~\ref{algo: HTL} to represent different transfer learning or adaptation algorithms such as posterior ratio estimation \citep{liu2016estimating} and neural network fine-tuning \citep{yosinski2014transferable} when the hyperparameters $\tau$ and $\rho$ belong to some specific regimes.

\section{Spectral algorithms with Gaussian kernels}\label{sec: SA with Gaussians}
This section begins with a proposition for existing misspecified and saturation results, highlighting the limitations that motivate the use of Gaussian kernels. We then establish the minimax optimal convergence rate for spectral algorithms with Gaussian kernels, followed by detailed discussions.

\subsection{Warm-up: misspecification with optimal rate and saturation effect}
We first review the convergence behavior of spectral algorithms with misspecified kernels.
\begin{proposition} \label{proposition: target-only learning}
    For a symmetric and positive semi-definite kernel $K:\mcX \times \mcX \rightarrow \mathbb{R}$, let $\hat{f} \in \mcH_{K}$ be the spectral algorithms estimator in Definition~\ref{def: spectral algorithms} with qualification $\tau$ and regularization parameter $\lambda$, and imposed kernel as $K$. Under Assumption~\ref{assumption: smoothness} and~\ref{assumption: error tail}, the convergence rate of excess risk of $\hat{f}$ is given as follows.
\begin{enumerate}
    \item (Misspecification) Suppose the imposed kernel $K$ satisfies condition~\eqref{eqn: Fourier Transform of reproducing kernel} with order $m' > \frac{d}{2}$, that is, its associated RKHS, $\mcH_{K}$, is norm-equivalent to $H^{m'}$. Furthermore, given $\lambda \asymp n^{- \frac{2m'}{2m + d} }$ and $\gamma = \min\{2\tau, \frac{m}{m'} \}$, we have 
    \[
        \left\| \hat{f} - f^{*} \right\|_{L^{2}}^{2} = O_{\mathbb{P}} \left( n^{-\frac{2\gamma m'}{2\gamma m' + d}} \right).
    \]

    \item (Saturation Effect) If $m' < \frac{m}{2\tau}$, then for any choice of parameter $\lambda(n)$ satisfying $\lambda(n)\rightarrow 0$, we have 
    \[
        \left\| \hat{f} - f^{*} \right\|_{L^{2}}^{2} = \Omega_{\mathbb{P}} \left( n^{-\frac{4 \tau m'}{4\tau m' + d}} \right).
    \]
\end{enumerate}
\end{proposition}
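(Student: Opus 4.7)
The strategy is to reduce both parts to classical spectral-algorithm bounds expressed through a source condition and polynomial eigenvalue decay rate (EDR), using the Sobolev/RKHS dictionary from Lemma~\ref{lemma: equivalence between RKHS and Sobolev}. Concretely, the norm equivalence $\mcH_K \simeq H^{m_Q'}$ on the Lipschitz domain $\mcX$, combined with regularity of $Q_X$ (bounded density), yields an EDR $s_j(T_K) \asymp j^{-2m_Q'/d}$ for the integral operator and the identification of the interpolation spaces $[\mcH_K]^s$ with fractional Sobolev spaces $H^{s m_Q'}$ for $s \in (0,1]$. In particular, $f^Q \in H^{m_Q}$ translates into the source condition $f^Q \in [\mcH_K]^{m_Q/m_Q'}$, which is the key quantitative input that the rest of the argument feeds on.

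For Part 1, I would then invoke a standard misspecified excess-risk decomposition $\|\hat f^Q - f^Q\|_{L^2}^2 \lesssim B(\lambda) + V(\lambda, n_Q)$, obtained by inserting the population operator $T_K$ into $\phi_\lambda(T_{K,n}^Q)g_n^Q - f^Q$ and controlling $T_{K,n}^Q - T_K$ in operator norm together with $g_n^Q - T_K f^Q$ via a Bernstein-type concentration (using Assumption~\ref{assumption: error tail}). The filter conditions (\ref{eqn: filter function condition 1})--(\ref{eqn: filter function condition 2}) with qualification $\tau$, applied against the source condition, deliver a bias $B(\lambda) \lesssim \lambda^{\min\{2\tau,\, m_Q/m_Q'\}}$, while the EDR produces variance $V(\lambda, n_Q) \lesssim n_Q^{-1} \lambda^{-d/(2m_Q')}$. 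Plugging in the stated $\lambda$ (or, in the saturated regime, an appropriately modified one) and balancing the two terms yields the claimed $n_Q^{-2\gamma m_Q'/(2\gamma m_Q' + d)}$ rate with $\gamma = \min\{2\tau, m_Q/m_Q'\}$; this assembly essentially packages the arguments of \citet{dicker2017kernel,fischer2020sobolev}.

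For Part 2, the plan is to construct an adversarial target demonstrating that the saturation bias is unavoidable whenever $m_Q/m_Q' > 2\tau$. Since the usable smoothness from (\ref{eqn: filter function condition 2}) is capped at $\tau$, the residual $(I - T_K \phi_\lambda(T_K)) f^Q$ cannot decay faster than $\lambda^{2\tau}$ uniformly over $H^{m_Q}(R_Q)$. Concretely, I would pick $f^Q = c \sum_j a_j e_j$ with coefficients placed on the eigenmodes where $|1 - u \phi_\lambda(u)|$ is largest, scaled so that $f^Q$ sits on the boundary of $H^{m_Q}(R_Q)$, producing bias $\Omega(\lambda^{2\tau})$. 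Pairing this with a Le~Cam-style variance lower bound of order $\Omega(n_Q^{-1}\lambda^{-d/(2m_Q')})$ coming from the polynomial EDR, and then minimizing over admissible choices $\lambda(n_Q) \to 0$, yields the stated $n_Q^{-4\tau m_Q'/(4\tau m_Q' + d)}$ lower bound; this follows the template of \citet{bauer2007regularization}.

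The principal obstacle is the transfer of the spectral and interpolation identities from the $\mathbb{R}^d$/Fourier setting---where the RKHS is transparently characterized as in the definition preceding Lemma~\ref{lemma: equivalence between RKHS and Sobolev}---to the bounded Lipschitz domain $\mcX$ equipped with the data measure $Q_X$. This requires invoking a Sobolev extension theorem and assuming a sufficiently nondegenerate density for $Q_X$; without such regularity the EDR and the identification of $[\mcH_K]^s$ with a fractional Sobolev space can fail, and both bounds may degrade. A secondary, more mechanical hurdle in Part 2 is ensuring that the adversarial construction remains inside the radius-$R_Q$ ball in $H^{m_Q}$ while concentrating enough mass on the eigenmodes exposing the saturation.
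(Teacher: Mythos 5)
The paper does not actually prove Proposition~\ref{proposition: target-only learning}; it is assembled by citation (Part 1 from Theorems 15--16 of \citet{wang2022gaussian} and Theorem 1 of \citet{zhang2023optimality}, Part 2 from Corollary 3.2 of \citet{li2024generalization}), so there is no in-paper argument to compare against. Your reconstruction of Part 1 is essentially the argument underlying those references and is sound: the norm equivalence plus a nondegenerate $Q_X$ gives the EDR $s_j \asymp j^{-2m_Q'/d}$ and the identification $f^Q \in [\mcH_K]^{m_Q/m_Q'}$, the filter conditions give squared bias $\lambda^{\min\{2\tau,\,m_Q/m_Q'\}}$, and the effective dimension gives variance $n_Q^{-1}\lambda^{-d/(2m_Q')}$; balancing recovers the stated rate. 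You are also right to flag that the $\lambda$ displayed in the proposition only balances the two terms in the unsaturated regime, and that the extension/interpolation identities on a bounded Lipschitz domain need the extension theorem and density regularity --- these are exactly the hypotheses the cited results carry.

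Part 2, however, has a genuine gap. The filter conditions \eqref{eqn: filter function condition 1}--\eqref{eqn: filter function condition 2} are only \emph{upper} bounds on $\phi_\lambda$ and on the residual $\psi_\lambda(u) = 1 - u\phi_\lambda(u)$, and they are monotone in the sense that a filter admitting qualification $\tau$ also admits every qualification $\tau' \le \tau$. Consequently, ``qualification $\tau$'' as defined does not imply that $|\psi_\lambda(u)|$ is bounded \emph{below} by $c(\lambda/u)^{\tau}$ on any eigenmodes, which is precisely what your adversarial construction needs to force a bias of order $\Omega(\lambda^{2\tau})$. Gradient flow is the cautionary example: it formally satisfies the qualification-$\tau$ conditions for every finite $\tau$, yet it never saturates. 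The saturation lower bound therefore requires an additional structural assumption on the filter --- a matching lower bound on $\psi_\lambda$ characterizing its \emph{maximal} qualification --- which is exactly what \citet{li2024generalization} impose and which your plan omits. A secondary imprecision: the variance term in Part 2 must be a lower bound on the variance of \emph{this particular} linear-in-$y$ estimator $\phi_\lambda(T_{K,n}^Q)g_n^Q$ (computed directly from the noise and the spectrum, as in the cited work), not a Le~Cam-style minimax bound over all estimators, which would prove a different and weaker statement.
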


When $m' = m$, the misspecified result recovers the classical well-specified rate \citep{geer2000empirical,caponnetto2007optimal}. The misspecified result is derived by combining (with modifications) Theorems 15 and 16 in \citet{wang2022gaussian} and Theorem 1 in \citet{zhang2024optimality}. The saturation effect is proved by Corollary 3.2 in \citet{li2024generalization}. 

Since the minimax optimal rate for $f^{*} \in H^{m}$ is $n^{-2m/(2m+d)}$, Proposition~\ref{proposition: target-only learning} indicates that the optimal rate is still attainable even when the imposed hypothesis is misspecified,  provided that the smoothness of the kernel is sufficient, i.e., $m' \geq m/2\tau$, and an appropriately chosen $\lambda$. However, if $m' < m/2\tau$, i.e., the regression function $f^{*}$ is much smoother than the hypothesis $\hat{f}$, the saturation effect occurs, meaning there is a persistent gap between the lower bound of the estimator and the information-theoretic lower bound of the learning problem by choosing a less smooth $\mcH_{K}$. While a natural remedy is to use spectral algorithms with higher qualification $\tau$, this comes at the cost of reduced flexibility in algorithm selection and limits the ability to tailor algorithms to different application scenarios. 

This limitation underscores the need for a kernel that can provide universal robustness against both model misspecification and the saturation effect. Formally, for any $f^{*} \in H^{m}$ with $m > d/2$ and arbitrary spectral algorithm, employing the RKHS of this kernel as hypothesis space would ensure that there always exists an optimal $\lambda$ such that the minimax optimal convergence rate $n^{-2m/(2m+d)}$ is attainable.

\subsection{Optimal convergence rates with Gaussian kernels}
We begin by motivating the use of Gaussian kernels. Proposition~\ref{proposition: target-only learning} implies that when the imposed RKHS's smoothness $m'$ exceeds $m/2\tau$, there always exists an optimal $\lambda$ for attaining the optimal rate. This suggests that saturation primarily arises from a large mismatch in smoothness between $f^*$ and $\hat{f}$, i.e., estimating a smooth function with a much less smooth estimator. A natural remedy is to always employ kernels satisfying~\eqref{eqn: Fourier Transform of reproducing kernel} with large $m$, such as high-order Matérn kernels, which ensure $\hat{f}$ lies in a sufficiently smooth Sobolev space. However, precisely selecting the appropriate smoothness for $\hat{f}$ is challenging, as the true smoothness $m$ is typically unknown a priori. To this end, we  consider fixed-bandwidth Gaussian kernels. The motivation is that the RKHS associated with the isotropic Mat\'ern kernel $K_{\nu}$ \citep{stein1999interpolation} is isomorphic to the Sobolev space $H^{\nu + \frac{d}{2}}$, and, more importantly, the Gaussian kernel is the limit of the Mat\'ern kernel, i.e.,
\[
    \lim_{\nu \rightarrow\infty} K_{\nu}(x; h) = \exp \left( - \frac{\|x\|_{2}^{2}}{2h^2} \right). 
\]
The infinite smoothness of the Gaussian kernel makes its RKHS embedded in the Sobolev space $H^{m}$ for any $m > d/2$ \citep{rieger2010sampling,fasshauer2011reproducing}. Thus, by using the Gaussian kernel, one might expect to achieve the optimal rate with a feasible $\lambda$ and avoid the bottleneck that causes saturation since its RKHS consists of functions that are consistently ``smoother'' than any $f \in H^{m}$.

In addition to the assumptions in Section~\ref{subsec: learning problem}, we require a technical condition to obtain the convergence rates for Gaussian spectral algorithms. We introduce the following assumption, presented in two variants to maximize generality and applicability.
\begin{assumption}\label{assumption: assumption on Gaussian kernel}
Suppose that at least one of the following conditions holds:
\begin{enumerate}[label=(\alph*)]
    \item \label{assumption: bounds on effective dimension}
    Let $\mcN(\lambda)$ denote the effective dimension. There exists a constant $E_K > 0$ such that
    \begin{equation}\label{eqn: bounds on effective dimension}
        \mcN_{\infty}(\lambda):=\sup_{x\in \mcX} \sum_{j\ge1}
        \frac{s_j}{s_j+\lambda} e_j^2(x)
        \le E_K^2 \mcN(\lambda).
    \end{equation}

    \item \label{assumption: bounds on adjusted eigenfunction}
    There exists a constant $E_{K} > 0$ and a non-decreasing function $h:[1,\infty) \rightarrow \mathbb{R}$ satisfying (1) $h'(t)\leq c t h(t)$ for $t \ge 1$ and some $c>0$, and (2) $h(x) = O(x^{d-1})$, such that
    \begin{equation}\label{eqn: bounds on adjusted eigenfunction}
        \sup_{x\in \mcX, j\geq 1} | h(j)^{-\frac{1}{2}} \cdot e_{j}(x) | \leq E_{K}.
    \end{equation}
\end{enumerate}
\end{assumption}

Assumption~\ref{assumption: assumption on Gaussian kernel}\ref{assumption: bounds on effective dimension} is a standard capacity condition in the kernel learning literature \citep{lu2024saturation,zhang2025optimal}. It is weaker than the uniform boundedness of eigenfunctions assumed in classical misspecified kernel learning \citep{mendelson2010regularization}. For fixed-bandwidth Gaussian kernels, such conditions are known to hold when $\mcX=\mathbb{S}^{1}$ and $P_X$ is the uniform measure \citep{minh2006mercer}. On general domains, uniform boundedness of the Gaussian eigenfunctions is not guaranteed. In particular, such boundedness does not follow from the explicit Gaussian RKHS orthonormal basis constructed by \citet{steinwart2006explicit}. The arguments that use this basis to infer uniform boundedness of the eigenfunctions are therefore not justified by that construction alone \citep{hagrass2024spectral,hagrass2024spectral1}. We therefore do not assume uniform boundedness. Moreover, when considering particular kernels, additional eigenfunction conditions are often imposed; for example, uniformly bounded eigenfunctions are explicitly assumed for Mat\'ern kernels in \citet{wang2022gaussian}.

While Assumption~\ref{assumption: assumption on Gaussian kernel}\ref{assumption: bounds on effective dimension} is rather standard, it is hard to verify directly on general domains. As an alternative, we introduce Assumption~\ref{assumption: assumption on Gaussian kernel}\ref{assumption: bounds on adjusted eigenfunction}, which is directly verifiable in certain cases. This condition is grounded in the observation that even if  eigenfunctions are not uniformly bounded, their $L^{\infty}$-norm divergence is often controlled by a polynomial rate in the index $j$. By introducing the growth function $h$, one can achieve uniform bounds like \eqref{eqn: bounds on adjusted eigenfunction} on eigenfunctions and further control $\mcN_{\infty}(\lambda)$ in a manner similar to \eqref{eqn: bounds on effective dimension}. We refer readers to Appendix~\ref{apd: proof of SA with Gaussian} for technical details. Specifically, a recent work by \citet{dommel2025approximation} demonstrates that such polynomial $L^{\infty}$-norm growth indeed holds for fixed-bandwidth Gaussians, making this condition satisfied and not merely a theoretical construct. Finally, if the uniform boundedness holds, Assumption~\ref{assumption: assumption on Gaussian kernel}\ref{assumption: bounds on adjusted eigenfunction} reduces to Assumption~\ref{assumption: assumption on Gaussian kernel}\ref{assumption: bounds on effective dimension}.

The following result shows that there exists an optimal order of $\lambda$ such that the spectral algorithms with Gaussian kernels attain the optimal convergence rate $n^{-\frac{2m}{2m+d}}$.
\begin{theorem}[Non-Adaptive Rate]\label{thm: non-adaptive rate of SA with Gaussian}
    Let the imposed kernel, $K$, be the fixed-bandwidth Gaussian kernel, and $\hat{f} = \phi_{\lambda}(T_{K,n}) g_{n}$ be the estimator derived from an arbitrary spectral algorithm. Suppose Assumptions~\ref{assumption: smoothness}, ~\ref{assumption: error tail} and ~\ref{assumption: assumption on Gaussian kernel} hold. By choosing $\log(1/\lambda) \asymp n^{\frac{2}{2m+d}}$, then for any $\delta \in (0,1)$ and sufficiently large $n\geq 1$, with probability at least $1-\delta$, we have
    \begin{equation*}
        \left\| \hat{f} - f^{*} \right\|_{L^{2}}^{2} \leq C \left(\log \frac{4}{\delta} \right)^2  n^{-\frac{2m}{2m + d}},
    \end{equation*}
    where $C$ is a constant independent of $n$ and $\delta$. 
\end{theorem}

The optimal rate in Theorem~\ref{thm: non-adaptive rate of SA with Gaussian} is independent of the qualification of the filter function $\phi_{\lambda}$. This implies that Gaussian spectral algorithms offer a robust solution to avoid the potential saturation effect in spectral algorithms with low qualifications. We emphasize that proving this minimax optimality for fixed-bandwidth Gaussian kernels is nontrivial. The key idea is to leverage the spectral characterization for RKHSs to tightly control the approximation error. We refer readers to Section~\ref{subsec: discussion for SA with Gaussian} and Appendix~\ref{apd: proof of SA with Gaussian} for discussion and technical details.

Despite the optimality in Theorem~\ref{thm: non-adaptive rate of SA with Gaussian}, a key methodological challenge is that the optimal choice of $\lambda$ depends on the unknown smoothness $m$. To address this, we adopt a standard training-validation technique by \citet{steinwart2008support}, and establish an adaptive rate. Let $\mcM = \{m_{\min}< \cdots < m_{\max}\}$ be a finite arithmetic sequence where $m_{i} - m_{i-1}\asymp 1/\logn$ with $m_{\min}>d/2$, $m_{\max}$ large enough such that $m\leq m_{\max}$. We then split dataset $\mcD$ into training $\mcD_{1} := \{(x_{1},y_{1}),\cdots,(x_{j},y_{j})\}$ and validation $\mcD_{2} = \mcD \backslash \mcD_{1}$. The adaptive estimator is obtained by the following procedure:
\begin{enumerate}
    \item (Training): For each $m \in \mcM$, train $\hat{f}_{\lambda_{m}} = \mcA_{K,\lambda_{m}}(\mcD_{1})$ with the $\lambda_{m}$ is chosen as $\lambda = \exp\{ -Cn^{\frac{2}{2m+d}} \}$ for some constant $C$, following Theorem~\ref{thm: non-adaptive rate of SA with Gaussian}.
    \item (Validation): Selecting the estimator $\hat{f}_{\lambda_{m}}$ that minimizes empirical $L^{2}$ error over $\mcD_{2}$ as the adaptive estimator $\hat{f}_{\hat{\lambda}}$.
\end{enumerate}
The following theorem demonstrates that this adaptive estimator attains the minimax optimal rate up to a logarithmic factor.
\begin{theorem}[Adaptive Rate]\label{thm: adaptive rate of SA with Gaussian}
    Suppose the same assumptions in Theorem~\ref{thm: non-adaptive rate of SA with Gaussian} hold. Then, for any $\delta \in (0,1)$, with probability $1-\delta$, we have 
    \begin{equation}\label{eqn: target-only adaptive rate}
         \left\| \hat{f}_{\hat{\lambda}} - f^{*} \right\|_{L^{2}}^{2} \leq C \left( \log\frac{4}{\delta} \right)^2 \left(\frac{n}{\logn} \right)^{-\frac{2m}{2m +d}},
    \end{equation}
    where $C$ is a constant independent of $n$ and $\delta$. 
\end{theorem}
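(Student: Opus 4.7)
The plan is to deploy a standard training--validation oracle inequality, anchoring it to the non-adaptive rate of Theorem~\ref{thm: non-adaptive rate of SA with Gaussian}. The proof divides into three pieces: (i) a deterministic oracle choice $m^{*}\in\mcM$ approximating the unknown true smoothness $m_{Q}$; (ii) an application of Theorem~\ref{thm: non-adaptive rate of SA with Gaussian} on the training split $\mcD_{1}^{Q}$ at smoothness $m^{*}$; and (iii) a Bernstein-type concentration argument on the validation split $\mcD_{2}^{Q}$, uniform over the grid $\mcM$, to compare the empirically selected estimator with the oracle.

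For piece (i), since $m_{Q}\in[m_{\min},m_{\max}]$ and the grid spacing satisfies $m_{i}-m_{i-1}\asymp 1/\lognQ$, I can pick $m^{*}\in\mcM$ with $m^{*}\le m_{Q}$ and $m_{Q}-m^{*}\le C/\lognQ$. Sobolev monotonicity gives $f^{Q}\in H^{m^{*}}(R_{Q})$, and the prescribed $\lambda_{m^{*}}=\exp\{-Cn_{Q}^{2/(2m^{*}+d)}\}$ matches the optimal order in Theorem~\ref{thm: non-adaptive rate of SA with Gaussian} at smoothness $m^{*}$. Applying Theorem~\ref{thm: non-adaptive rate of SA with Gaussian} to the i.i.d.\ sample $\mcD_{1}^{Q}$, which is independent of $\mcD_{2}^{Q}$, yields with probability at least $1-\delta/2$,
\begin{equation*}
\bigl\|\hat{f}_{\lambda_{m^{*}}}^{Q}-f^{Q}\bigr\|_{L^{2}}^{2} \le C\log^{2}(8/\delta)\,|\mcD_{1}^{Q}|^{-2m^{*}/(2m^{*}+d)}.
\end{equation*}
A Taylor expansion of $2m/(2m+d)$ around $m_{Q}$ together with $m_{Q}-m^{*}=O(1/\lognQ)$ shows the right-hand side is of order $n_{Q}^{-2m_{Q}/(2m_{Q}+d)}$ up to a multiplicative constant, and in particular is dominated by the target rate $(n_{Q}/\lognQ)^{-2m_{Q}/(2m_{Q}+d)}$.

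For piece (iii), I condition on $\mcD_{1}^{Q}$, so that $\{\hat{f}_{\lambda_{m}}^{Q}\}_{m\in\mcM}$ is fixed. Writing $L(f)=\E_{Q}(Y-f(X))^{2}$ and $\hat{L}_{2}(f)$ for its empirical counterpart on $\mcD_{2}^{Q}$, the integrand of $\hat{L}_{2}(f)-\hat{L}_{2}(f^{Q})$ decomposes as $-2\epsilon^{Q}(f-f^{Q})(X)+(f-f^{Q})^{2}(X)$, whose variance is $O(\|f-f^{Q}\|_{L^{2}}^{2})$ under Assumption~\ref{assumption: error tail} together with an $\ell_{\infty}$-bound on $\hat{f}_{\lambda_{m}}^{Q}$. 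A Bernstein inequality for sub-exponential summands, the elementary bound $\sqrt{AB}\le\tfrac{1}{2}(A+B)$, and a union bound over the $|\mcM|\asymp\lognQ$ candidates give, with probability $\ge 1-\delta/2$ and for all $m\in\mcM$,
\begin{equation*}
\bigl|\hat{L}_{2}(\hat{f}_{\lambda_{m}}^{Q})-L(\hat{f}_{\lambda_{m}}^{Q})-\hat{L}_{2}(f^{Q})+L(f^{Q})\bigr| \le \tfrac{1}{2}\bigl\|\hat{f}_{\lambda_{m}}^{Q}-f^{Q}\bigr\|_{L^{2}}^{2}+c\,\frac{\log(|\mcM|/\delta)}{|\mcD_{2}^{Q}|}.
\end{equation*}
Chaining this with the minimization property $\hat{L}_{2}(\hat{f}_{\hat{\lambda}}^{Q})\le\hat{L}_{2}(\hat{f}_{\lambda_{m^{*}}}^{Q})$ produces the oracle inequality $\|\hat{f}_{\hat{\lambda}}^{Q}-f^{Q}\|_{L^{2}}^{2}\le 3\|\hat{f}_{\lambda_{m^{*}}}^{Q}-f^{Q}\|_{L^{2}}^{2}+c\log(|\mcM|/\delta)/|\mcD_{2}^{Q}|$. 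Substituting the bound from piece (ii) and noting that $\log\log n_{Q}/n_{Q}$ is of strictly lower order than $(n_{Q}/\lognQ)^{-2m_{Q}/(2m_{Q}+d)}$ gives the claim.

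The hard part will be the validation concentration: Assumption~\ref{assumption: error tail} only supplies sub-exponential noise tails rather than almost-sure boundedness, so one either invokes a Bernstein inequality under Bernstein-type moment conditions or truncates the noise at level $\sim L^{Q}\log n_{Q}$ and controls the resulting bias. A closely related subtlety is producing a uniform $\ell_{\infty}$-bound on $\hat{f}_{\lambda_{m}}^{Q}$ over $m\in\mcM$: one wants the Bernstein variance in the localized form $\|\hat{f}_{\lambda_{m}}^{Q}-f^{Q}\|_{L^{2}}^{2}$ so the fast-rate AM--GM trick applies, which typically requires an a priori RKHS-norm bound on the spectral-algorithm estimator or, failing that, a clipping step on the candidate estimators before validation.
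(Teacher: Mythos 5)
Your proposal follows essentially the same route as the paper: split the data, pick an oracle grid point $m^{*}$ within $O(1/\lognQ)$ of $m_{Q}$ (using that the grid spacing makes $(n_{Q}/\lognQ)^{-2m/(2m+d)}$ vary only by constants across adjacent grid points), apply Theorem~\ref{thm: non-adaptive rate of SA with Gaussian} on the training half, and close with a validation oracle inequality plus a union bound over the $O(\lognQ)$ candidates. The only difference is that you re-derive the validation oracle inequality via Bernstein and the AM--GM localization, whereas the paper simply invokes Theorem~7.2 of \citet{steinwart2008support} (which carries the same boundedness/clipping caveat you correctly flag as the delicate point under Assumption~\ref{assumption: error tail}).
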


\begin{remark}
    We note that the aforementioned training validation approach is not the exclusive method for achieving adaptivity. Alternative data-driven selection procedures, such as Lepski's method \citep{lepskii1991problem}, can also be employed to derive similar adaptive guarantees. 
\end{remark}

\subsection{Discussion}\label{subsec: discussion for SA with Gaussian}
It is interesting to note that even though the motivation for imposing Gaussian kernels is that the Gaussian kernel is the limit of Mat\'ern kernel $K_{m'}$ as $m'\rightarrow\infty$, the misspecified rates in Proposition~\ref{proposition: target-only learning} cannot simply be extrapolated to recover our findings. Specifically, setting the imposed smoothness parameter $m'$ as infinity in Proposition~\ref{proposition: target-only learning}, the polynomial decay form of $\lambda$ never makes the optimal order of $\lambda$ trackable since $\lim_{m' \rightarrow \infty} n^{-2m'/(2m+d) } = 0$, and only indicates that the optimal $\lambda$ should converge to $0$ much faster than the polynomial form. In contrast, our result explicitly reveals that the optimal order of $\lambda$ should decay exponentially.

This exponential decay form of $\lambda$ originates from the approximation error. The classical analyses of misspecified results, e.g., Proposition~\ref{proposition: target-only learning}, typically rely on the standard real interpolation technique to control the approximation error. Specifically, they use the so-called source condition, which assumes $f^{*}$ lies in the interpolation space of the $\mcH_{K}$. This allows one to expand the $f^{*}$ and the intermediate term $f_{\lambda}$ (see definition in Appendix~\ref{apd: proof of SA with Gaussian}) under the same basis, yielding an approximation error scale as $\lambda^{m/m'}\|f\|_{H^{m}}$ and an optimal $\lambda$ decays polynomially like Proposition~\ref{proposition: target-only learning}. This technique is standard and commonly used for approximation error in misspecified kernel literature; see, for example, \citet{zhang2023optimality,zhang2024optimality,meunier2024optimal}. However, in our case, the intermediate term $f_{\lambda}$ lies in the RKHS of Gaussian kernels while $f^*$ does not, making expanding $f_{\lambda}$ and $f^{*}$ with the same basis no longer feasible. Instead, we leverage the spectra characterization of RKHSs~\eqref{eqn: Spectral Characetrization of RKHS}, which allows us to derive the following bound, with a constant $C$
\begin{equation}\label{eqn: bound for approximation error version 1}
    \inf_{\|f\|_{\mcH_{K}} \leq R} \left( \| f - f^{*} \|_{L^{2}}^{2} + \lambda \|f\|_{\mcH_{K}}^{2} \right) \leq C \left( \log \frac{1}{\lambda} \right)^{-m} \| f^{*} \|_{H^{m}}^{2}.
\end{equation}
Thus, the approximation error decreases like $\log(1/\lambda)^{-m}$, revealing the exponential decay of $\lambda$. We refer readers to the proofs in Appendix~\ref{apd: proof of SA with Gaussian} for technical details.

We now highlight how our spectral characterization technique helps to improve approximation error compared to classical results. For a function that lies in $H^{m}$, the classical norm-constrained best approximation in the Gaussian RKHS in term of $m$ is given by Proposition 1 in \citet{smale2003estimating}, states that $\inf_{\|f\|_{\mcH_{K}} \leq R} \| f - f^{*} \|_{L^{2}}^{2} \leq C_{f^{*}} (\log R)^{-\frac{m}{2}}$, for a constant $\tilde{C}_{f^{*}}$ independent of $R$. Therefore, setting $R=\lambda^{-\frac{1}{3}}$, there exists $\tilde{\lambda}$ such that $\lambda^{\frac{1}{3}} \leq \log(1/\lambda)^{-\frac{m}{2}}$ for all $0<\lambda \leq \tilde{\lambda}$, and hence one can immediately bound the left-hand side of \eqref{eqn: bound for approximation error version 1} by 
\begin{equation}\label{eqn: bound for approximation error version 2}
    \inf_{\|f\|_{\mcH_{K}} \leq R} \left( \| f - f^{*} \|_{L^{2}}^{2} + \lambda \|f\|_{\mcH_{K}}^{2} \right) \leq \tilde{C}_{f^{*}} \left( \log \frac{1}{\lambda} \right)^{-\frac{m}{2}}.
\end{equation}
Consequently, as $\lambda\rightarrow 0$, \eqref{eqn: bound for approximation error version 1} yields a strictly tighter bound for the approximation error than \eqref{eqn: bound for approximation error version 2}. This sharper control is the key for attaining optimal convergence rates as the ``optimal'' $\lambda$ under \eqref{eqn: bound for approximation error version 2} fails to yield the optimal estimation error.

Lastly, to further highlight our findings, we compare them with state-of-the-art results (with slight modifications to align with our setting) that consider general, Mat\'ern, or Gaussian kernels for the spectral algorithms (or their special cases). Table~\ref{table: convergence rate comparison} summarizes these non-adaptive convergence rates. We refer readers to Section~\ref{sec: related work} for a detailed literature review. 

\begin{table*}[tbhp]
\centering
\caption{Comparison of non-adaptive convergence rates. The function $f^{*}$ is assumed to reside in Sobolev space $H^{m}$. ``Imposed RKHS'' means the hypothesis space. ``Type'' specifies the spectral algorithms that are being considered. The parameter $h$ means the bandwidth of the imposed kernel, and entries marked ``$-$'' means the bandwidth is fixed during learning. $\mcH_{K}$ denotes the RKHS of the Gaussian kernel.}
\label{table: convergence rate comparison}
\resizebox{0.8\columnwidth}{!}{%
\renewcommand*{\arraystretch}{1.5}
\begin{tabular}{|c|c|c|c|c|c|} 
\hline
  \multicolumn{1}{|c|}{Paper} & \multicolumn{1}{|c|}{Imposed RKHS}  &  \multicolumn{1}{|c|}{Rate} & \multicolumn{1}{|c|}{$\lambda$} & \multicolumn{1}{|c|}{$h$} & \multicolumn{1}{|c|}{Type} \\
\hline 
\citet{wang2022gaussian}& $H^{m'}, m'>\frac{m}{2}$ & $n^{-\frac{2m}{2m + d}}$ & $n^{-\frac{2m'}{2m + d}}$ & $-$ & KRR \\
\hline 
\citet{zhang2024optimality}& $H^{m'}, m'>\frac{m}{2\tau}$ & $n^{-\frac{2m}{2m + d}}$ & $n^{-\frac{2m'}{2m + d}}$ & $-$ & \text{SA} \\
\hline 
\citet{eberts2013optimal}& $\mcH_{K}$ & $n^{-\frac{2m}{2m + d}+ \xi}, \forall \xi> 0$ & $n^{-1}$ & $n^{-\frac{1}{2m + d} }$ & \text{KRR}\\
\hline
\citet{hamm2021adaptive}& $\mcH_{K}$ & $n^{-\frac{2m}{2m + d}}\log^{d+1}(n)$ & $n^{-1}$ & $n^{-\frac{1}{2m + d} }$ & \text{KRR}\\
\hline
Theorem~\ref{thm: non-adaptive rate of SA with Gaussian} & $\mcH_{K}$ & $n^{-\frac{2m}{2m + d}}$ & $ \exp\{-Cn^{\frac{2}{2m + d}}\} $ & $-$  & \text{SA}\\
\hline
\end{tabular}
}
\end{table*}

The optimal choice of $\lambda$ in this work differs fundamentally from previous attempts that justify the minimax optimality of the misspecified fixed bandwidth Mat\'ern kernel or variable bandwidth Gaussian kernel. Especially in the regime of variable bandwidth, given both $\gamma$ and $\lambda$ decay polynomially in $n$, prior work \citep{eberts2013optimal} shows that the convergence rate could be arbitrarily close to the optimal rate, while the follow-up work \citep{hamm2021adaptive} attained the optimal rate (up to a logarithmic factor) under the so-called DIM condition. However, these variable-bandwidth Gaussian kernel results are not directly comparable to ours due to the difference in model settings and assumptions. For example, both \citet{eberts2013optimal} and \citet{hamm2021adaptive} analyze the broader regularized ERM problems, accommodating loss functions beyond the square error, and they also assumed data distributions are bounded. Moreover, their analysis is built on empirical-process arguments for regularized ERM with a bandwidth shrinking with $n$; this approach does not carry over to fixed-bandwidth Gaussian kernels used within arbitrary spectral algorithms. In contrast, our analysis focuses on spectral algorithms under squared loss with the moment-based error assumption.

\section{Robust transfer learning under concept shift}\label{sec: transfer learning}
In this section, we first leverage the stronger robustness presented by Gaussian spectral algorithms to achieve robust transfer learning under concept shift. We then theoretically analyze the minimax optimality of this robust transfer procedure and discuss some of its insights into transfer learning and adaptation.


\subsection{Methodology}\label{subsec: methodology for HTL}
A key observation of Algorithm~\ref{algo: HTL} is that the learning process can be decoupled into learning the source function $f^{P}$ and the intermediate function $f^{\delta}$ separately. $f^{P}$ and $f^{\delta}$ are estimated sequentially using independent datasets, $\mcD^{P}$ and $\mcD^{\delta}$, and  spectral algorithms, $\mcA^{P}$ and $\mcA^{\delta}$, respectively. Consequently, achieving robust transfer boils down to achieving the same goal in learning $f^{P}$ and $f^{\delta}$ individually. Leveraging the adaptive results from Theorem~\ref{thm: adaptive rate of SA with Gaussian}, we obtain a robust transfer estimator, via Algorithm~\ref{algo: HTL}, as:
\[
    \hat{f}^{Q} = G\left ( \hat{f}_{\hat{\lambda}_{\delta}}^{\delta}, \hat{f}_{\hat{\lambda}_{P}}^{P} \right). 
\]
Here, components $\hat{f}_{\hat{\lambda}_{P}}^{P} = \mcA_{K,\hat{\lambda}_{P}}^{P}(\mcD^{P})$ and $\hat{f}_{\hat{\lambda}_{\delta}}^{\delta} = \mcA_{K,\hat{\lambda}_{\delta}}^{\delta}(\mcD^{\delta})$ are obtained using Gaussian spectral algorithms, with regularization parameters $\lambda_{P}$ and $\lambda_{\delta}$ selected via the training-validation. We note that the algorithms $\mcA^{P}$ and $\mcA^{\delta}$ are not required to be the same, unlike previous frameworks that often restrict both phases to be the same algorithm \citep{du2017hypothesis,lin2024smoothness}.

\subsection{Optimal Convergence Rate Analysis}\label{subsection: convergence rate of RAHTL}
We begin by formalizing the relationship between $f^{P}$ and $f^{Q}$ through the intermediate function $f^{\delta}$. Let $g$ and $G$ be the data and model transformation functions defined in Algorithm~\ref{algo: HTL} respectively. The intermediate shift function $f^{\delta}$ is defined as the expected transformation, 
\begin{equation}\label{eqn: expected transformation}
    f^{\delta} (x) = \E_{Q}[ g(Y, f^{P}(x)) \mid X = x ].
\end{equation}
Thus, the target function $f^{Q}$ can be recovered via model transformation $G(f^{\delta}(x), f^{P}(x)) := f^{Q}(x) = \E_{Q}[Y|X = x]$. This formulation guarantees consistency, ensuring that Step 3 of Algorithm~\ref{algo: HTL} produces an unbiased estimator for $f^{\delta}$.

To establish the convergence rates under concept shift, we introduce the following assumptions. The first assumption specifies the regularity of the functions and the tail behavior of the noise, extending Assumptions~\ref{assumption: smoothness} and~\ref{assumption: error tail} to the concept shift setting.
\begin{assumption}\label{assumption: parameter space for concept shift}
    For the functions $f^{P}$ and $f^{Q}$, and noise $\epsilon_{i}^{P}$ and $\epsilon_{i}^{Q}$, suppose
    \begin{itemize}
        \item there exist positive constants $m_{P}, m_{Q}, m_{\delta}$ and $R_{P}, R_{Q}, R_{\delta}$ such that:
        \[ f^{P} \in H^{m_{P}}(R_{P}), \quad f^{Q} \in H^{m_{Q}}(R_{Q}), \quad \text{and} \quad f^{\delta} \in H^{m_{\delta}}(R_{\delta}); \]

        \item for each domain $t \in \{P,Q\}$, there exist constants $\sigma^{t}, L^{t} > 0$ such that the noise $\epsilon^{t}$ satisfies the moment conditions in Assumption~\ref{assumption: error tail}.
    \end{itemize}
\end{assumption}
This assumption indicates the parameter space for the concept shift problem is
\[
    \Theta(R_{P},R_{\delta},m_{P},m_{\delta})  =  \left\{  (P, Q):  \left\|f^{P}\right\|_{H^{m_{P}}}\leq R_{P},  \left\|f^{\delta}\right\|_{H^{m_{\delta}}} \leq R_{\delta}  \right\}.
\]
The constant $R_{\delta}$ mathematically quantifies the similarity between the source and target functions as a smaller $R_{\delta}$ signifies a higher degree of proximity between $f^{P}$ and $f^{Q}$. Imposing such an upper bound on model discrepancy is essential for establishing rigorous optimality results in various concept shift settings. Analogous constraints appear in various settings, such as $\ell^{1}$ or $\ell^{0}$ distance in high-dimensional setting \citep{li2022transfer,tian2022transfer}, Fisher-Rao distance in low-dimensional setting \citep{zhang2022class}, RKHS distance in functional setting \citep{lin2024on}.

Our next assumption specifies the regularity of the transformation functions $g$ and $G$. These conditions are standard in the HTL literature \citep{du2017hypothesis} and facilitate translating the convergence of the intermediate estimators into convergence of the target estimator.

\begin{assumption}\label{assumption: assumption on g and G}
    The data transformation function, $g$, and model transformation function, $G$, satisfy the following conditions:
    \begin{enumerate}
        \item (Invertibility) the function $G(z, y)$ is invertible with respect to its first argument $z$, i.e., for any fixed $y$, the inverse is given by $g(\cdot, y)$, such that $G(g(u, y), y) = u$;
        \item (Lipschitz continuity) the model transformation function $G$ is $L_{G}$-Lipschitz, i.e., for all $(z,y), (z', y') \in \mbR^2$, $|G(z,y) - G(z', y')| \leq L_{G} \sqrt{(z-z')^2 + (y-y')^2}$.
        Additionally, the data transformation function $g(z, y)$ is $L_{g}$-Lipschitz with respect to its second argument, i.e., $|g(z,y) - g(z, y')| \leq L_{g} |y - y'|$.
    \end{enumerate}
\end{assumption}

The Lipschitz continuity condition guarantees $\hat{f}^{Q}$ converges to $f^{Q}$ if both $\hat{f}^{P}$ and $\hat{f}^{\delta}$ converge their counterparts. Meanwhile, the invertibility condition assures the injection between the values of the target function $f^{Q}$ and the intermediate function $f^{\delta}$.

Our first result specifies the minimax lower bound, which elucidates the information-theoretic difficulty of the transfer learning problem.
\begin{theorem}[Lower Bound]\label{thm: lower bound of OTL}
    Suppose Assumption~\ref{assumption: parameter space for concept shift} holds. For any $\delta \in (0,1)$, with probability at least $1-\delta$, the following lower bound holds
    \begin{equation*}
        \inf_{\tilde{f}}  \sup_{\Theta(R_{P},R_{\delta},m_{P},m_{\delta})}  \mathbb{P} \left\{ \left\| \tilde{f} - f^{Q} \right\|_{L^{2}}^2  \geq C_{1} \delta  \left( R_{P}^{2} \cdot n_{P}^{-\frac{2m_{P}}{2m_{P}+d}} + R_{\delta}^{2} \cdot n_{Q}^{-\frac{2m_{\delta}}{2m_{\delta}+d}}  \right) \right\} \geq 1 - \delta,
    \end{equation*}
    where $C_{1}$ is a constant independent of $n_{P}$, $n_{Q}$, $R_{P}$, $R_{\delta}$, and $\delta$. The infimum is taken over all possible estimators $\tilde{f}$ constructed from samples $\mcD^{P}$ and $\mcD^{Q}$. 
\end{theorem}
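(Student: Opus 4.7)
The plan is to establish the two summands of the lower bound separately by reducing to two natural sub-problems of $\Theta(R_{P},R_{\delta},m_{P},m_{\delta})$, then combining them via the elementary inequality $\inf\sup \ge \max$ applied over the two sub-families. For the first summand $R^{2} n_{P}^{-2m_{P}/(2m_{P}+d)}$, I would pin $f^{\delta}$ to a canonical element (e.g., $f^{\delta}\equiv 0$, which is consistent with Assumption~\ref{assumption: assumption on g and G} under the additive form $G(a,b)=a+b$). Under this reduction $f^{Q} = G(0,f^{P}) = f^{P}$, so the problem collapses to pure nonparametric regression on $H^{m_{P}}(R_{P})$ with $n_{P}+n_{Q}$ i.i.d. samples. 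A Fano- or Assouad-type argument applied to a hypercube packing of $H^{m_{P}}(R_{P})$ built from $\asymp \epsilon^{-d}$ disjointly supported rescaled bump functions (as in Chapter~2 of Tsybakov's textbook) then delivers the classical Sobolev lower bound of order $R_{P}^{2}(n_{P}+n_{Q})^{-2m_{P}/(2m_{P}+d)} \asymp R_{P}^{2} n_{P}^{-2m_{P}/(2m_{P}+d)}$ in the transfer regime $n_{P}\gtrsim n_{Q}$.

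For the second summand I would instead fix $f^{P}$ to a canonical element of $H^{m_{P}}(R_{P})$ (say $f^{P}\equiv 0$) and vary $f^{\delta}$ over $H^{m_{\delta}}(R_{\delta})$. Once $f^{P}$ is pinned, the source sample $\mcD^{P}$ carries no information about $f^{\delta}$, so only the $n_{Q}$ target samples are informative, and the $L_{1}$-Lipschitz property of $G$ ensures $\|G(f_{1}^{\delta},f^{P}) - G(f_{2}^{\delta},f^{P})\|_{L^{2}} \asymp \|f_{1}^{\delta}-f_{2}^{\delta}\|_{L^{2}}$. The same classical Sobolev minimax lower bound, now applied to $H^{m_{\delta}}(R_{\delta})$ with $n_{Q}$ samples, yields a bound of order $R_{\delta}^{2} n_{Q}^{-2m_{\delta}/(2m_{\delta}+d)}$. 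Reading $R^{2}\asymp R_{P}^{2}$ as the overall scale of the parameter space gives $R_{\delta}^{2} = \xi R_{P}^{2} \asymp \xi R^{2}$, producing the stated $\xi$-dependent factor. Combining the two bounds via $\max(a,b)\ge (a+b)/2$ then yields the sum in the theorem.

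The main obstacle will be reconciling the two sub-families with the joint structure of $\Theta$: the consistency relation $f^{\delta}=\E_{Q}[g(Y,f^{P})\mid X]$ couples $f^{P}$ and $f^{\delta}$ through $g$, so each pair $(f^{P},f^{\delta})$ in the chosen packings must be realized by some admissible $(P,Q)\in\Theta$. This is immediate for the additive form but demands care in full generality; similarly, careful bookkeeping of $L_{1},L_{2}$ is needed to isolate $\xi = R_{\delta}^{2}/R_{P}^{2}$ cleanly from the Lipschitz constants. Two further technical issues arise: first, the bound is in high probability with explicit $\delta$-dependence, which I would obtain by rescaling the packing radius by $\delta^{1/2}$ before applying Fano's inequality (the expected-risk lower bound then converts to probability $\ge 1-\delta$ of an error of order $\delta \cdot r_{n}$); second, the sub-exponential moment condition in Assumption~\ref{assumption: error tail}, rather than Gaussian noise, must be accommodated when upper-bounding the pairwise KL divergences between data distributions along the packing, which I would handle via standard Bernstein-type tail-to-KL equivalences.
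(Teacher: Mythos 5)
Your proposal is correct and follows essentially the same route as the paper: the paper also reduces to the two sub-families $f^{\delta}\equiv 0$ (giving the $H^{m_P}(R_P)$ regression lower bound with $n_P+n_Q\asymp n_P$ samples) and $f^{P}\equiv 0$ (giving the $H^{m_\delta}(R_\delta)$ bound with $n_Q$ samples), and combines them via the maximum. The only difference is one of delegation: the paper outsources the single-task high-probability Sobolev lower bound (including the $\delta$- and $R^2$-dependence of the constant) to its Lemma~\ref{lemma: lower bound for single task SA} and prior literature, whereas you sketch the Fano/Assouad packing argument explicitly.
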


The next theorem provides an upper bound on the excess risk, demonstrating that our estimator achieves the minimax optimal rate (up to logarithmic factors).
\begin{theorem}[Upper Bound]\label{thm: upper bound of OTL}
    Suppose Assumptions~\ref{assumption: assumption on Gaussian kernel}, \ref{assumption: parameter space for concept shift} and \ref{assumption: assumption on g and G} hold, and $n_{P}$ and $n_{Q}$ are sufficiently large but still in the transfer learning regime ($n_{P} \gg n_{Q}$). Denote the $\hat{f}^{Q}$ as the estimator. Then, for any $\delta \in (0,1)$, with probability at least $1-\delta$, we have
    \begin{equation}\label{eqn: RAHTL upper bound}
        \left\| \hat{f}^{Q} - f^{Q} \right\|_{L^{2}}^{2}   \leq C  \left(\log \frac{12}{\delta}\right)^2 
         \ \left\{ C_{1}\cdot \left(\frac{n_{P}}{\log n_{P}}\right)^{-\frac{2m_{P}}{2m_{P}+d}} + C_{2} \cdot \left(\frac{n_{Q}}{\log n_{Q}}\right)^{-\frac{2m_{\delta}}{2m_{\delta}+d}}  \right\},
    \end{equation} 
    where constants $C_{1} \propto ( \|f^{P}\|_{H^{m_{P}}}^{2} + \sigma_{P}^{2} )$, $C_{2} \propto ( \|f^{\delta}\|_{H^{m_{\delta}}}^{2} + \sigma_{Q}^{2} )$, and $C$ is a constant independent of $n_{P}$, $n_{Q}$, $R_{P}$, $R_{\delta}$. Furthermore, replacing the specific norms $\|f^{P}\|_{H^{m_{P}}}$ and $\|f^{\delta}\|_{H^{m_{\delta}}}$ with their respective upper bound $R_{P}$ and $R_{\delta}$ yields a uniform excess risk bound over $\Theta$.
\end{theorem}

Theorem~\ref{thm: upper bound of OTL} indicates that the convergence rate of the estimator $\hat{f}^{Q}$ consists of two components: the first term is the pre-training error which represents the error of learning $f^{P}$ with the source samples, while the second term is the fine-tuning error, which arises from learning the intermediate function $f^{\delta}$ with the constructed samples. 
\begin{remark}
By exploiting the Lipschitz continuity of the model transformation function $G$, the excess risk of $\hat{f}^{Q}$ can be bounded by $\|\hat{f}^{P} - f^{P}\|_{L^{2}}^2 + \|\hat{f}^{\delta} - f^{\delta}\|_{L^{2}}^2$. At first glance, it might appear that the upper bound could be derived by simply applying Theorem~\ref{thm: adaptive rate of SA with Gaussian} to these two terms individually. However, there is actually an additional error, stemming from $\|\hat{f}^{\delta} - f^{\delta}\|_{L^{2}}^2$, induced by using the estimated source function $\hat{f}^{P}$ rather than using the true $f^{P}$ to construct the intermediate labels $\{y_{i}^{\delta}\}_{i=1}^{n_{Q}}$. 

When both $\mcA^{P}$ and $\mcA^{\delta}$ are specified as KRR, the prior work \citep{du2017hypothesis} showed that this error is an amplified version of the pre-training error, which is proportional to the product of the pre-training error and a monotonic increase function of $n_{Q}$, i.e., $n_{Q}^{2/{2m_{P}+d}}\log(n_{Q}) \cdot \|\hat{f}^{P} - f^{P}\|_{L^{2}}^2$. A similar conclusion is reached in \citet{wang2016nonparametric} by using the algorithmic stability technique for KRR and is claimed to be nearly tight. However, for sufficiently large but fixed $n_{P}$ and growing $n_{Q}$, such bounds suggest that the excess risk of $\hat{f}^{Q}$ will explode. This is counterintuitive since when there is a sufficiently good pre-trained estimator $\hat{f}^{P}$, the estimated intermediate label will also be close to the true intermediate label, leading to sufficiently good $\hat{f}^{\delta}$.

In contrast, our results show that the error induced by using $\hat{f}^{P}$ to construct $y_{i}^{\delta}$ is bounded directly by the pre-training error $\|\hat{f}^{P} - f^{P}\|_{L^{2}}^2$; see Theorem~\ref{thm: bounds on intermediate error} for details. This ensures the excess risk of $\hat{f}^{Q}$ receives no amplification and is minimax optimal. Notably, \citet{lin2024smoothness} obtained a similar result in the KRR case by using the property of Tikhonov filter function. However, directly extending those arguments to our current setting is not feasible, which necessitates a different treatment. To address this, we bound this error using the operator techniques adapted from \citet{smale2007learning}. For further details, we refer to Appendix~\ref{apd: proof of transfer learning}. 
\end{remark}

\subsection{Discussion}\label{subsection: RAHTL discussion}
We now discuss the optimal rate to provide some insights to better understand transfer learning and adaptation under concept shift. Specifically, we explore how the source samples help to improve the learning efficiency over the target domain and identify what factors contribute to better efficiency. These insights are derived by comparing the following two rates of convergence:
\begin{itemize}
    \item the target-only learning rate $\psi_{n_{Q}}(Q)$ from Theorem~\ref{thm: adaptive rate of SA with Gaussian} where
    \[
    \psi_{n_{Q}}(Q):= \left(\frac{n_{Q}}{\log n_{Q}}\right)^{-\frac{2m_{Q}}{2m_{Q}+d}};
    \]

    \item the HTL rate $\psi_{n_{P},n_{Q}}(P,Q)$ from Theorem~\ref{thm: upper bound of OTL}, which re-parametrizes \eqref{eqn: RAHTL upper bound} as
    \[
    \psi_{n_{P},n_{Q}}(P,Q):= \left(\frac{n_{P}}{\log n_{P}}\right)^{-\frac{2m_{P}}{2m_{P}+d}} +  \xi \cdot \left(\frac{n_{Q}}{\log n_{Q}}\right)^{-\frac{2m_{\delta}}{2m_{\delta}+d}}
    \]
    where $\xi := ( \|f^{\delta}\|_{H^{m_{\delta}}}^{2} + \sigma_{Q}^{2} ) / ( \|f^{P}\|_{H^{m_{P}}}^{2} + \sigma_{P}^{2})$.
\end{itemize}



However, it is hard to directly compare these rates without explicitly understanding the relationship between $m_{Q}$, $m_{P}$, and $m_{\delta}$. To facilitate the comparison, we derive the following proposition, which characterizes how the target function $f^Q$ inherits Sobolev regularity from the components $f^P$ and $f^\delta$ under the model transformation $G$.

\begin{proposition}\label{prop: smoothness of target}
    Suppose Assumption~\ref{assumption: parameter space for concept shift} holds. 
    Further, suppose the model transformation function $G$ belongs to the smooth function class 
    $C^{\max(\lceil m_{P} \rceil, \lceil m_{\delta} \rceil)}$. 
    Then, for all $x \in \mcX$,
    \[
    f^{Q}(x) := G(f^{\delta}(x), f^{P}(x)) \in H^{\min(m_{P}, m_{\delta})}(\mcX).
    \]
\end{proposition}

Proposition~\ref{prop: smoothness of target} implies that the target function $f^Q$ belongs to $H^{\min(m_P, m_\delta)}$. Since $m_Q$ in Assumption~\ref{assumption: parameter space for concept shift} denotes a Sobolev index such that $f^Q \in H^{m_Q}$, Proposition~\ref{prop: smoothness of target} shows that we may take $m_Q = \min(m_P, m_\delta)$ as a valid choice for the purpose of rate comparison. In special cases where $G$ takes an explicit form, such as the simplest offset transfer learning with $G(x,y)=x+y$, if $m_\delta \ge m_P$, we have $m_Q = m_P$ for the purpose of rate comparison, recovering the condition used in \citet{lin2024smoothness}. However, when $G$ lacks an explicit form, it is unavoidable to require additional assumptions relating $m_Q$ and $m_\delta$, such as $m_\delta \ge m_Q$, if one would like to compare these rates \citep{du2017hypothesis}. 


Theorem~\ref{thm: upper bound of OTL} indicates the benefit gained from Algorithm~\ref{algo: HTL} depends jointly on the source sample size $n_{P}$ and the ratio $\xi$, a quantity that plays a pivotal role in the comparison. We define a phase transition point $\xi^{*}$ as 
\[
    \xi^{*}:=(n_{Q}/\log n_{Q})^{\frac{2m_{\delta}}{2m_{\delta}+d}} \cdot (n_{P}/\log n_{P})^{-\frac{2m_{P}}{2m_{P}+d}}.
\]
When $\xi$ is smaller than $\xi^{*}$ (up to constant), the pre-training error dominates, and the convergence rate $\psi_{n_{P},n_{Q}}(P,Q)$ is strictly faster than the target-only one $\psi_{n_{Q}}(Q)$ given $n_{P} \gg n_{Q}$. Even in the regime where $\xi > \xi^{*}$ and thus the fine-tuning error dominates, the convergence rate $\psi_{n_{P},n_{Q}}(P,Q)$ is still at least as tight as $\psi_{n_{Q}}(Q)$ since $m_{Q}\leq m_{\delta}$ by Proposition~\ref{prop: smoothness of target}.

Next, we discuss $\xi$ as its magnitude controls the phase transition. To the best of our knowledge, the form of $\xi$ is the first form that links the signal strength from intermediate and source functions to explain the transition in the HTL literature. Previous works, such as \citet{wang2015generalization,wang2016nonparametric,du2017hypothesis}, overlooked the existence of $\xi$ in the upper bound and thus were unable to provide a concrete explanation of how model discrepancy, i.e., the signal strength of $f^{\delta}$, governs the phase transition. Some recent works advance this by introducing the signal strength of intermediate models $\|f^{\delta}\|_{H^{\delta}}$ (or its upper bound $R_{\delta}$) to measure the domain discrepancy in the same way as we do, see \citet{li2022transfer,tian2022transfer} and more references therein. These works only identified $\xi$ is proportional to the shift-strength, i.e., having $\xi \propto \|f^{\delta}\|_{H^{\delta}}^{2}$ (or $\xi \propto R_{\delta}^2$) in the convergence rate $\psi_{n_{P},n_{Q}}(P,Q)$ in our context. They assert that a smaller shift-strength leads to higher efficiency. Our analysis refines this by showing that efficiency is driven not by the absolute shift but by the ratio $\xi$. A smaller shift-strength does not necessarily imply smaller discrepancy if the source signal itself is weak even for some simple $G$. We illustrate this by a simple but concrete example.

\begin{example}[An illustration of the role of $\xi$ in offset transfer learning]  
    We consider the offset transfer learning, where $f^{Q} = f^{P} + f^{\delta}$, which implies $m_{P} = m_{Q}$, and non-zero source and offset functions. To eliminate the impact of noise, we also consider the noiseless setting, and thus $\xi$ simplifies $\xi \propto \|f^{\delta}\|_{H^{m_{\delta}}}^2 / \|f^{P}\|_{H^{m_{P}}}^2$. The quantity $\xi$ can be interpreted as shift-to-signal ratio. In the noisy scenario, one can extract the factor $(1 + \sigma_{P}^{2} \vee (\sigma_{Q}^{2}/\tilde{\xi}) )$ from \eqref{eqn: RAHTL upper bound} to achieve the exact same form. 

    \begin{figure}[tbhp]
        \centering
        \subfloat[]{\label{fig: angle figure}
            \centering
            \begin{tikzpicture}
                \filldraw[fill=gray!30, draw=black, thick] (0,0) circle (1.5cm);
        
                \draw[->, thick, red] (-4,-1.5) -- node [above] {$f_{1}^{P}$} (0,0) coordinate (p1);
                \draw[->, thick, red] (-4,-1.5) -- node [below] {$f_{1}^{Q}$} (0,-1.5) coordinate (p2);
                \coordinate (p3) at (-4,-1.5);
                \pic [draw, -, "$\theta_{1}$", angle eccentricity=1.5, angle radius=10mm, red] {angle = p2--p3--p1};
                
                \draw[->, thick, blue] (-1.5,-1.5) -- (0,0)  coordinate (p11);
                \node[blue] at (-1.15, -0.75) {$f_{2}^{P}$};
                \draw[->, thick, blue] (-1.5,-1.5) -- node[below] {$f_{2}^{Q}$} (0,-1.5) coordinate (p22);
                \coordinate (p33) at (-1.5,-1.5);
                \pic [draw, -, "$\theta_{2}$", angle eccentricity=1.8, angle radius=5mm, blue] {angle = p22--p33--p11};
        
                \draw[dashed, thick, black] (0,-1.5) -- node [right] {$f^{\delta}$} (0,0);
                \draw[thick, black] (0,-1.3) -- (-0.2,-1.3);
                \draw[thick, black] (-0.2,-1.3) -- (-0.2,-1.5);
            \end{tikzpicture}
        }
        \hspace{2cm}
        \subfloat[]{\label{fig: both angle and distance}
            \centering
            \begin{tikzpicture}
                \draw[->, thick, blue] (0,0) -- node [above] {$f^{P}$} (2,2) coordinate (p1);
                \draw[->, thick, red] (0,0) -- node [below] {$f_{1}^{Q}$} (2.828,0) coordinate (p2);
                \draw[dashed, thick, black] (2,2) -- node [right] {$f_{1}^{\delta}$} (2.828,0);
            
                \draw[->, thick, red] (0,0) -- node [below] {$f_{2}^{Q}$} (4,3) coordinate;
                \draw[dashed, thick, black] (2,2) -- node [above] {$f_{2}^{\delta}$} (4,3);
            \end{tikzpicture}
        }
        \caption{Geometric illustration for how $\xi$ will affect the transfer efficiency. The length of the lines represents the magnitude of $\|f^{P}\|_{H^{m_{P}}}$, $\|f^{Q}\|_{H^{m_{P}}}$ and $\|f^{\delta}\|_{H^{m_{\delta}}}$, respectively. (a) The circle represents a ball centered around the $f^{P}$ with radius $\|f^{\delta}\|_{H^{m_{\delta}}}$. A key observation is $\theta = \arcsin (\|f^{P} - f^{Q}\|_{H^{m_{\delta}}} /\|f^{P}\|_{H^{m_{P}}} )$. (b) $f^{P}$ and $f_{1}^{Q}$ possess the same magnitude but a rather large angle while $f^{P}$ and $f_{2}^{Q}$ possess a smaller angle but their magnitude differs.}
    \end{figure}

    Consider two source-target pairs $(P_{1},Q_{1})$ and $(P_{2},Q_{2})$ sharing the same $m_{P}$, $m_{\delta}$ and $f^{\delta}$ but different $f^{P}$ (and thus $f^{Q}$); see Figure~\ref{fig: angle figure}. Despite sharing an identical offset, these two pairs of ($f^{P}$, $f^{Q}$) exhibit different angles  $\theta_{i} = \arcsin( \|f^{\delta}\|_{H^{m_{\delta}}}/\|f_{i}^{P}\|_{H^{m_{P}}} )$, which thus represent varying degrees of model discrepancy. While conventional formulations $\xi \propto \|f^{\delta}\|_{H^{m_{\delta}}}^2$ implies the Algorithm~\ref{algo: HTL} achieves same learning efficiency on both two pairs, our convergence rate indicates the pair $(P_{1}, Q_{1})$ has a higher efficiency, and aligns with the geometric intuition that $f_{1}^{P}$ and $f_{1}^{Q}$ are more similar due to their smaller relative angle.

    This example highlights that the ratio $\xi$ is critical in characterizing the efficiency. To facilitate interpretation, we further refine how different geometric factors affect the magnitude of $\xi$. With $m:= m_{P} = m_{Q} = m_{\delta}$, we can express the ratio $\xi$ as
    \[
        \xi = \frac{\|f^{P} - f^{Q}\|_{H^{m}}^2}{\|f^{P}\|_{H^{m}}^2} = 1 + \frac{\|f^{Q }\|_{H^{m}}^2}{\|f^{P }\|_{H^{m}}^2} - 2  \frac{\|f^{Q }\|_{H^{m}}}{\|f^{P }\|_{H^{m}}} \cos(\angle_{H^{m}}(f^{P }, f^{Q })),
    \]
    which incorporates both relative signal strength $\|f^{Q }\|_{H^{m}}/\|f^{P}\|_{H^{m}}$ and the angle $\angle_{H^{m}}(f^{P}, f^{Q})$ between $f^{P}$ and $f^{Q}$ in $H^{m}$ into $\xi$. This expression is an unimodal function of signal strength $\|f^{Q}\|_{H^{m}}/\|f^{P}\|_{H^{m}}$, reaching its minimum at $1$. Simultaneously, it is a monotonically increasing function of angle $\angle(f^{P}, f^{Q})$, with its minimum value when the angle is $0$. This means that if the relative signal strength of $f^{P}$ and $f^{Q}$ are closer, and the angle between $f^{P}$ and $f^{Q}$ is smaller, $\xi$ decreases, resulting in higher transfer efficiency; see Figure~\ref{fig: both angle and distance}. This behavior aligns with the fact that $\xi$ achieves its minimum when $f^{P}$ and $f^{Q}$ coincide exactly. 

\end{example}
While this work focuses on nonparametric regression, the principle in the example discussed above generalizes to most offset transfer learning for various statistical models, e.g., \citet{scholkopf2001generalized, kuzborskij2013stability, li2022transfer}. The explanation becomes more complex for general model transformation functions $G$ due to the absence of an explicit form of $G$, making it challenging to analyze the precise components that influence the efficiency. In such cases, we can only conclude that the efficiency depends on the shift-to-signal ratio, which still refines the results in previous literature that ignored or misjudged $\xi$.

\section{Experiments}\label{sec: numerical experiments}

\subsection{Experiments for spectral algorithms}

\begin{figure}[!t]
    \centering
    \subfloat[Non-adaptive]{
        \centering
        \includegraphics[width=0.48\linewidth]{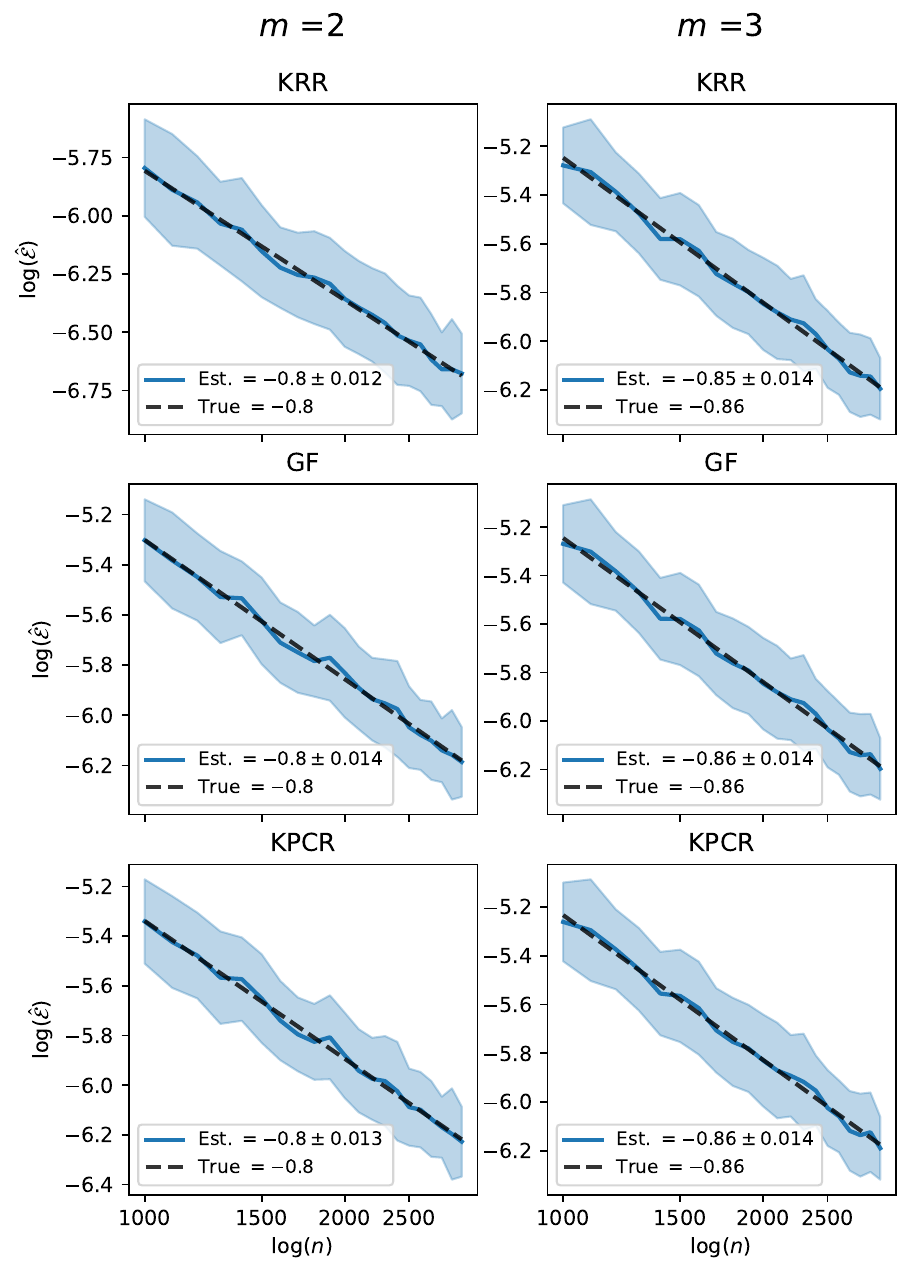} 
    }
    \subfloat[Adaptive]{
        \centering
        \includegraphics[width=0.48\linewidth]{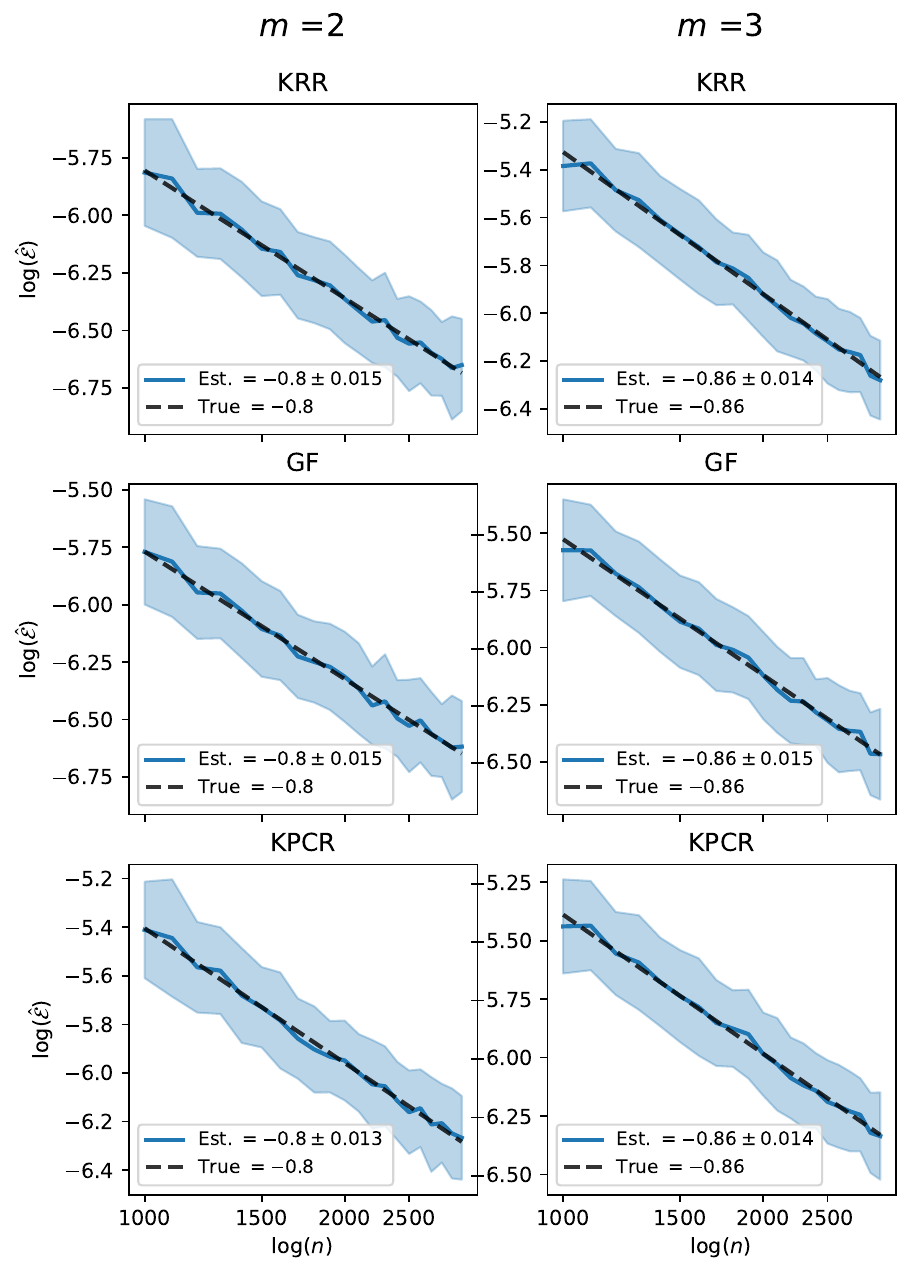} 
    }
    \caption{Error decay curves of spectral algorithms with Gaussian kernels with best $C$. Both axes are plotted on a log scale. The dashed black lines denote the theoretical regression line of $\log \mcE$ on $\log n$ with slope $-\frac{2m}{2m+1}$, denoted by ``True''. Blue curves denote the average empirical excess risk over repeated trials, with shaded regions indicating $\pm 1$ standard deviation. "Est." denotes the estimated slope of the regression line $\pm$ its standard error.} 
    \label{fig: Gaus_SA_best_slop}
\end{figure}

We first aim to empirically validate the minimax optimality of the non-adaptive and adaptive rates established in Theorems~\ref{thm: non-adaptive rate of SA with Gaussian} and \ref{thm: adaptive rate of SA with Gaussian}. Specifically, we consider a regression problem on the domain $\mcX = [0,1]$ with the following data generation process
\[
    y_{i} = f^{*}(x_{i}) + 0.5\epsilon_{i} \qquad i = 1, \cdots, n
\]
where $x_{i}$ are i.i.d. samples drawn from the uniform distribution on $\mcX$, $\epsilon_{i}$ are i.i.d. standard Gaussian random variables, and sample size $n=100k$ with $k=1,2,\cdots,30$. 

To generate ground truth $f^{*}$ with specific Sobolev regularity, we leverage the correspondence between Gaussian Processes (GP) and Sobolev spaces. Since the sample path of GP with specified covariate kernels indeed lies in certain Sobolev spaces \citep{kanagawa2018gaussian}, we generate true functions $f^{*}$ by setting them as sample paths that are generated from a Gaussian process with isotropic Mat\'ern kernels $K_{\nu}$. Based on Corollary 4.15 in \citet{kanagawa2018gaussian}, we set $\nu = 2.01$ and $\nu = 3.01$ to generate $f^{*}$ in $H^{2}(\mcX)$ and $H^{3}(\mcX)$.

We investigate three spectral algorithms, including KRR, GF, and KPCR, to recover $f^{*}$. For non-adaptive rates, we set the regularization parameter $\lambda = \exp\{-Cn^{\frac{2}{2m + 1}}\}$ with a fixed $C$. For adaptive rates, we set the candidate smoothness as $[1,2,\cdots,5]$ and construct the estimators through the training and validation as described in Section~\ref{sec: SA with Gaussians}. For each combination of $n$ and $m$, we repeat the experiments $100$ times and compute $\hat{\mcE}_{i} = \|\hat{f} - f^{*}\|_{L^{2}}^{2}$ by Simpson's rule with $5000$ testing points. The excess risk is then approximated by $\hat{\mcE} = \frac{1}{100}\sum_{i=1}^{100}\hat{\mcE}_{i}$. To verify the convergence rate of the error is sharp, we regress the $\log(\hat{\mcE})$ on $\log(n)$ and compare the regression coefficient to its theoretical counterpart $-2m/(2m+1)$.

We test different values of $C$ in sequence $[0.05,0.1,\cdots, 4]$, and report the optimal curve for the best choice of $C$ in Figure~\ref{fig: Gaus_SA_best_slop} for non-adaptive and adaptive rates, respectively. It can be seen that the corresponding empirical error decay curves align with the theoretical decay curves in both non-adaptive and adaptive cases for all three spectral algorithms. Also, the estimated regression coefficient closely agrees with the theoretical exponent $-\frac{2m}{2m+1}$. From these results, it can be concluded that the spectral algorithms with fixed-bandwidth Gaussian kernels attain the minimax optimal convergence rates when $f^{*}$ belongs to Sobolev spaces. Additionally, we also show that the optimal rates do not tie to a specific choice of $C$ but hold for a wide range of $C$. We refer readers to Appendix~\ref{apd: additional experiments} for more results.



\subsection{Experiments for transfer learning under concept shift}
In this section, we empirically validate the minimax optimality derived in Theorems~\ref{thm: lower bound of OTL} and \ref{thm: upper bound of OTL}, as well as the role of $\xi$ that we discussed in Section~\ref{subsection: RAHTL discussion}. We focus on the setting with the following data generation process:
\begin{equation*}
    \{x_{i}^{Q}\}_{i=1}^{n_{Q}},  \{x_{i}^{P}\}_{i=1}^{n_{P}} \stackrel{i.i.d.}{\sim} U([0,1]), \quad 
    y_{i}^{P} = f^{P}(x_{i}^{P}) + 0.5 \epsilon_{i}, \quad 
    y_{i}^{Q} = y_{i}^{P} + f^{\delta}(x_{i}^{Q}), 
\end{equation*}
where $\epsilon$ are standard Gaussian noise. We set the smoothness such that $f^{P} \in H^{1}$ and $f^{\delta} \in H^{m_{\delta}}$ with $m_{\delta} \in \{2, 3\}$, making the target $f^{Q}$ has smoothness $m_{Q} = 1$. Throughout this part, we compare the excess risk of two learning strategies
\begin{itemize}
    \item Target-Only: We use KRR with Gaussian kernels to recover $f^{Q}$ with only the target dataset $\mcD^{Q}$. This serves as the non-transfer baseline.
    
    \item Transfer: The robust learning procedure introduced in Section~\ref{subsec: methodology for HTL}. To mimic practical scenarios, the pre-training phase conducts gradient methods, i.e., GF, to recover $f^{P}$ while the fine-tuning phase uses KRR for $f^{\delta}$.
\end{itemize}

\paragraph{Optimal transfer learning rates}
First, we verify the minimax optimal rates of convergence. We set $n_{P} = n_{Q}^{1.5}$ while varying $n_{Q} \in \{40,45,\cdots,150\}$. We also adjust the Sobolev norm of $f^{P}$ and $f^{\delta}$ such that the factor $\xi$ belongs to $\{0.25, 0.5, 1\}$. Such settings ensure the convergence rate of excess risk for the algorithm is dominated by the fine-tuning error, i.e., $\|\hat{f}^{Q} - f^{Q}\|_{L^{2}}^{2} = \mathcal{O}_{\mathbb{P}}( n_{Q}^{-\frac{3m_{P}}{2m_{P}+1}} + \xi n_{Q}^{-\frac{2m_{\delta}}{2m_{\delta}+1}} ) = \mathcal{O}_{\mathbb{P}}( n_{Q}^{-1} + \xi n_{Q}^{-\frac{2m_{\delta}}{2m_{\delta}+1}} ) = \mathcal{O}_{\mathbb{P}}( n_{Q}^{-\frac{2m_{\delta}}{2m_{\delta}+1}} )$. Therefore, optimality can thus be confirmed by verifying that the empirical excess risk decay rate matches the theoretical rates $n_{Q}^{-\frac{2m_{\delta}}{2m_{\delta}+1}}$. The excess risks under different settings are presented in Figure~\ref{fig: transfer with vary n_Q}, where each colored curve denotes the average error over $100$ repeated experiments. We can see the empirical excess risks of the HTL algorithm, i.e., the blue curves, consistently aligned with the theoretical counterparts, i.e., the black dashed curves, and also significantly outperform the target-only baselines in all settings. Moreover, the risk for $m_{\delta} = 3$ is lower and decays faster than those for $m_{\delta} = 2$, which aligns with the theoretical indications. These results justify the optimality of the HTL algorithm we derived in Section~\ref{sec: transfer learning}.


\begin{figure}[ht]
    \centering
    \subfloat[]{\label{fig: transfer with vary n_Q}
        \centering
        \includegraphics[width=0.48\linewidth]{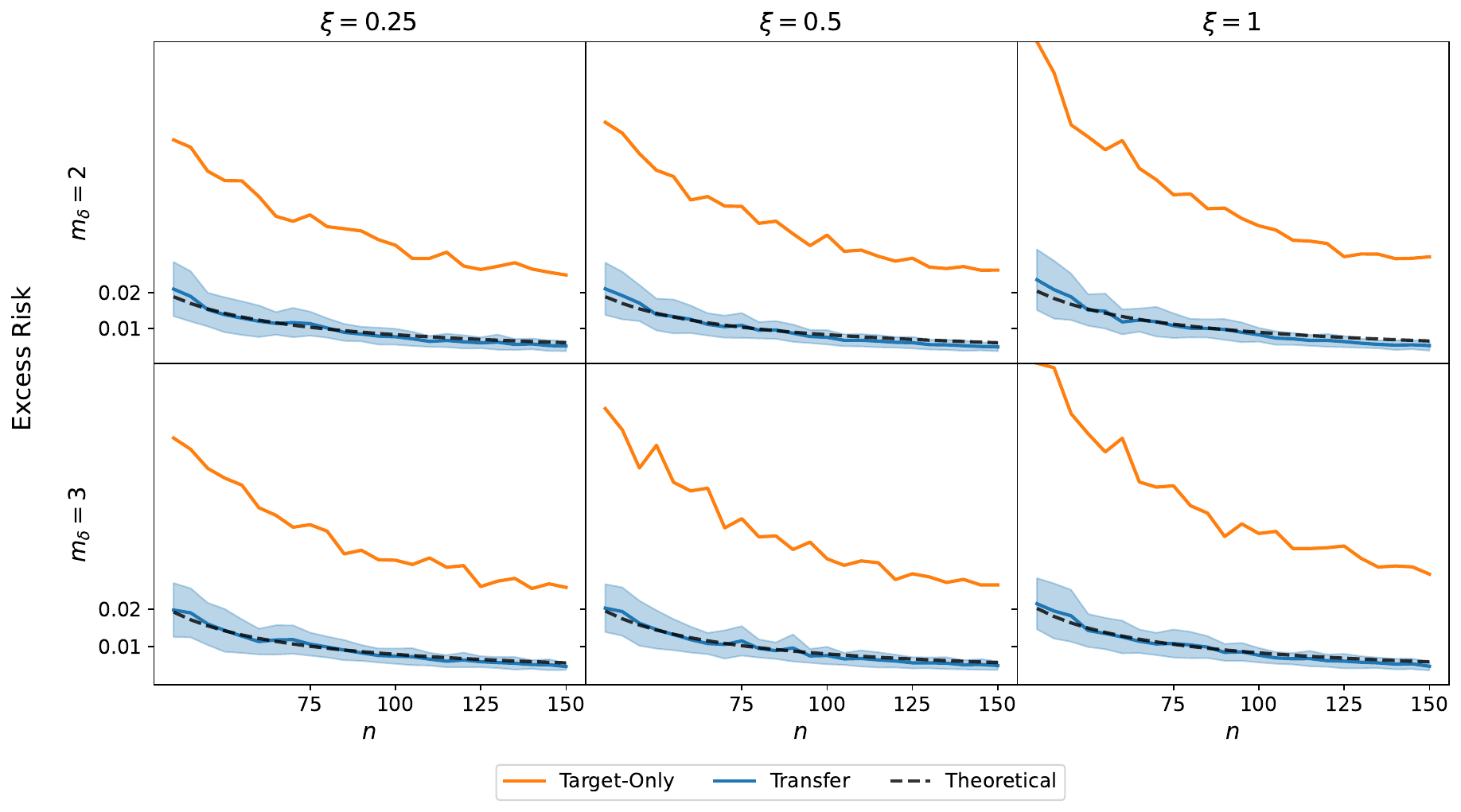} 
    }
    \subfloat[]{\label{fig: transfer with fixed n_Q}
        \centering
        \includegraphics[width=0.48\linewidth]{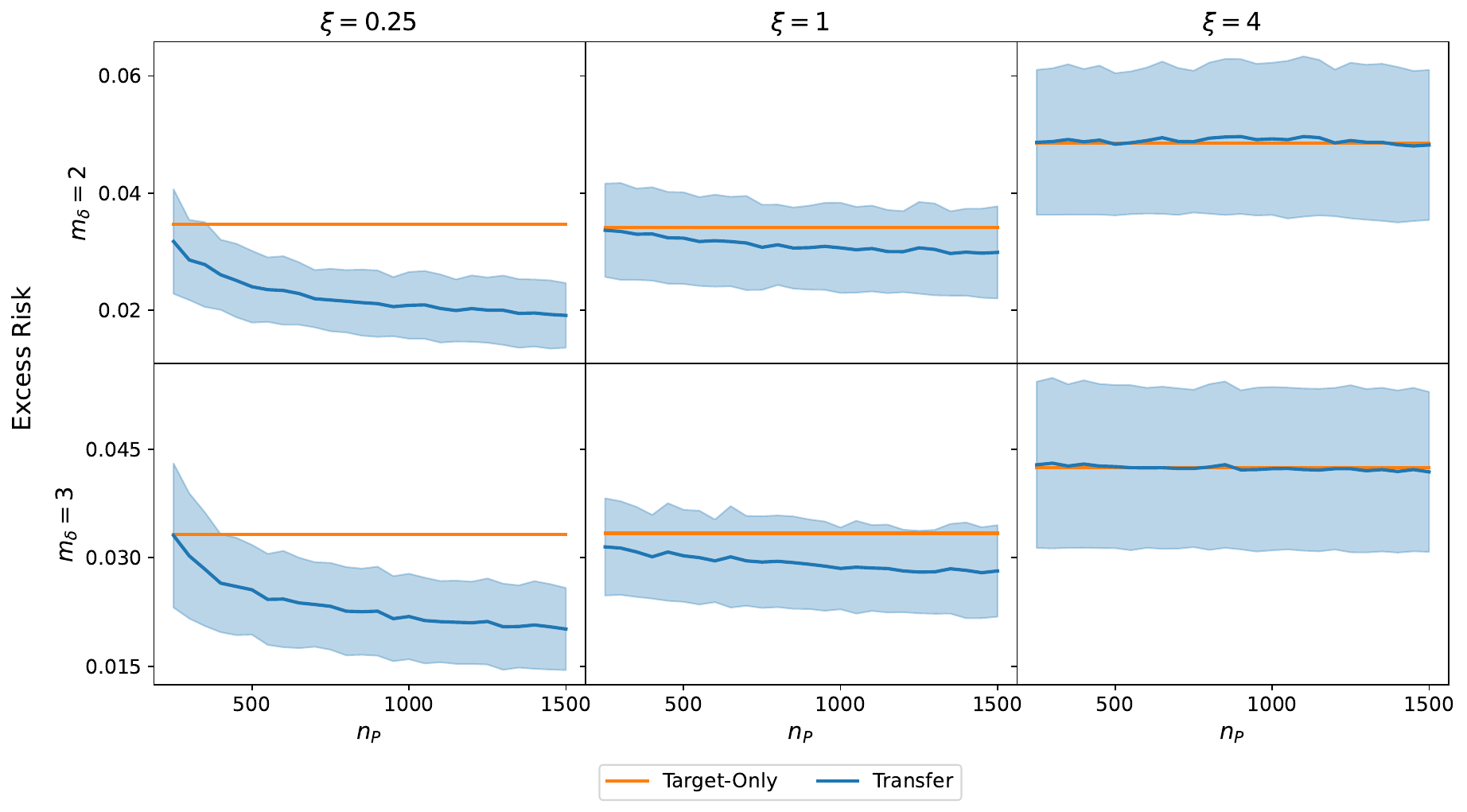} 
    }
    \caption{Left: Excess risk under different combinations of $\xi$ and $m_{\delta}$ with varied target sample size $n_{Q}$. The theoretical convergence rate is $C n_{Q}^{-\frac{2m_{\delta}}{2m_{\delta}+1}}$ for some constants $C$. Right: Excess risk under different $\xi$ and $m_{\delta}$ with a fixed target sample size $n_{Q}$. } 
\end{figure}

\paragraph{Influence of $\xi$ on the efficiency} As discussed in Section~\ref{subsection: RAHTL discussion}, the magnitude of $\xi$ affects the efficiency. To confirm such influence of $\xi$, we fix $n_{Q} = 200$ while varying $n_{P}$ from $200$ to $1500$, and set the values of $\xi$ in $\{0.25, 1, 4\}$. Under such settings, it is expected that as $\xi$ increases, the fine-tuning error will start to dominate and remain unchanged, even as $n_{P}$ keeps increasing. The results, presented in Figure~\ref{fig: transfer with fixed n_Q}, confirm this behavior by showing that the decrease in the excess risks of the HTL algorithm diminishes as $\xi$ increases. Specifically, for low $\xi$ ($\xi=0.25$), increasing $n_P$ continues to reduce the total risk, as the pre-training error remains a significant component, while for high $\xi$ the error curve becomes horizontal, indicating that increasing source samples provides no further benefit because the bottleneck is the estimation of the shift $f^{\delta}$. These observations align with the influence of $\xi$ discussed in Section~\ref{subsection: RAHTL discussion}.


\section{Conclusion}\label{sec: discussion}

In this work, we establish the convergence of spectral algorithms with fixed-bandwidth Gaussian kernels for nonparametric regression. We show that when the true functions belong to Sobolev spaces, specifying the hypothesis space as the Gaussian RKHS allows spectral algorithms to produce estimators that attain minimax optimal convergence rates. A critical discovery is that this optimality necessitates an exponential decay of the regularization parameter $\lambda$, a departure from the polynomial decay typically seen in misspecified kernel literature. Importantly, this result offers a general theoretical insight for misspecified spectral algorithms and a remedy to overcome the well-known saturation effect, independent of the specific choice of learning algorithms.

By leveraging these robust results, we address the underexplored problem of transfer learning under concept shift with the misspecified models. We proposed a two-stage robust and adaptive learning procedure that leverages Gaussian spectral algorithms to achieve minimax optimal rates. Our theoretical analysis reveals that transfer efficiency is governed not just by sample size, but by a ratio factor largely overlooked in prior work.

These findings in this work present a foundational step toward rigorous, robust learning under classical nonparametric regression and under distribution shifts with statistical guarantees. It opens up several directions for further research, including further the refinement of Gaussian kernel assumptions, the exploration of learning dynamics for non-additive model transformations, and the incorporation of covariate shift into the learning framework.

\appendix

\section{Proofs of Main Results for Gaussian Spectral Algorithms}\label{apd: proof of SA with Gaussian}
\subsection{Proof of Theorem~\ref{thm: non-adaptive rate of SA with Gaussian}}
In addition to the notations defined in Section~\ref{sec: preliminaries}, we denote the operator norm of a bounded linear operator as $\|\cdot\|_{op}$ and denote the effective dimension of $T_{K}$ as $\mcN(\lambda) =\operatorname{tr}( (T_{K} + \lambda \mathbf{I})^{-1} T_{K} )$. 

The proof of the non-adaptive rate is mainly consistent with approximation-estimation error decomposition and control of each of them, respectively. We first state the approximation-estimation error decomposition. The estimator of spectral algorithms can be written as 
\begin{equation*}
    \hat{f} = \phi_{\lambda}(T_{K,n}) g_{n}.
\end{equation*}
Then, we decompose the excess risk into approximation error and estimation error, i.e. 
\begin{equation*}
\underbrace{\left\| \hat{f} - f^{*} \right\|_{L^{2}} }_{\text{excess risk of $\hat{f}$}} \leq \underbrace{\left\| \hat{f} - f_{\lambda} \right\|_{L^{2}}}_{\text{estimation error}} + \underbrace{\left\| f_{\lambda} - f^{*} \right\|_{L^{2}}}_{\text{approximation error}},
\end{equation*}
where the intermediate term $f_{\lambda}$ is defined as follows,
\begin{equation*}
    f_{\lambda} = \phi_{\lambda}(T_{K}) T_{K}(f^{*})= \phi_{\lambda}(T_{K}) g.
\end{equation*}
We now start to control each error separately. The following two theorems are used, which aim to bound the approximation error and estimation error, respectively. 

\begin{theorem}[Approximation error]\label{thm: approximation error for SA with Gaussian}
    Suppose $f_{\lambda}$ is defined as $f_{\lambda} = \phi_{\lambda}(T_{K}) T_{K}(f^{*})$.
    Then, the following inequality holds,
    \[
    \| f_{\lambda} - f^{*} \|_{L^{2}}^{2} \leq C  \log\left(\frac{1}{\lambda}\right)^{-m} \| f^{*} \|_{H^{m}}^{2},
    \]
    where $C$ is a constant.
\end{theorem}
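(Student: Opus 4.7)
The plan is to insert an auxiliary function $f^{*}\in\mcH_{K}$ between $f_{\lambda}$ and $f_{0}$, so that the filter-function machinery (which needs a source condition with respect to $T_{K}$) can be applied cleanly, while the gap $\|f^{*}-f_{0}\|_{L^{2}}$ absorbs the misspecification arising from the fact that $H^{m_{0}}$ strictly contains the Gaussian RKHS. Writing $f_{\lambda}=\phi_{\lambda}(T_{K})T_{K}f_{0}$, using \eqref{eqn: filter function condition 1} with $\beta=1$ to obtain $\|\phi_{\lambda}(T_{K})T_{K}\|_{\mathrm{op}}\leq E$, and applying the triangle inequality gives
\begin{equation*}
\|f_{\lambda}-f_{0}\|_{L^{2}}\;\leq\;(E+1)\,\|f^{*}-f_{0}\|_{L^{2}}\;+\;\|(I-\phi_{\lambda}(T_{K})T_{K})f^{*}\|_{L^{2}}.
\end{equation*}
Any $f^{*}\in\mcH_{K}$ admits a representation $f^{*}=T_{K}^{1/2}g$ with $\|g\|_{L^{2}}=\|f^{*}\|_{\mcH_{K}}$, so \eqref{eqn: filter function condition 2} with $\beta=1/2\leq\tau$ yields $\|(I-\phi_{\lambda}(T_{K})T_{K})f^{*}\|_{L^{2}}\leq F_{\tau}\sqrt{\lambda}\,\|f^{*}\|_{\mcH_{K}}$. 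It therefore suffices to produce an $f^{*}\in\mcH_{K}$ whose $L^{2}$-defect and whose $\sqrt{\lambda}\,\|\cdot\|_{\mcH_{K}}$ are both of order $\log(1/\lambda)^{-m_{0}/2}\|f_{0}\|_{H^{m_{0}}}$.

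\textbf{Building $f^{*}$ by a Fourier cut-off.} Since $\mcX$ has a Lipschitz boundary, the Sobolev extension theorem gives $\tilde f_{0}\in H^{m_{0}}(\mbR^{d})$ with $\|\tilde f_{0}\|_{H^{m_{0}}(\mbR^{d})}\lesssim\|f_{0}\|_{H^{m_{0}}(\mcX)}$. I then define $f^{*}$ on $\mbR^{d}$ by frequency truncation, $\mcF(f^{*})(\omega)=\mcF(\tilde f_{0})(\omega)\mathbbm{1}_{\{\|\omega\|\leq R_{\lambda}\}}$, and restrict to $\mcX$. Plancherel immediately gives the polynomial approximation estimate
\begin{equation*}
\|f^{*}-\tilde f_{0}\|_{L^{2}(\mbR^{d})}^{2}\;=\;\int_{\|\omega\|>R_{\lambda}}|\mcF(\tilde f_{0})|^{2}\,d\omega\;\leq\;R_{\lambda}^{-2m_{0}}\|\tilde f_{0}\|_{H^{m_{0}}}^{2}.
\end{equation*}
For the RKHS side, the explicit Fourier expression $\mcF(K)(\omega)\propto\exp(-h^{2}\|\omega\|^{2}/2)$ together with the characterization $\|u\|_{\mcH_{K}(\mbR^{d})}^{2}=\int|\mcF(u)|^{2}/\mcF(K)\,d\omega$ yields
\begin{equation*}
\|f^{*}\|_{\mcH_{K}}^{2}\;\lesssim\;\exp\!\bigl(h^{2}R_{\lambda}^{2}/2\bigr)\,\|\tilde f_{0}\|_{L^{2}}^{2}.
\end{equation*}
Restriction from $\mbR^{d}$ to $(\mcX,Q_{X})$ costs only a multiplicative constant under the standard bounded-density hypothesis, so both estimates descend to the setting of $T_{K}$.

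\textbf{Balancing the cut-off.} The two bounds trade polynomially versus exponentially in $R_{\lambda}$. Taking $R_{\lambda}^{2}=(1-\alpha)(2/h^{2})\log(1/\lambda)$ for an arbitrary small fixed $\alpha\in(0,1)$ makes $\lambda\exp(h^{2}R_{\lambda}^{2}/2)=\lambda^{\alpha}\to 0$ while $R_{\lambda}^{-2m_{0}}\asymp\log(1/\lambda)^{-m_{0}}$. Substituting,
\begin{equation*}
\|f_{\lambda}-f_{0}\|_{L^{2}}^{2}\;\lesssim\;\log(1/\lambda)^{-m_{0}}\|f_{0}\|_{H^{m_{0}}}^{2}\;+\;\lambda^{\alpha}\|f_{0}\|_{L^{2}}^{2},
\end{equation*}
and since $\lambda^{\alpha}=o\!\bigl(\log(1/\lambda)^{-m_{0}}\bigr)$ the first term dominates, giving the claimed bound.

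\textbf{Main difficulty.} The essential obstacle is that no nonzero element of $H^{m_{0}}\setminus\mcH_{K}$ can be reached by bounded-norm RKHS approximants: the Gaussian RKHS forces an exponential price $\exp(h^{2}R_{\lambda}^{2}/2)$ in $\|f^{*}\|_{\mcH_{K}}^{2}$ for every polynomial-in-$R_{\lambda}$ gain in $L^{2}$-approximation of $f_{0}$. This imbalance is precisely what caps $R_{\lambda}$ at the scale $\sqrt{\log(1/\lambda)}$ and produces a logarithmic (rather than polynomial-in-$\lambda$) approximation rate, which is the mechanism behind the exponentially-decaying optimal $\lambda$ stated in Theorem~\ref{thm: non-adaptive rate of SA with Gaussian}. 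The usual real-interpolation argument employed in Proposition~\ref{proposition: target-only learning} cannot reproduce this because it diagonalizes $f_{0}$ and $f_{\lambda}$ in the same basis, which is unavailable here.
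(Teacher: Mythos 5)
Your proof is correct, but it takes a genuinely different route from the paper's. The paper works entirely in the Fourier domain: after extending $f_{0}$ and $f_{\lambda}$ to $\mbR^{d}$, it treats $I-\phi_{\lambda}(T_{K})T_{K}$ as a Fourier multiplier with symbol $1-\phi_{\lambda}(\mcF(K)(\omega))\mcF(K)(\omega)$, bounds that symbol pointwise by $F_{\tau}\lambda^{1/2}\mcF(K)(\omega)^{-1/2}\lesssim\lambda^{1/2}\exp\{C\|\omega\|_{2}^{2}/2\}$ via condition~\eqref{eqn: filter function condition 2} with $\beta=1/2$, and then splits the frequency domain at the level set $\lambda\exp\{C(1+\|\omega\|_{2}^{2})\}=1$ to compare $\lambda\exp\{C(1+\|\omega\|_{2}^{2})\}$ against $\log(1/\lambda)^{-m_{0}}(1+\|\omega\|_{2}^{2})^{m_{0}}$ on each piece. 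You instead keep the operator calculus abstract and run the classical source-condition decomposition: insert a band-limited surrogate $f^{*}=T_{K}^{1/2}g$, pay $F_{\tau}\sqrt{\lambda}\|f^{*}\|_{\mcH_{K}}$ on the hypothesis-space part and $(E+1)\|f^{*}-f_{0}\|_{L^{2}}$ on the defect, and push all the Fourier analysis into constructing $f^{*}$ by truncation at $R_{\lambda}\asymp\sqrt{\log(1/\lambda)}$. The quantitative mechanism is identical in both arguments (polynomial Sobolev tail decay traded against the exponential growth of $\mcF(K)^{-1}$, which is what forces the logarithmic rate), and both implicitly need qualification $\tau\geq 1/2$ to invoke condition~\eqref{eqn: filter function condition 2} at $\beta=1/2$, so neither is more restrictive there. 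What your version buys is that it never identifies the spectrum of the integral operator $T_{K}$ on $L^{2}(\mcX,Q_{X})$ with the Fourier symbol $\mcF(K)(\omega)$ — a step the paper performs only formally — at the cost of needing the standard facts $\mcH_{K}=\operatorname{ran}(T_{K}^{1/2})$ with isometry onto $(\ker T_K)^{\perp}$ and a bounded-density assumption on $Q_{X}$ to transfer the Plancherel estimates from Lebesgue measure to $Q_{X}$; your extra $\lambda^{\alpha}\|f_{0}\|_{L^{2}}^{2}$ term is harmless since it is $o(\log(1/\lambda)^{-m_{0}})$.
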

\begin{remark}
As we mentioned in Section~\ref{subsec: discussion for SA with Gaussian}, the intermediate term $f_{\lambda}$ is placed in $[H^{m}]^{s}$ while the true function in $H^{m}$ (here, $s$ denotes the interpolation index) for classical misspecified kernel methods. This allows one to expand $f_{\lambda}$ and $f^{*}$ under the same basis. In our case, the intermediate term, $f_{\lambda}$, lies in the RKHS associated with Gaussian kernels ($s$ needs to be $\infty$). Therefore, the previous technique is no longer feasible. Therefore, the techniques we used are the Fourier transform of the Gaussian kernel and the Plancherel Theorem.
\end{remark}

\begin{proof}[Proof of Theorem~\ref{thm: approximation error for SA with Gaussian}]
    Since $\mcX$ has Lipschitz boundary, there exists an extension mapping from $L^{2}(\mcX)$ to $L^{2}(\mathbb{R}^{d})$, such that the smoothness of functions in $L^{2}(\mcX)$ get preserved. Therefore, there exist constants $C_{1}$ and $C_{2}$  such that for any function $g \in H^{m}(\mcX)$, there exists an extension of $g$, $g_{e}\in H^{m}(\mathbb{R}^{d})$ satisfying 
    \begin{equation}
        C_{1} \|g_{e}\|_{H^{m}(\mathbb{R}^{d})} \leq \|g\|_{H^{m}(\mcX)} \leq C_{2} \|g_{e}\|_{H^{m}(\mathbb{R}^{d})}.
    \end{equation}
    Denote $f_{\lambda,e}$ as the extension of $f_{\lambda}$, then we have
    \begin{equation}\label{equ: bias proof eqn 1}
        \left\| f_{\lambda} - f^{*} \right\|_{L^{2}(\mcX)}^{2}   \leq 
        \left\| f_{\lambda, e}|_{\mcX} - f^{*} \right\|_{L^{2}(\mcX)}^{2}   \leq C_{2} \left( \left\| f_{\lambda,e} - f_{0,e} \right\|_{L^{2}(\mathbb{R}^{d})}^{2}  \right).
    \end{equation}
    where $f_{\lambda,e}|_{\mcX}$ is the restriction of $f_{\lambda,e}$ on $\mcX$. By Fourier transform of the Gaussian kernel and Plancherel Theorem, we have 
    \begin{equation}
    \begin{aligned}\label{equ: bias proof eqn 2}
        & \left\|f_{\lambda, e}-f_{0,e}\right\|_{L^{2}(\mathbb{R}^{d})}^2 \\
        = & \int_{\mathbb{R}^{d}}\left|\mathcal{F}\left(f_{0,e}\right)(\omega)-\mathcal{F}\left(f_{\lambda, e}\right)(\omega)\right|^2 d \omega \\
         = & \int_{\mathbb{R}^{d}} |\mathcal{F}\left(f_{0,e}\right)(\omega)|^2 \big[ 1 - \phi_{\lambda}\left( \mcF(K)(\omega) \right) \mcF(K)(\omega) \big]^2  d \omega \\ 
         \leq & C_{3}\int_{\mathbb{R}^{d}} \lambda \exp\left\{C\|\omega\|_{2}^{2}\right\} |\mathcal{F}\left(f_{0,e}\right)(\omega)|^2 d \omega \\
        \leq & C_{3} \int_{\mathbb{R}^{d}} \lambda \exp\left\{C(1+\|\omega\|_{2}^{2})\right\} |\mathcal{F}\left(f_{0,e}\right)(\omega)|^2 d \omega
    \end{aligned}
    \end{equation}
    where $\mcF(K) = \exp\{-C\|\omega\|^2\}$ is the Fourier transform of the Gaussian kernel, and the first equality is based on the property of the filter function with $\beta = 0.5$. To apply the property of the filter function, we also need to verify the Fourier transform of the Gaussian kernel ranges from $0$ to $\kappa^2$. For $x,x^{'} \in \mcX$, let $\xi = \|x - x^{'}\|$ denote the Euclidean distance between $x$ and $x'$, then for $\omega \in \mbR$
    \begin{equation}
    \begin{aligned}
        \mcF(K)(\omega) & = \frac{1}{(\sqrt{2\pi})^{d}} \int_{\mbR^{d}} \exp\left\{- \frac{\xi^2}{\gamma}\right\} \exp\left\{-i \omega \xi\right\} d\xi \\
        & = \frac{1}{(\sqrt{2\pi})^{d}} \prod_{j=1}^{d} \int_{\mbR} \exp\left\{- \frac{\xi_{j}^2}{\gamma}\right\} \exp\left\{-i \omega_{j} \xi_{j}\right\} d\xi_{j} \\
        & = \frac{1}{(\sqrt{2\pi})^{d}} \prod_{j=1}^{d} \int_{\mbR} \exp\left\{- \frac{\xi_{j}^2}{\gamma}\right\} \cos\left( \omega_{j} \xi_{j} \right) d\xi_{j} \\
        & \leq \left(\sqrt{\frac{2}{\pi}}\right)^{d}\prod_{j=1}^{d} \int_{\mbR^{+}}  \cos\left( \omega_{j} \xi_{j} \right) d\xi_{j} \\
        & \leq \left(\sqrt{\frac{2}{\pi}}\right)^{d} \leq 1 \leq \kappa^{2}
    \end{aligned}
    \end{equation}
    where the last inequality is based on the fact that $\sup_{x}K(x,x)\leq 1$ for Gaussian kernels.

    Define $\Omega= \{ \omega : \lambda \exp\{C(1+\|\omega\|_{2}^{2})\} <1  \}$ and $\Omega^{C} =  \mathbb{R}^{d} \backslash \Omega$. Notice over $\Omega^{C}$, we have
    \begin{equation}
        (1+\|\omega\|_{2}^{2}) \geq \frac{1}{C}\log\left( \frac{1}{\lambda} \right) \implies C^{m} \log\left( \frac{1}{\lambda} \right)^{-m} (1+\|\omega\|_{2}^{2})^{m} \geq 1.
    \end{equation}
    Besides, over $\Omega$, we first note that the function $h(\omega) = \exp\{ C(1+\|\omega\|_{2}^{2})\} / (1+\|\omega\|_{2}^{2})^{m}$ reaches its maximum $C^{m} \lambda^{-1} \log(\frac{1}{\lambda})^{-m}$ if $\lambda$ satisfies $\lambda < \exp\{-m\}$ and $\lambda \log(\frac{1}{\lambda})^{m} \leq C^{m} \exp\{-C\}$. One can verify when $\lambda \rightarrow 0$ as $n\rightarrow \infty$, the two previous inequality holds. Then
    \begin{equation}
        \lambda \exp\{C(1+\|\omega\|_{2}^{2})\} \leq C^{m} \log\left( \frac{1}{\lambda} \right)^{-m} (1+\|\omega\|_{2}^{2})^{m} \quad \forall \omega \in \Omega.
    \end{equation}
    Combining the inequality over $\Omega$ and $\Omega^{C}$,
    \begin{equation}
    \begin{aligned}\label{equ: bias proof eqn 3}
        & \int_{\mathbb{R}^{d}} \lambda \exp\{C(1+\|\omega\|_{2}^{2})\} |\mathcal{F}\left(f_{0,e}\right)(\omega)|^2 d \omega \\
        \leq & \int_{\Omega} \lambda \exp\{C(1+\|\omega\|_{2}^{2})\} |\mathcal{F}\left(f_{0,e}\right)(\omega)|^2 d \omega + \int_{\Omega^{C}} |\mathcal{F}\left(f_{0,e}\right)(\omega)|^2 d \omega  \\
        \leq & C^{m} \log\left( \frac{1}{\lambda} \right)^{-m} \int_{\mathbb{R}^{d}} (1+\|\omega\|_{2}^{2})^{m} |\mcF(f_{0,e})(\omega)|^2 d \omega \\
         = & C^{m} \log\left( \frac{1}{\lambda} \right)^{-m} \|f_{0,e}\|_{H^{m}(\mathbb{R}^{d})}^{2} \\
        \leq & C^{'} \log\left( \frac{1}{\lambda} \right)^{-m} \|f^{*}\|_{H^{m}(\mcX)}^{2}.
    \end{aligned} 
    \end{equation}
    Combining the inequality (\ref{equ: bias proof eqn 1}), (\ref{equ: bias proof eqn 2}), and (\ref{equ: bias proof eqn 3}), we have 
    \begin{equation}
        \|f_{\lambda} - f^{*}\|_{L^{2}(\mcX)}^2 \leq C' \log\left( \frac{1}{\lambda} \right)^{-m} \|f^{*}\|_{H^{m}(\mcX)}^{2}.
    \end{equation}
    for some constant $C'$, which completes the proof.
\end{proof}

As discussed in Section~\ref{sec: SA with Gaussians}, we require either Assumption~\ref{assumption: assumption on Gaussian kernel}~\ref{assumption: bounds on effective dimension} or Assumption~\ref{assumption: assumption on Gaussian kernel}~\ref{assumption: bounds on adjusted eigenfunction} hold to bound the estimation error. In the following, we consider the case where Assumption~\ref{assumption: assumption on Gaussian kernel}~\ref{assumption: bounds on effective dimension} hold, and refer to Section~\ref{apd: Bound of estimation error under adjusted eigenfunction}.

\begin{theorem}[Estimation error]\label{thm: estimation error for SA with Gaussian}
    Suppose Assumptions~\ref{assumption: smoothness},~\ref{assumption: error tail} and~\ref{assumption: assumption on Gaussian kernel}~\ref{assumption: bounds on effective dimension} hold. Then by choosing $\log(1/\lambda)\asymp n^{\frac{2}{2m+d}}$, for any fixed $\delta \in (0,1)$, when $n$ is sufficient large, with probability $1-\delta$, we have 
    \begin{equation*}
        \left \| \hat{f} - f_{\lambda}  \right\|_{L^{2}} \leq C \log\left(\frac{4}{\delta}\right)  n^{-\frac{m}{2m+d}}
    \end{equation*}
    where $C$ is a constant proportional to $\sigma$. 
\end{theorem}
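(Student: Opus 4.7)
The plan is to follow the standard spectral-algorithm reduction but adapted to the logarithmic-in-$\lambda$ scaling of the effective dimension that is characteristic of Gaussian kernels. First I would decompose
\[
\hat f - f_\lambda \;=\; \phi_\lambda(T_{K,n})\bigl(g_n - T_{K,n} f_\lambda\bigr) \;+\; \bigl[\phi_\lambda(T_{K,n}) T_{K,n} - I\bigr] f_\lambda,
\]
and then split the first summand via $y_i = f_0(x_i)+\epsilon_i$ into a pure noise piece $\tfrac{1}{n}\sum_i K_{x_i}^*\epsilon_i$ and a residual piece $\tfrac{1}{n}\sum_i K_{x_i}^*(f_0-f_\lambda)(x_i)$. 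Measuring the $L^2$-norm through $\|T_K^{1/2}\cdot\|_{\mcH_K}$ and inserting $(T_{K,n}+\lambda)^{-1/2}(T_K+\lambda)^{1/2}$ wherever necessary, the filter-function conditions \eqref{eqn: filter function condition 1}--\eqref{eqn: filter function condition 2} yield the operator bounds $\|\phi_\lambda(T_{K,n})(T_{K,n}+\lambda)^{1/2}\|_{op}\lesssim \lambda^{-1/2}$ and $\|(I-\phi_\lambda(T_{K,n})T_{K,n})(T_{K,n}+\lambda)^{1/2}\|_{op}\lesssim \lambda^{1/2}$, reducing everything to control of the two random quantities
\[
A := \bigl\|(T_K+\lambda)^{1/2}(T_{K,n}+\lambda)^{-1/2}\bigr\|_{op}, \qquad B := \bigl\|(T_K+\lambda)^{-1/2}\bigl(g_n - T_{K,n} f_\lambda\bigr)\bigr\|_{\mcH_K}.
\]

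To bound $A$, I would apply a self-adjoint Bernstein inequality on $\mcH_K$ to the sum $(T_K+\lambda)^{-1/2}(T_{K,n}-T_K)(T_K+\lambda)^{-1/2}$, using Assumption~\ref{assumption: bounds on effective dimension} to control both the almost-sure operator norm of each summand and the operator variance in terms of $E_K^2\mcN(\lambda)$; this yields $A\le 2$ with probability at least $1-\delta/4$, provided $\mcN(\lambda)\log(4/\delta)/n$ is sufficiently small. To bound $B$, I would apply a vector-valued Bernstein inequality in $\mcH_K$ to the zero-mean random vectors $(T_K+\lambda)^{-1/2}K_{x_i}^*(y_i-f_\lambda(x_i))$. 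The noise component has its second moment controlled by $\sigma^2 E_K^2\mcN(\lambda)$ via Assumptions~\ref{assumption: error tail} and~\ref{assumption: bounds on effective dimension}, which pushes the proportionality to $\sigma$ into the final constant. The residual component is controlled through a bound on $\|f_0-f_\lambda\|_{L^\infty}$ obtained by combining the $L^q$ hypothesis on $f_0$, the Sobolev embedding $H^{m_0}\hookrightarrow L^\infty$ (available since $m_0>d/2$), and Theorem~\ref{thm: approximation error for SA with Gaussian}; this contribution is strictly lower order. The combination yields $B \lesssim \log(4/\delta)\sigma\sqrt{\mcN(\lambda)/n}$, and multiplying the pieces gives $\|\hat f - f_\lambda\|_{L^2}\lesssim A^2 B \lesssim \log(4/\delta)\sigma\sqrt{\mcN(\lambda)/n}$.

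Finally, plugging in $\log(1/\lambda)\asymp n^{2/(2m_0+d)}$ together with the Gaussian-kernel scaling $\mcN(\lambda)\asymp \log(1/\lambda)^d$ gives $\mcN(\lambda)/n \asymp n^{-2m_0/(2m_0+d)}$, matching the advertised rate $n^{-m_0/(2m_0+d)}$ after a square root. The main obstacle I anticipate is precisely this non-standard logarithmic scaling of $\mcN(\lambda)$: the polynomial-EDR shortcuts that pervade the spectral-algorithm literature are unavailable, so the concentration bounds must be redone carefully to accommodate $\mcN(\lambda)=\mathrm{polylog}(1/\lambda)$, and, since uniform boundedness of the Gaussian integral-operator eigenfunctions is not known, both the operator-Bernstein step for $A$ and the variance bound for the noise part of $B$ must be routed through Assumption~\ref{assumption: bounds on effective dimension} rather than the standard $\|e_j\|_\infty\le E_K$ hypothesis. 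A secondary subtlety is verifying that the Bernstein ``burn-in'' condition $\mcN(\lambda)\log(4/\delta)/n\lesssim 1$ holds under the chosen $\lambda$, which imposes a polynomial-in-$n$ threshold and explains the ``sufficient large $n$'' clause in the theorem.
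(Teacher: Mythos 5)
Your overall architecture is the same as the paper's: the split into $\phi_{\lambda}(T_{K,n})(g_{n}-T_{K,n}f_{\lambda})$ plus $-\psi_{\lambda}(T_{K,n})f_{\lambda}$, the reduction to the two random quantities $A$ and $B$, operator-Bernstein for $A$ and vector-valued Bernstein for $B$ both routed through Assumption~\ref{assumption: bounds on effective dimension}, and the final assembly $\lesssim A^{2}B$. One internal inconsistency to fix: the one-sided bound $\|\phi_{\lambda}(T_{K,n})(T_{K,n}+\lambda I)^{1/2}\|_{op}\lesssim\lambda^{-1/2}$ that you state is useless in this regime, since $\lambda^{-1/2}=\exp\{Cn^{2/(2m_{0}+d)}/2\}$ would swamp everything; the assembly that actually produces $A^{2}B$ must use the two-sided sandwich $\|(T_{K,n}+\lambda I)\phi_{\lambda}(T_{K,n})\|_{op}\leq 2E$ together with the contraction $\|T_{K}^{1/2}(T_{K}+\lambda I)^{-1/2}\|_{op}\leq 1$, which is exactly what the paper does.

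Two more substantive gaps. First, your treatment of the residual piece does not verify the Bernstein moment condition. You propose to bound $\|f_{0}-f_{\lambda}\|_{L^{\infty}}$ via the Sobolev embedding and Theorem~\ref{thm: approximation error for SA with Gaussian}, but the embedding $H^{m_{0}}\hookrightarrow L^{\infty}$ controls $f_{0}$ only, the approximation theorem gives only an $L^{2}$ bound, and $f_{\lambda}=\phi_{\lambda}(T_{K})T_{K}f_{0}$ lives in the Gaussian RKHS with an $\mcH_{K}$-norm that can be as large as $\lambda^{-1/2}\|f_{0}\|_{L^{2}}$, so its sup-norm is not controlled this way. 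The paper instead truncates on $\Omega_{1}=\{|f_{0}|\leq t\}$, bounds $\|f_{\lambda}I_{\Omega_{1}}\|_{L^{\infty}}$ by $(\mcN(\lambda)+1)t$, and uses the $L^{q}$ hypothesis to make the $\Omega_{2}$ contribution negligible; some version of this truncation is needed, and your plan omits the mechanism. Second, and most concretely, your rate arithmetic is wrong: with $\mcN(\lambda)\asymp\log(1/\lambda)^{d}$ and $\log(1/\lambda)\asymp n^{2/(2m_{0}+d)}$ you get $\mcN(\lambda)/n\asymp n^{-(2m_{0}-d)/(2m_{0}+d)}$, which is strictly slower than the claimed $n^{-2m_{0}/(2m_{0}+d)}$. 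The rate only comes out if $\mcN(\lambda)=O(\log(1/\lambda)^{d/2})=O(n^{d/(2m_{0}+d)})$, which is what the paper's Lemma~\ref{lemma: bound for effective dimension} derives from the stronger eigenvalue decay $s_{j}\leq C_{1}\exp\{-C_{2}j^{2}\}$; you need that bound, not $\log(1/\lambda)^{d}$, for the conclusion to follow.
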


\begin{remark}
    For estimation error, we apply standard integral operator techniques from \citet{smale2007learning}, following a strategy similar to \citet{fischer2020sobolev,zhang2024optimality}. Unlike most existing work focusing on polynomial eigenvalue decay, we refine the proof to address the Gaussian kernel case, where the eigenvalues decay exponentially.
\end{remark}

\begin{proof}[Proof of Theorem~\ref{thm: estimation error for SA with Gaussian}]
First, decompose the estimation error as
\begin{equation*}
\begin{aligned}
        \left\| \hat{f} - f_{\lambda} \right\|_{L^{2}} & = \left\| T_{K}^{\frac{1}{2}} \left(  \hat{f} - f_{\lambda} \right) \right\|_{\mcH_{K}} \\
        & \leq \underbrace{\left\|T_{K}^{\frac{1}{2}} \left( T_{K} + \lambda I  \right)^{-\frac{1}{2}} \right\|_{op}}_{A_{1}}  \cdot  \underbrace{\left\| \left( T_{K} + \lambda I  \right)^{\frac{1}{2}} \left( T_{K,n} + \lambda I  \right)^{-\frac{1}{2}}  \right\|_{op}}_{A_{2}}  \\
        & \quad \cdot \underbrace{\left\| \left( T_{K,n} + \lambda I  \right)^{\frac{1}{2}} \left( \hat{f} - f_{\lambda} \right)  \right\|_{\mcH_{K}}}_{A_{3}}
\end{aligned}
\end{equation*}
For the first term $A_{1}$, we have 
\begin{equation*}
    A_{1} = \left\|T_{K}^{\frac{1}{2}} \left( T_{K} + \lambda I  \right)^{-\frac{1}{2}} \right\|_{op} = \sup_{j\geq 1} \left( \frac{s_{j}}{s_{j} + \lambda}\right)^{\frac{1}{2}} \leq 1.
\end{equation*}
For the second term $A_{2}$, using Lemma~\ref{lemma: bounds for A2} with sufficient large $n$, we have 
\begin{equation*}
    u:=\frac{\mcN(\lambda)}{n} \log ( \frac{8\mcN(\lambda)}{\delta} \frac{(\|T_{K}\|_{op}+\lambda)}{\|T_{K}\|_{op}}) \leq \frac{1}{8}
\end{equation*}
such that
\begin{equation*}
     \left\| \left( T_{K} + \lambda I \right)^{-\frac{1}{2}} \left( T_{K} - T_{K,n} \right) \left( T_{K} + \lambda I \right)^{-\frac{1}{2}} \right\|_{op} \leq \frac{4}{3}u + \sqrt{2u}\leq  \frac{2}{3}
\end{equation*}
holds with probability $1-\frac{\delta}{2}$. Thus,
\begin{equation*}
\begin{aligned}
A_{2}^{2} &= \left\| (T_{K} + \lambda I)^{\frac{1}{2}} (T_{K,n} + \lambda I)^{-\frac{1}{2}}\right\|_{op}^2 \\
& = \left\| (T_{K} + \lambda I)^{\frac{1}{2}} (T_{K,n} + \lambda I)^{-1} (T_{K} + \lambda I )^{\frac{1}{2}}\right\|_{op}\\
& =\left\|\left( (T_{K} + \lambda I)^{-\frac{1}{2}}\left(T_{K,n}+\lambda\right) (T_{K} + \lambda I) ^{-\frac{1}{2}}\right)^{-1}\right\|_{op} \\
& =\left\|\left(I-  (T_{K} + \lambda I )^{-\frac{1}{2}}\left(T_{K,n}-T_{K}\right) (T_{K} + \lambda I )^{-\frac{1}{2}}\right)^{-1}\right\|_{op} \\
& \leq \sum_{k=0}^{\infty}\left\| (T_{K} + \lambda I )^{-\frac{1}{2}}\left(T_{K} - T_{K,n}\right) (T_{K} + \lambda I )^{-\frac{1}{2}}\right\|_{op}^k \\
& \leq \sum_{k=0}^{\infty}\left(\frac{2}{3}\right)^k \leq 3,
\end{aligned}
\end{equation*}
For the third term $A_{3}$, notice 
\begin{equation*}
    \hat{f} - f_{\lambda} = \phi_{\lambda}(T_{K,n})\left[ g_{n} - T_{K,n}(f_{\lambda}) \right] - \psi_{\lambda}(T_{K,n})(f_{\lambda})
\end{equation*}
where $\psi_{\lambda}(z) = 1 - z\phi_{\lambda}(z)$. Therefore,
\begin{equation*}
\begin{aligned}
    A_{3} & = \left\| \left( T_{K,n} + \lambda I  \right)^{\frac{1}{2}} \left[ \hat{f} - f_{\lambda} \right] \right\|_{\mcH_{K}} \\
    & \leq \underbrace{  \left\| \left( T_{K,n} + \lambda I  \right)^{\frac{1}{2}} \phi_{\lambda}(T_{K,n}) \left[ g_{n} - T_{K,n}(f_{\lambda}) \right] \right\|_{\mcH_{K}}}_{A_{31}} + \underbrace{\left\| \left( T_{K,n} + \lambda I  \right)^{\frac{1}{2}} \psi_{\lambda}(T_{K,n}) f_{\lambda} \right\|_{\mcH_{K}}}_{A_{32}} 
\end{aligned}
\end{equation*}
For $A_{31}$, we have
\begin{equation}\label{eqn: equation of A_31}
\begin{aligned}
    A_{31} \leq &  \left\| \left( T_{K,n} + \lambda I \right)^{\frac{1}{2}} \phi_{\lambda}(T_{K,n}) \left( T_{K,n} + \lambda I \right)^{\frac{1}{2}}  \right\|_{op} \cdot \left\| \left( T_{K,n} + \lambda I \right)^{-\frac{1}{2}} \left( T_{K} + \lambda I \right)^{\frac{1}{2}}  \right\|_{op} \\
    & \quad  \cdot \left\| \left( T_{K} + \lambda I \right)^{-\frac{1}{2}} \left[ g_{n} - T_{K,n}(f_{\lambda})\right] \right\|_{\mcH_{K}}
\end{aligned}
\end{equation}
\begin{itemize}
    \item For the first term in (\ref{eqn: equation of A_31}), the properties of filter function indicate $z\phi_{\lambda}(z)\leq E$ and $\lambda \phi_{\lambda}(z)\leq E$, thus we have
    \begin{equation*}
         \left\| \left( T_{K,n} + \lambda I \right)^{\frac{1}{2}} \phi_{\lambda}(T_{K,n})  \left( T_{K,n} + \lambda I \right)^{\frac{1}{2}}  \right\|_{op} =  \left\| \left( T_{K,n} + \lambda I \right)^{1} \phi_{\lambda}(T_{K,n})  \right\|_{op} \leq 2E.
    \end{equation*}

    \item For the second term in (\ref{eqn: equation of A_31}), the bounds for $A_{2}$ implies 
    \begin{equation*}
        \left\| \left( T_{K,n} + \lambda I \right)^{-\frac{1}{2}} \left( T_{K} + \lambda I \right)^{\frac{1}{2}}  \right\|_{op} \leq \sqrt{3}.
    \end{equation*}

    \item For the third term in (\ref{eqn: equation of A_31}), by choosing $\log(1/\lambda) \asymp n^{\frac{2}{2m+d}}$,
    \begin{equation*}
    \begin{aligned}
        & \left\| \left( T_{K} + \lambda I \right)^{-\frac{1}{2}} \left[ g_{n} - T_{K,n}(f_{\lambda})\right] \right\|_{\mcH_{K}} \\
        & \leq \left\| \left( T_{K} + \lambda I \right)^{-\frac{1}{2}} \left[ (g_{n} - T_{K,n}(f_{\lambda})) - (g - T_{K}(f_{\lambda}))\right] \right\|_{\mcH_{K}} \\
        & \qquad + \left\| \left( T_{K} + \lambda I \right)^{-\frac{1}{2}} \left[  g - T_{K}(f_{\lambda})\right] \right\|_{\mcH_{K}} \\
        & \leq C \log\left(\frac{4}{\delta}\right)n^{-\frac{m}{2m+d}} + \left\| \left( T_{K} + \lambda I \right)^{-\frac{1}{2}} T_{K} \right\|_{op} \cdot \left\|f^{*} - f_{\lambda} \right\|_{L^{2}} \\
        & \leq C \log\left(\frac{4}{\delta}\right)n^{-\frac{m}{2m+d}} + C^{'} n^{-\frac{m}{2m+d}} \left\| f^{*} \right\|_{H^{m}} \\
        & \leq C \log\left(\frac{4}{\delta}\right)n^{-\frac{m}{2m+d}}
    \end{aligned}
    \end{equation*}
    where $\left\| \left( T_{K} + \lambda I \right)^{-\frac{1}{2}} T_{K} \right\|_{op}\leq 1$ is the same as bounding $A_{1}$, the second inequality is based on Theorem~\ref{thm: bounds for A3}, and the third inequality is based on estimation error.
\end{itemize}

Combining all results, we have 
\begin{equation*}
    A_{31} \leq C \log\left(\frac{4}{\delta}\right)n^{-\frac{m}{2m+d}}
\end{equation*}
with probability $1-\frac{\delta}{2}$. 

Turning to $A_{32}$, with the properties of filter function, we have 
\begin{equation*}
    \left\| \left( T_{K,n} + \lambda I  \right)^{\frac{1}{2}} \psi_{\lambda}(T_{K,n})\right\|_{op}  \leq \sup_{z \in\left[0, \kappa^2\right]}(z + \lambda)^{\frac{1}{2}}\psi_{\lambda}(z)  \leq \sup_{z \in\left[0, \kappa^2\right]}(z^{\frac{1}{2}} + \lambda^{\frac{1}{2}})\psi_{\lambda}(z)  \leq 2 F_{\tau} \lambda^{\frac{1}{2}}.
\end{equation*}
Thus,
\begin{equation*}
\begin{aligned}
    A_{32} & \leq \left\| \left( T_{K,n} + \lambda I  \right)^{\frac{1}{2}} \psi_{\lambda}(T_{K,n})\right\|_{op} \cdot \left\| \phi_{\lambda}(T_{K})T_{K}(f^{*}) \right\|_{\mcH_{K}} \\
    & \leq 2 F_{\tau} \lambda^{\frac{1}{2}} \left\| \phi_{\lambda}(T_{K})T_{K} \right\|_{op} \left\| f^{*} \right\|_{L^{2}}\\
    & \leq 2 F_{\tau} \lambda^{\frac{1}{2}} E \left\| f^{*} \right\|_{H^{m}}.
\end{aligned}
\end{equation*}
Notice having $\lambda \asymp \exp\{-Cn^{\frac{2}{2m+d}}\}$, we have $A_{32} = o(A_{31})$. Finally, combining the bounds on $A_{1},A_{2},A_{3}$, we finish the proof.
\end{proof}

\begin{proof}[Proof of Theorem~\ref{thm: non-adaptive rate of SA with Gaussian}]
For approximation error, applying Theorem~\ref{thm: approximation error for SA with Gaussian} leads to
\begin{equation*}
    \|f_{\lambda} - f^{*}\|_{L^{2}}^{2} \leq C_{1} \log(\frac{1}{\lambda})^{-m} \|f^{*}\|_{H^{m}}^{2}. 
\end{equation*}
Then selecting $\log(1/\lambda) \asymp n^{\frac{2}{2m+d}}$ leads to 
\begin{equation*}
        \|f_{\lambda} - f^{*}\|_{L^{2}}^{2} \leq C_{1} n^{-\frac{2m}{2m+d}} \|f^{*}\|_{H^{m}}^{2}. 
\end{equation*}
For estimation error, applying Theorem~\ref{thm: estimation error for SA with Gaussian} provides, with probability $1-\delta$, 
\begin{equation*}
    \left \| \hat{f} - f_{\lambda}  \right\|_{L^{2}}^{2} \leq C_{2} \left(\log \frac{4}{\delta} \right)^{2}  n^{-\frac{2m}{2m+d}}.
\end{equation*}
Combining both results finishes the proof, i.e., we have
\begin{equation*}
    \left \| \hat{f} - f^{*}  \right\|_{L^{2}}^{2} \leq C_{3} \left(\log \frac{4}{\delta} \right)^{2}  n^{-\frac{2m}{2m+d}},
\end{equation*}
with probability $1-\delta$, and $C_{3} \propto \sigma^{2} + \|f^{*}\|_{H^{m}}^{2}$
\end{proof}

\subsection{Proof of Theorem~\ref{thm: adaptive rate of SA with Gaussian}}
\begin{proof}
    To simplify the notation, for a given smoothness $m$ and sample size $n$, we define 
    \begin{equation*}
         \psi_{n}(m) = \left( \frac{n}{\logn} \right)^{-\frac{2m}{2m + d}}.
    \end{equation*}
    First, we show that it is sufficient to consider the true Sobolev smoothness $m$ in the finite candidate set $\mcA = \{m_{1},\cdots, m_{N}\}$ with $ m_{j} - m_{j-1}\asymp 1/\logn$. If $m \in (m_{j-1}, m_{j})$, then by the embedding of Sobolev spaces, we have $H^{m_{j}} \subset H^{m} \subset H^{m_{j-1}}$. Therefore, since $\psi_{n}(m)$ is squeezed between $\psi_{n}(m_{j-1})$ and $\psi_{n}(m_{j})$, it remains to show $\psi_{n}(m_{j-1}) \asymp \psi_{n}(m_{j})$. By the definition of $\psi_{n}(m)$, the claim follows since
    \begin{equation*}
        \log \frac{\psi_{n}(m_{j-1})}{\psi_{n}(m_{j})} = \left( -\frac{2m_{j-1}}{2m_{j-1}+d} + \frac{2m_{j}}{2m_{j} + d} \right) \log \frac{n}{\logn} \asymp (m_{j} - m_{j-1})\logn \asymp 1.
    \end{equation*}
    Therefore, we can safely assume $f^{*} \in H^{m_{i}}$ where $i\in \{1,2,\cdots,N\}$. 

    Let $n_{0} = \lfloor \frac{n}{2} + 1 \rfloor$, i.e. $n_{0}\geq \frac{n}{2}$, by Theorem~\ref{thm: non-adaptive rate of SA with Gaussian}, for some constants $C$ that doesn't depend on $n$, we have
    \begin{equation}\label{eqn: eqn1 in adaptive estimator via TV}
    \begin{aligned}
        \left\| \hat{f}_{\lambda_{m},\mcD_{1}} - f^{*} \right\|_{L^{2}}^{2} & \leq \left( \log\frac{4}{\delta} \right)^2 \left(  \text{E}(\lambda_{m},n_{0}) + \text{A}(\lambda_{m},n_{0}) \right)\\
    \end{aligned}
    \end{equation}
    for all $m \in \mcA$ simultaneously with probability at least $1 - N \delta$. Here, $\text{E}(\lambda,n)$ and $\text{A}(\lambda,n)$ denote the estimation and approximation error that depends on the regularization parameter $\lambda$ and sample size $n$ in non-adaptive rate proof. 
    
    Furthermore, by Theorem 7.2 in \citet{steinwart2008support} and Assumption~\ref{assumption: error tail}, we have 
    \begin{equation}\label{eqn: eqn2 in adaptive estimator via TV}
    \begin{aligned}
         \left\|\hat{f}_{\hat{\lambda}} - f^{*} \right\|_{L^{2}}^{2} & < 6 \left( \inf_{m \in \mcA}  \left\|\hat{f}_{\lambda_{m}} - f^{*} \right\|_{L^{2}}^{2} \right) + \frac{128\sigma^{2}L^2 \left(\log\frac{1}{\delta} + \log(1 + N)\right)}{n-n_{0}}\\
        & < 6 \left( \inf_{m \in \mcA} \left\|\hat{f}_{\lambda_{m}} - f^{*} \right\|_{L^{2}}^{2} \right) + \frac{512 \sigma^{2}L^2 \left(\log\frac{1}{\delta} + \log(1 + N)\right)}{n}
    \end{aligned}
    \end{equation}
    with probability $1 - \delta$, where the last inequality is based on the fact that $n-n_{0} \geq \frac{n}{2} - 1 \geq \frac{n}{4}$.
    
    Combining (\ref{eqn: eqn1 in adaptive estimator via TV}) and (\ref{eqn: eqn2 in adaptive estimator via TV}), we have 
    \begin{equation*}
    \begin{aligned}
        \left\|\hat{f}_{\hat{\lambda}} - f^{*} \right\|_{L^{2}}^{2}
        & < 6 \left( \log\frac{4}{\delta} \right)^2 \left(  \inf_{m \in \mcA}  \text{E}(\lambda_{m},n_{0}) + \text{A}(\lambda_{m},n_{0}) \right) + \frac{512 \sigma^{2}L^2 \left(\log\frac{1}{\delta} + \log(1 + N)\right)}{n} \\
        & \leq 6 C \left( \log\frac{4}{\delta} \right)^2 n_{0}^{-\frac{2m}{2m + d}} + \frac{512 \sigma^{2}L^2 \left(\log\frac{1}{\delta} + \log(1 + N)\right)}{n} \\
        & \leq 12 C \left( \log\frac{4}{\delta} \right)^2 n^{-\frac{2m}{2m + d}} + \frac{512 \sigma^{2}L^2 \left(\log\frac{1}{\delta} + \log(1 + N)\right)}{n} \\
    \end{aligned}
    \end{equation*}
    with probability at least $1 - (1+N)\delta$. With a variable transformation, we have 
    \begin{equation}\label{eqn: eqn3 in adaptive estimator via TV}
        \left\|\hat{f}_{\hat{\lambda}}  - f^{*} \right\|_{L^{2}}^{2} 
         \leq 12 C \left( \log\frac{4(1+N)}{\delta} \right)^2 n^{-\frac{2m}{2m + d}} + \frac{512 \sigma^{2}L^2 \left(\log\frac{1+N}{\delta} + \log(1 + N)\right)}{n} 
    \end{equation}
    with probability $1-\delta$. Therefore, for the first term 
    \begin{equation}\label{eqn: eqn4 in adaptive estimator via TV}
    \begin{aligned}
        12 C \left( \log\frac{4(1+N)}{\delta} \right)^2 n^{-\frac{2m}{2m + d}} & \leq 24C \left\{\left(\log\frac{4}{\delta} \right)^2 \log^{2}(1+N) + 1 \right\}n^{-\frac{2m}{2m + d}} \\
        & \leq 24C' \left(\log\frac{4}{\delta} \right)^2  \left( \frac{n}{\logn}\right)^{-\frac{2m}{2m + d}} + 24 C n^{-\frac{2m}{2m + d}}
    \end{aligned}
    \end{equation}
    where the first inequality is based on the fact that $a+b<ab+1$ for $a,b>1$, while the second inequality is based on the fact that $\log(x) \leq x^{\frac{m}{2m + d}}$ for some $n$ such that $\log(\logn) / \logn < 1/4$. For the second term,
    \begin{equation}\label{eqn: eqn5 in adaptive estimator via TV}
    \begin{aligned}
        \frac{512 \sigma^{2}L^2 \left(\log\frac{1+N}{\delta} + \log(1 + N)\right)}{n} & \leq \frac{512 \sigma^{2}L^2 \left(\log\frac{1}{\delta} + 1 + 2\log(1 + N)\right)}{n} \\
        &\leq \frac{512 \sigma^{2}L^2 \left(\log\frac{1}{\delta} + 1 + 2\logn\right)}{n}
    \end{aligned}
    \end{equation}
    The proof is finished by combining (\ref{eqn: eqn3 in adaptive estimator via TV}), (\ref{eqn: eqn4 in adaptive estimator via TV}) and (\ref{eqn: eqn5 in adaptive estimator via TV}).
\end{proof}

\subsection{Supporting Theorem}\label{apd subsec: proof of theorems for SA with Gaussian}

\begin{theorem}\label{thm: bounds for A3}
    Suppose that Assumptions in the estimation error theorem hold. We have
    \begin{equation*}
         \left\| \left( T_{K} + \lambda I  \right)^{-\frac{1}{2}} \left( g_{n}  - \left( T_{K,n} + \lambda I  \right)^{-1}f_{\lambda} \right)  \right\|_{\mcH_{K}} \leq C \log\left(\frac{4}{\delta}\right) n^{-\frac{m}{2m+d}} 
    \end{equation*}
    where $C$ is a universal constant.
\end{theorem}
\begin{proof}
    Denote
    \begin{equation*}
    \begin{aligned}
        & \xi_{i} = \xi(x_i,y_i) = \left( T_{K} + \lambda I \right)^{-\frac{1}{2}}(K_{x_i}y_i - T_{K,x_i}f_{\lambda}) \\
        & \xi_{x} = \xi(x,y) = \left( T_{K} + \lambda I \right)^{-\frac{1}{2}}(K_{x}y - T_{K,x}f_{\lambda}),
    \end{aligned}
    \end{equation*}
    then it is equivalent to show 
    \begin{equation*}
        \left\| \frac{1}{n}\sum_{i=1}^{n} \xi_{i}  - \E \xi_{x} \right\|_{\mcH_{K}} \leq C \log\left(\frac{4}{\delta}\right) n^{-\frac{m}{2m+d}} 
    \end{equation*}
    Since $f^{*}\in H^{m}(R)$ with $m>d/2$ and $\mcX$ has a Lipschitz boundary, the Sobolev embedding $H^{m}(\mcX)\hookrightarrow L^{\infty}(\mcX)$ gives $\|f^{*}\|_{L^{\infty}}\leq C_{\mathrm{emb}}\|f^{*}\|_{H^{m}}\leq C_{\mathrm{emb}}R =: B_{\infty} < \infty$. Denoting
    \begin{equation*}
        \left\| \left( T_{K} + \lambda I  \right)^{-\frac{1}{2}} \left( g_{n}  - \left( T_{K,n} + \lambda I  \right)^{-1}f_{\lambda} \right)  \right\|_{\mcH_{K}} = \underbrace{\left\|\frac{1}{n} \sum_{i=1}^n \xi_i -\E \xi_x \right\|_{\mcH_{K}}}_{B_{1}},
    \end{equation*}
    it remains to bound $B_{1}$.
    Applying Theorem~\ref{theorem: bounds on B1}, for any $\delta \in (0,1)$, with probability $1-\delta$, we have
    \begin{equation}\label{eqn: upper bound for B1}
        B_{1} \leq \log\left(\frac{2}{\delta}\right) 
        \left( \frac{C_{1}\sqrt{\mcN(\lambda)}}{n} \tilde{M} + \frac{C_{2}\sqrt{\mcN(\lambda)}}{\sqrt{n}} + \frac{C_{1}\log(\frac{1}{\lambda})^{-\frac{m}{2}}\sqrt{\mcN(\lambda)}}{\sqrt{n}}  \right)
    \end{equation}
    where $C_{1} = 8\sqrt{2}$, $C_{2} = 8\sigma$ and $\tilde{M} = L + M = L + \sqrt{2}\,E\,E_{K}\sqrt{\mcN(\lambda)}\,\|f^{*}\|_{L^{2}} + B_{\infty}$. By choosing $\lambda \asymp \exp\{ -C n^{\frac{2}{2m+d}}\}$ and applying Lemma \ref{lemma: bound for effective dimension}, we have
    \begin{itemize}
        \item for the second term in \eqref{eqn: upper bound for B1},
        \begin{equation*}
            \frac{C_{2}\sqrt{\mcN(\lambda)}}{\sqrt{n}} \asymp n^{-\frac{m}{2m+d}}.
        \end{equation*}

        \item for the third term in \eqref{eqn: upper bound for B1},
        \begin{equation*}
             \frac{C_{1}\log(\frac{1}{\lambda})^{-\frac{m}{2}}\sqrt{\mcN(\lambda)}}{\sqrt{n}} \lesssim  \frac{C_{2}\sqrt{\mcN(\lambda)}}{\sqrt{n}} \asymp n^{-\frac{m}{2m+d}}.
        \end{equation*}

        \item for the first term in \eqref{eqn: upper bound for B1}, using $\mcN(\lambda)\asymp n^{\frac{d}{2m+d}}$ and that $B,L$ are constants,
        \begin{equation*}
            \frac{C_{1}\sqrt{\mcN(\lambda)}}{n} \tilde{M}\leq \frac{C_1 (L+B_{\infty}) \sqrt{\mcN(\lambda)}}{n} + \frac{C_1 \sqrt{2} E E_{K} \|f^{*}\|_{L^{2}}\, \mcN(\lambda)}{n}  \lesssim \frac{\mcN(\lambda)}{n} = n^{-\frac{2m}{2m+d}},
        \end{equation*}
        which is faster than the target rate $n^{-\frac{m}{2m+d}}$.
    \end{itemize}
    Combining all facts, with probability $1-\delta$ we have
    \begin{equation*}
        \left\| \frac{1}{n}\sum_{i=1}^{n} \xi_{i}  - \E \xi_{x} \right\|_{\mcH_{K}} = B_{1} \leq C \log\left(\frac{2}{\delta}\right) n^{-\frac{m}{2m+d}},
    \end{equation*}
    which is the desired estimate.
\end{proof}

\begin{theorem}\label{theorem: bounds on B1}
    Under the same conditions as Theorem~\ref{thm: bounds for A3}, we have 
    \begin{equation*}
    \begin{aligned}
            & \left\|\frac{1}{n} \sum_{i=1}^n \xi_i -\E \xi_x \right\|_{\mcH_{K}}  \\
            & \qquad \leq \log\left(\frac{2}{\delta}\right) 
        \left( \frac{C_{1}\sqrt{\mcN(\lambda)}}{n} \tilde{M} + \frac{C_{2}\sqrt{\mcN(\lambda)}}{\sqrt{n}} + \frac{C_{1}\log(\frac{1}{\lambda})^{-\frac{m}{2}}\sqrt{\mcN(\lambda)}}{\sqrt{n}}  \right)
    \end{aligned}
    \end{equation*}
    where $C_{1} = 8\sqrt{2}$, $C_{2} = 8\sigma$ and $\tilde{M} = L + M = L + \sqrt{2}\,E\,E_{K}\sqrt{\mcN(\lambda)}\,\|f^{*}\|_{L^{2}} + B_{\infty}$.
\end{theorem}

\begin{proof}
    The objective can be typically controlled by certain concentration inequality. Therefore, to leverage Bernstein inequality, i.e., Lemma \ref{lemma: Bernstein inequality}, we first bound the $m$-th moment of $\xi_{x}$.
    \begin{equation*}
    \begin{aligned}
    \E\left\|\xi_{x} \right\|_{\mcH_{K}}^m & =\E\left\|\left( T_{K} + \lambda I \right)^{-\frac{1}{2}} K_x\left(y-f_\lambda(x)\right) \right\|_{\mcH_{K}}^m \\
    & \leq \E\left(\left\|\left( T_{K} + \lambda I \right)^{-\frac{1}{2}} K(x, \cdot)\right\|_{\mcH_{K}}^m \E\left(\left|\left(y-f_\lambda(x)\right) \right|^m \mid x\right)\right) .
    \end{aligned}
    \end{equation*}
    Using the inequality $(a+b)^m \leq 2^{m-1}\left(a^m+b^m\right)$, we have
    \begin{equation*}
    \begin{aligned}
    \left|y-f_\lambda(x)\right|^m & \leq 2^{m-1}\left(\left|f_\lambda(x)-f^{*}(x)\right|^m+\left|f^{*}(x)-y\right|^m\right) \\
    & =2^{m-1}\left(\left|f_\lambda(x)-f^{*}(x)\right|^m+|\epsilon|^m\right) .
    \end{aligned}
    \end{equation*}
    Combining the inequalities, we have
    \begin{equation*}
    \begin{aligned}
     \E\left\|\xi_{x} \right\|_{\mcH_{K}}^m & \leq \underbrace{2^{m-1} \E\left(\left\|\left( T_{K} + \lambda I \right)^{-\frac{1}{2}} K(x, \cdot)\right\|_{\mcH_{K}}^m\left|f_\lambda(x)-f^{*}(x)\right|^m\right)}_{C_{1}} \\
    &\quad + \underbrace{2^{m-1} \E\left(\left\|\left( T_{K} + \lambda I \right)^{-\frac{1}{2}} K(x, \cdot)\right\|_{\mcH_{K}}^m \E\left(\left|\epsilon \right|^m \mid x\right)\right)}_{C_{2}}.
    \end{aligned}
    \end{equation*}
    We first focus on $C_{2}$, by Lemma \ref{lemma: bound for kernel function}, we have
    \begin{equation*}
        \E\left\|\left( T_{K} + \lambda I \right)^{-\frac{1}{2}} K(x, \cdot)\right\|_{\mcH_{K}}^m \leq \left( E_{K}^{2} \mcN(\lambda) \right)^{\frac{m}{2}}.
    \end{equation*}
    By the error moment assumption, we have 
    \begin{equation*}
        \E\left(\left|\epsilon \right|^m \mid x\right) \leq \frac{1}{2}m! \sigma^2 L^{m-2},
    \end{equation*}
    together, we have 
    \begin{equation}\label{eqn: bound on A2}
        C_{2} \leq \frac{1}{2} m! \left(\sqrt{2}\sigma \sqrt{\mcN(\lambda)} \right)^{2} \left(2L \mcN(\lambda)\right)^{m-2}.
    \end{equation}
    Turning to bounding $C_{1}$, we first control the $L^{\infty}$ scale $\| f_{\lambda} - f^{*} \|_{L_{\infty}} \leq \|f_{\lambda}\|_{L^{\infty}}+\|f^{*}\|_{L^{\infty}}$. For the intermediate term, expanding $f^{*}=\sum_{j}a_{j}e_{j}$ gives $f_{\lambda}=\sum_{j}s_{j}\phi_{\lambda}(s_{j})a_{j}e_{j}$, and by Cauchy--Schwarz,
    \begin{equation*}
    \begin{aligned}
        |f_{\lambda}(x)|
        & \;\leq\; \left( \sum_{j} \frac{s_{j}}{s_{j}+\lambda} e_{j}^{2}(x) \right)^{\frac{1}{2}} \left( \sum_{j} (s_{j}+\lambda) s_{j} \phi_{\lambda}(s_{j})^{2} a_{j}^{2} \right)^{\frac{1}{2}} \\
        & \;\leq\; \left\| \left( T_{K} + \lambda I \right)^{-\frac{1}{2}} K(x,\cdot) \right\|_{\mcH_{K}} \cdot \sqrt{2}\,E\, \| f^{*} \|_{L^{2}}
        \;\leq\; \sqrt{2}\, E\, E_{K} \sqrt{\mcN(\lambda)}\, \| f^{*} \|_{L^{2}},
    \end{aligned}
    \end{equation*}
    where the second factor uses the filter property $(s_{j}+\lambda)s_{j}\phi_{\lambda}(s_{j})^{2}\leq 2E^{2}$ and the last step applies Lemma~\ref{lemma: bound for kernel function} (under Assumption~\ref{assumption: assumption on Gaussian kernel}\ref{assumption: bounds on effective dimension}; under \ref{assumption: bounds on adjusted eigenfunction}, $\mcN(\lambda)$ is replaced by $\mcN_{h}(\lambda)$ via Lemma~\ref{lemma: bound for kernel function under adjusted eigenfunction}). Hence
    \begin{equation*}
        \left\| f_{\lambda} - f^{*} \right\|_{L_{\infty}} \;\leq\; \sqrt{2}\, E\, E_{K} \sqrt{\mcN(\lambda)}\, \| f^{*} \|_{L^{2}} + B_{\infty} \;=: M,
    \end{equation*}
    where $\|f^{*}\|_{L^{2}}\leq R$ and $B_{\infty} = \|f^{*}\|_{L^{\infty}}\leq C_{\mathrm{emb}}R$ are constants independent of $n$.
    With bounds on approximation error, we get the upper bound for $C_{1}$ as
    \begin{equation}\label{eqn: bound on A1}
    \begin{aligned}
        C_{1} & \leq 2^{m-1} \mcN(\lambda)^{\frac{m}{2}}  \left\| f_{\lambda} - f^{*} \right\|_{L_{\infty}}^{m-2}  \left\| f_{\lambda} - f^{*} \right\|_{L^{2}}^2\\
        & \leq 2^{m-1} \mcN(\lambda)^{\frac{m}{2}} M^{m-2} \log(\frac{1}{\lambda})^{-m} \\
        & \leq \frac{1}{2}m! \left( 2 \log(\frac{1}{\lambda})^{-\frac{m}{2}} \sqrt{\mcN(\lambda)} \right)^2 \left( 2M \sqrt{\mcN(\lambda)}\right)^{m-2}. 
    \end{aligned}
    \end{equation}
    Denote 
    \begin{equation*}
        \begin{aligned}
            & \tilde{L} = 2(L+M)\sqrt{\mcN(\lambda)}, \\
            & \tilde{\sigma} = \sqrt{2}\sigma \sqrt{\mcN(\lambda)} + 2 \log(\frac{1}{\lambda})^{-\frac{m}{2}} \sqrt{\mcN(\lambda)},
        \end{aligned}
    \end{equation*}
    and combine the upper bounds for $C_{1}$ and $C_{2}$, i.e. (\ref{eqn: bound on A1}) and (\ref{eqn: bound on A2}), then we have 
    \begin{equation*}
         \E\left\|\xi_{x} \right\|_{\mcH_{K}}^m \leq \frac{1}{2} m! \tilde{\sigma}^2 \tilde{L}^{m-2}.
    \end{equation*}
    The proof is finished by applying Lemma \ref{lemma: Bernstein inequality}.
\end{proof}

\subsection{Supporting Lemmas}

\begin{lemma}\label{lemma: bounds for A2}
    For all $\delta \in (0,1)$, with probability at least $1-\delta$, we have 
    \begin{equation*}
        \left\| \left( T_{K} + \lambda I \right)^{-\frac{1}{2}} \left( T_{K} - T_{K,n} \right) \left( T_{K} + \lambda I \right)^{-\frac{1}{2}} \right\|_{op} \leq \frac{4\mcN(\lambda)B}{3n} + \sqrt{\frac{2\mcN(\lambda)}{n}B}
    \end{equation*}
    where 
    \begin{equation*}
        B = \log \left( \frac{4\mcN(\lambda)}{\delta} \frac{(\|T_{K}\|_{op}+\lambda)}{\|T_{K}\|_{op}} \right).
    \end{equation*}
\end{lemma}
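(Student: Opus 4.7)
The plan is to apply a Bernstein-type concentration inequality for sums of i.i.d.\ self-adjoint operators to the centered average that represents $T_{K,n}-T_K$, after sandwich conjugation by $(T_K+\lambda I)^{-1/2}$. The form of the stated bound---in particular the argument of the logarithm in $B$---is the telltale signature of Minsker's intrinsic-dimension Bernstein inequality, in which the ambient-dimension prefactor of the usual matrix Bernstein bound is replaced by $\tilde d(A):=\mathrm{tr}(A)/\|A\|_{op}$ for a suitable dominating operator $A$. Choosing $A=(T_K+\lambda I)^{-1}T_K$ gives $\mathrm{tr}(A)=\mcN(\lambda)$ and $\|A\|_{op}=\|T_K\|_{op}/(\|T_K\|_{op}+\lambda)$, so $\tilde d(A)=\mcN(\lambda)(\|T_K\|_{op}+\lambda)/\|T_K\|_{op}$, which is exactly the quantity appearing inside the log in $B$.

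First I would write
\[
(T_K+\lambda I)^{-1/2}(T_{K,n}-T_K)(T_K+\lambda I)^{-1/2} \;=\; \frac{1}{n}\sum_{i=1}^{n}\eta_i,
\]
where $\eta_i := (T_K+\lambda I)^{-1/2}(K_{x_i}^{*}K_{x_i}-T_K)(T_K+\lambda I)^{-1/2}$ are i.i.d., mean zero, and self-adjoint on $\mcH_K$. Since $K_{x_i}^{*}K_{x_i}=K_{x_i}\otimes K_{x_i}$ is rank one, its conjugated version equals $\tilde K_{x_i}\otimes \tilde K_{x_i}$ with $\tilde K_{x_i}:=(T_K+\lambda I)^{-1/2}K_{x_i}$ and $\|\tilde K_{x_i}\|_{\mcH_K}^{2}=\sum_{j}\frac{s_j}{s_j+\lambda}e_j^{2}(x_i)$. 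Assumption~\ref{assumption: bounds on effective dimension} then yields the uniform norm bound $\|\eta_i\|_{op}\le E_K^{2}\mcN(\lambda)+1$. For the variance, the identity $(a\otimes a)^{2}=\|a\|^{2}(a\otimes a)$, combined with dropping the negative $-(\E\eta_i)^{2}$ term, gives $\E[\eta_i^{2}]\preceq (\sup_x\|\tilde K_x\|^{2})\cdot\E[\tilde K_{x_i}\otimes\tilde K_{x_i}]$, hence $\|\E[\eta_i^{2}]\|_{op}\le E_K^{2}\mcN(\lambda)$.

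Finally I would invoke Minsker's intrinsic-dimension Bernstein inequality,
\[
\mbP\!\left(\left\|\tfrac{1}{n}\sum_{i=1}^{n}\eta_i\right\|_{op}\!\ge t\right) \le 4\tilde d(A)\exp\!\left(-\tfrac{nt^{2}/2}{\|\E\eta_i^{2}\|_{op}+R\,t/3}\right),
\]
with $R=\sup_i\|\eta_i\|_{op}$. Setting the right-hand side equal to $\delta$ and solving the resulting quadratic in $t$ via $\sqrt{u+v}\le\sqrt u+\sqrt v$ produces a bound of the stated shape $\tfrac{4\mcN(\lambda)B}{3n}+\sqrt{\tfrac{2\mcN(\lambda)}{n}B}$. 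The main obstacle I foresee is book-keeping on constants: matching the precise numerical prefactors $4/3$ and $\sqrt 2$ requires a careful reading of the constants in Minsker's inequality, and the factor $E_K$ from Assumption~\ref{assumption: bounds on effective dimension} must be absorbed (implicitly via $E_K=1$) or the bound must be rewritten with $E_K^{2}\mcN(\lambda)$ in place of $\mcN(\lambda)$ throughout; beyond this accounting, the argument is a standard application of operator concentration to regularized sample covariance operators.
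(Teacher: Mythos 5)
Your proposal is correct and follows essentially the same route as the paper: decompose the conjugated difference into i.i.d.\ centered self-adjoint rank-one perturbations, bound their operator norm and second moment via Assumption~\ref{assumption: bounds on effective dimension} (Lemma~\ref{lemma: bound for kernel function}), and apply the intrinsic-dimension operator Bernstein inequality (the paper's Lemma~\ref{lemma: concentration inequality for self-adjoint operator}, with $V = E_K^2\mcN(\lambda)(T_K+\lambda I)^{-1}T_K$), whose $\log(4\,\mathrm{tr}(V)/(\delta\|V\|_{op}))$ term yields exactly the stated $B$. Your remark about the $E_K$ factor needing to be absorbed is accurate — the paper also silently drops it in the final constants.
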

\begin{proof}
    Denote $A_{i} = (T_{K} + \lambda I )^{-\frac{1}{2}} (T_{K} - T_{K,x_{i}}) (T_{K} + \lambda I )^{-\frac{1}{2}}$, applying Lemma \ref{lemma: bound for kernel function}, we get 
    \begin{equation*}
    \begin{aligned}
        \|A_{i}\|_{op} & \leq \left\| \left( T_{K} + \lambda I \right)^{-\frac{1}{2}} T_{K,x} \left( T_{K} + \lambda I \right)^{-\frac{1}{2}}\right\|_{op} +  \left\| \left( T_{K} + \lambda I \right)^{-\frac{1}{2}} T_{K,x_{i}} \left( T_{K} + \lambda I \right)^{-\frac{1}{2}}\right\|_{op} \\
        & \leq  2 E_{K}^{2} \mcN(\lambda)
    \end{aligned}
    \end{equation*}
    Notice  
    \begin{equation*}
    \begin{aligned}
        \E A_{i}^2  & \preceq \E \left[ \left( T_{K} + \lambda I \right)^{-\frac{1}{2}} T_{K,x_{i}} \left( T_{K} + \lambda I \right)^{-\frac{1}{2}} \right]^2 \\
        & \preceq E_{K}^{2} \mcN(\lambda ) \E \left[ \left( T_{K} + \lambda I \right)^{-\frac{1}{2}} T_{K,x_{i}} \left( T_{K} + \lambda I \right)^{-\frac{1}{2}} \right] \\
        & = E_{K}^{2} \mcN(\lambda ) (T_{K} + \lambda I )^{-1}T_{K} := V
    \end{aligned}
    \end{equation*}
    where $A\preceq B$ denotes $B-A$ is a positive semi-definite operator. Notice 
    \begin{equation*}
        \|V\|_{op} = \mcN(\lambda) \frac{\|T_{K}\|_{op}}{\|T_{K}\|_{op} + \lambda}  \leq \mcN(\lambda), \quad \text{and} \quad \operatorname{tr}(V) = \mcN(\lambda)^2.
    \end{equation*}
    The proof is finished by applying Lemma~\ref{lemma: concentration inequality for self-adjoint operator} to $A_{i}$ and $V$.
\end{proof}

\begin{lemma}\label{lemma: bound for effective dimension}
By choosing $\log(1/\lambda)\asymp n^{\frac{2}{2m+d}}$, we have 
\begin{equation*}
    \mcN(\lambda) = O\left( n^{\frac{d}{2m+d}} \right).
\end{equation*}
\end{lemma}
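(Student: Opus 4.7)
The plan is to split the defining series $\mcN(\lambda)=\sum_{j\ge 1}\frac{s_j}{s_j+\lambda}$ at a threshold index $j^{*}$ where $s_{j^{*}}\asymp \lambda$, bound the head by counting eigenvalues above $\lambda$ (each summand being at most $1$), and bound the tail via $s_j/(s_j+\lambda)\le s_j/\lambda$ followed by an integral comparison. For the $d$-dimensional Gaussian kernel the relevant ordering of eigenvalues comes from counting multi-indices $(j_1,\dots,j_d)\in\mbN^d$ inside a Euclidean ball of radius $r$ (whose cardinality is $\asymp r^d$), which turns the displayed decay into $s_j\asymp C_1\exp(-C_2 j^{2/d})$ once reindexed in decreasing order; I proceed under this interpretation, which I read as the intended form of the hypothesis.

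Solving $s_{j^{*}}=\lambda$ then yields $j^{*}\asymp (\log(1/\lambda))^{d/2}$, so the head contribution is bounded by $j^{*}=O((\log(1/\lambda))^{d/2})$. For the tail I estimate
\begin{equation*}
\frac{1}{\lambda}\sum_{j>j^{*}} e^{-C_2 j^{2/d}}\lesssim \frac{1}{\lambda}\int_{j^{*}}^{\infty} e^{-C_2 t^{2/d}}\,dt,
\end{equation*}
and via the substitution $u=C_2 t^{2/d}$ this becomes a constant multiple of the upper incomplete gamma $\Gamma(d/2,u^{*})$ with $u^{*}\asymp \log(1/\lambda)$. The classical asymptotic $\Gamma(d/2,u^{*})\asymp e^{-u^{*}}(u^{*})^{d/2-1}$ as $u^{*}\to\infty$ produces a tail bound of $O((\log(1/\lambda))^{d/2-1})$, one logarithmic factor below the head. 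Combining both pieces and inserting $\log(1/\lambda)\asymp n^{2/(2m_{0}+d)}$ from the hypothesis yields $\mcN(\lambda)=O((\log(1/\lambda))^{d/2})=O(n^{d/(2m_{0}+d)})$, as claimed.

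The main obstacle is precisely this reindexing step: taken literally as a univariate expression, $s_j=C_1 e^{-C_2 j^2}$ would give $j^{*}\asymp \sqrt{\log(1/\lambda)}$ and hence $\mcN(\lambda)=O(\sqrt{\log(1/\lambda)})=O(n^{1/(2m_{0}+d)})$, which matches the stated conclusion only when $d=1$. Recovering the full $d$-dependence in the exponent requires either working directly with the multi-index lattice (replacing the scalar sum by a $d$-dimensional integral whose volume scales as $r^d$) or invoking an explicit Weyl-type formula for the sorted eigenvalues of the Gaussian integral operator. Once that reindexing is pinned down, only routine Laplace/tail-integral estimates remain, with no further obstruction.
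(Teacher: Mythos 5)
Your computation is correct and reaches the stated conclusion, and it uses the same basic device as the paper (truncate the series at a threshold index, bound the head by the count, bound the tail by $s_j/\lambda$ plus an integral comparison). Two differences are worth noting. First, the paper takes the hypothesis $s_j \leq C_1 e^{-C_2 j^2}$ entirely at face value and splits at $J=\lfloor n^{d/(2m_0+d)}\rfloor$ rather than at the natural crossover where $s_{j^*}\asymp\lambda$; the head is then exactly the target $O(n^{d/(2m_0+d)})$, and the tail is killed because $\tfrac{1}{\lambda}e^{-C_2 J^2}=\exp\{C'n^{2/(2m_0+d)}-C_2 n^{2d/(2m_0+d)}\}$ is $O(1)$ once $C'\leq C_2$ and $d\geq 1$. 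Second, the ``obstacle'' you identify is not actually an obstacle for this lemma: under the literal univariate hypothesis your own analysis gives $\mcN(\lambda)=O(\sqrt{\log(1/\lambda)})=O(n^{1/(2m_0+d)})$, which is \emph{smaller} than $n^{d/(2m_0+d)}$ for $d\geq 1$ and therefore still implies the stated $O(n^{d/(2m_0+d)})$ bound — an upper bound is only helped by being sharper. So no reindexing to the multi-index lattice is required to discharge the lemma as written; the reinterpretation $s_j\asymp e^{-C_2 j^{2/d}}$ is better viewed as a (reasonable) critique of whether the lemma's hypothesis is the right model for a $d$-dimensional Gaussian kernel, and under that reading your crossover-plus-incomplete-gamma argument is the one that makes the exponent $d/(2m_0+d)$ tight rather than merely valid. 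Your tail estimate via $\Gamma(d/2,u^*)\sim e^{-u^*}(u^*)^{d/2-1}$ with $e^{-u^*}\asymp\lambda$ is correct.
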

\begin{proof}
For a positive integer $J\geq 1$
\begin{equation*}
\begin{aligned}
    \mcN(\lambda) & = \sum_{j=1}^{J} \frac{s_{j}}{s_{j} + \lambda} + \sum_{j = J+1}^{\infty} \frac{s_{j}}{s_{j} + \lambda} \\
    & \leq J + \sum_{j = J+1}^{\infty} \frac{s_{j}}{s_{j} + \lambda} \\
    & \leq J + \frac{C_{1}}{\lambda} \int_{J}^{\infty} \exp\{ -Cx^{2} \} dx \\
    & \leq J + \frac{1}{ \lambda} \frac{C_{1} \exp\{-C_{2} J^{2}\}}{\sqrt{2 C_{2}} J},
\end{aligned}
\end{equation*}
where we use the fact that the eigenvalue of the Gaussian kernel has a fast decay condition, i.e., $s_{j} \leq C_{1} \exp\{ -C_{2} j^{2} \}$ as stated in \citep{zhang2015divide}, and the inequality 
\begin{equation*}
    \int_{x}^{\infty} \exp\{ \frac{-t^2}{2} \}dt \leq \int_{x}^{\infty} \frac{t}{x} \exp\{ \frac{-t^2}{2} \}dt  \leq \frac{\exp\{-\frac{x^2}{2}\}}{x}.
\end{equation*}
Then select $J = \lfloor n^{\frac{d}{2 m + d}} \rfloor$ and $\lambda = \exp\{ - C^{'} n^{\frac{2}{2 m + d}}\}$ with $C'\leq C_{2}$ leads to 
\begin{equation*}
    \mcN(\lambda) = O\left( n^{\frac{d}{2m+d}} \right).
\end{equation*}
\end{proof}

\begin{lemma}\label{lemma: bound for kernel function}
    Suppose Assumption~\ref{assumption: assumption on Gaussian kernel}\ref{assumption: bounds on effective dimension} holds. Then, for $Q_{X}$-almost $x\in \mcX$, we have 
    \begin{equation*}
        \left\| \left( T_{K} + \lambda I \right)^{-\frac{1}{2}} K(x,\cdot) \right\|_{\mcH_{K}}^{2} \leq E_{K}^{2} \mcN(\lambda), \quad \text{and}\quad \E \left\| \left( T_{K} + \lambda I \right)^{-\frac{1}{2}} K(x,\cdot) \right\|_{\mcH_{K}}^{2} \leq \mcN(\lambda).
    \end{equation*}
    For some constant $E_K$. Consequently, we also have 
    \begin{equation*}
        \left\| \left( T_{K} + \lambda I \right)^{-\frac{1}{2}} T_{K,x} \left( T_{K} + \lambda I \right)^{-\frac{1}{2}}\right\|_{op} \leq E_{K}^{2} \mcN(\lambda).
    \end{equation*}
\end{lemma}
\begin{proof}
    For the first inequality, we have 
    \begin{equation*}
    \begin{aligned}
        \left\| \left( T_{K} + \lambda I \right)^{-\frac{1}{2}} K(x,\cdot) \right\|_{\mcH_{K}}^{2} & =  \left\| \sum_{j=1} \frac{1}{\sqrt{s_{j}+\lambda}} s_{j} e_{j}(x)e_{j}(\cdot)\right\|_{\mcH_{K}}^{2}\\
        & = \sum_{j=1}^{\infty} \frac{s_{j}}{s_{j} + \lambda} e_{j}^{2}(x)\\
        & \leq E_{K}^{2} \mcN(\lambda),
    \end{aligned}
    \end{equation*}
    where the first equality is by Mercer Theorem and the inequality is by Assumption~\ref{assumption: assumption on Gaussian kernel}\ref{assumption: bounds on effective dimension}.
    The second inequality follows given the fact that $\E e_{j}^{2}(x) = 1$. The third inequality comes from the observation that for any $f\in \mcH_{K}$
    \begin{equation*}
         \left( T_{K} + \lambda I \right)^{-\frac{1}{2}} T_{K,x} \left( T_{K} + \lambda I \right)^{-\frac{1}{2}} (f) = \left \langle  \left( T_{K} + \lambda I \right)^{-\frac{1}{2}} K(x,\cdot), f \right\rangle_{\mcH_{K}}  \left( T_{K} + \lambda I \right)^{-\frac{1}{2}} K(x,\cdot),
    \end{equation*}
    and
    \begin{equation*}
    \begin{aligned}
        \left\| \left( T_{K} + \lambda I \right)^{-\frac{1}{2}} T_{K,x} \left( T_{K} + \lambda I \right)^{-\frac{1}{2}}\right\|_{op} & = \sup_{\|f\|_{\mcH_{k}}=1}\|\left( T_{K} + \lambda I \right)^{-\frac{1}{2}} T_{K,x} \left( T_{K} + \lambda I \right)^{-\frac{1}{2}} (f)\|_{\mcH_{K}} \\
         & = \left\| \left( T_{K} + \lambda I \right)^{-\frac{1}{2}} K(x,\cdot) \right\|_{\mcH_{K}}^{2}.
    \end{aligned}
    \end{equation*}
\end{proof}

The following lemma provides the concentration inequality about self-adjoint Hilbert-Schmidt operator-valued random variables, which is widely used in related kernel method literature, e.g., Theorem 27 in \citet{fischer2020sobolev}, Lemma 26 in \citet{lin2020optimal} and Lemma 32 in \citet{zhang2024optimality}.

\begin{lemma} \label{lemma: concentration inequality for self-adjoint operator}
    Let $(\mcX,\mcB, \mu)$ be a probability space, and $\mcH$ be a separable Hilbert space. Suppose $A_{1},\cdots, A_{n}$ are i.i.d. random variables whose values are in the set of self-adjoint Hilbert-Schmidt operators. If $\E A_{i} = 0$ and the operator norm $\|A_{i}\|\leq L$ $\mu$-a.e. $x\in \mcX$, and there exists a self-adjoint positive semi-definite trace class operator $V$ with $\E A_{i}^2 \preceq V$. Then for $\delta\in(0,1)$, with probability at least $1-\delta$, we have 
    \begin{equation*}
        \left\| \frac{1}{n}\sum_{i=1}^{n} A_{i} \right\| \leq \frac{2L\beta}{3n} + \sqrt{\frac{2\|V\|\beta}{n}}, 
    \end{equation*}
    where $\beta = \log( 4\text{tr}(V) / \delta \|V\| )$.
\end{lemma}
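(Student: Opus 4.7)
The target is a Hilbert-space operator Bernstein inequality of Minsker--Tropp type, in which the intrinsic dimension $\mathrm{tr}(V)/\|V\|$ plays the role of the ambient dimension. My plan is to follow the standard matrix Laplace transform route: convert the tail event into an MGF/trace estimate via the operator Chernoff bound and Lieb's concavity, control the MGF using an $(L,V)$-Bernstein cumulant bound, then handle infinite dimensions by trace-class truncation.

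The first step is to derive a Bernstein-type cumulant bound. Because $\|A_i\|_{\mathrm{op}} \le L$ almost surely and $\E A_i^2 \preceq V$, one has $\E A_i^k \preceq \tfrac{1}{2} k!\, L^{k-2} V$ for every integer $k \ge 2$, using $A_i^k \preceq \|A_i\|^{k-2} A_i^2$ and the positivity of $V$. Summing the exponential Taylor series gives, for any $0 < \theta < 3/L$,
\begin{equation*}
\log \E e^{\theta A_i} \preceq \frac{\theta^{2}/2}{1-\theta L/3}\, V .
\end{equation*}

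The second step is the Chernoff/trace step and the inversion. Let $S_n = n^{-1} \sum_{i=1}^{n} A_i$. Applying the operator Laplace transform together with the (Hilbert-space extension of the) Lieb concavity theorem,
\begin{equation*}
\mathbb{P}\big(\lambda_{\max}(S_n) \ge t\big) \;\le\; e^{-n\theta t}\, \mathrm{tr}\,\exp\!\Big( n \log \E e^{\theta A_1}\Big).
\end{equation*}
Inserting the cumulant bound from Step 1 and using $\mathrm{tr}\, f(X) \le (\mathrm{tr}(X)/\|X\|)\, f(\|X\|)$ for convex monotone $f$ with $f(0)=0$ on positive trace-class $X$ (the device that upgrades the usual matrix bound to an intrinsic-dimension bound) I obtain
\begin{equation*}
\mathbb{P}\big(\lambda_{\max}(S_n) \ge t\big) \;\le\; \frac{\mathrm{tr}(V)}{\|V\|}\, \exp\!\left( - \frac{n t^{2}}{2\|V\| + \tfrac{2}{3} L t} \right).
\end{equation*}
Running the same argument on $\{-A_i\}$ and union-bounding controls $\|S_n\|_{\mathrm{op}}$ at the cost of an extra factor of $2$ in front, which gets absorbed (together with the prefactor $\mathrm{tr}(V)/\|V\|$) into $\beta = \log(4\,\mathrm{tr}(V)/(\delta\|V\|))$. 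Setting the right-hand side equal to $\delta$ yields the quadratic $n t^{2} = (2\|V\| + \tfrac{2}{3} L t)\beta$, whose positive root is bounded by $\tfrac{2L\beta}{3n} + \sqrt{2\|V\|\beta/n}$ via $\sqrt{a+b}\le \sqrt{a}+\sqrt{b}$, which is exactly the stated inequality.

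The main obstacle is making Lieb's step rigorous when $\mcH$ is infinite dimensional, since $\mathrm{tr}\, e^{\theta S_n}$ need not be finite and Lieb's concavity was originally proved in the matrix setting. The remedy is a truncation argument: spectrally decompose $V = \sum_j \mu_j\, e_j \otimes e_j$, let $P_K$ be the projection onto $\mathrm{span}\{e_1,\dots,e_K\}$, and apply the finite-dimensional matrix Bernstein inequality of Tropp to $P_K A_i P_K$ inside this subspace. Because $V$ is trace-class, $\mathrm{tr}(V_K)\uparrow \mathrm{tr}(V)<\infty$, and both the cumulant bound and the trace Chernoff estimate pass to the limit $K\to\infty$ by monotone convergence while the complement contribution vanishes because $A_i$ lives essentially on the range of $V^{1/2}$ (up to a negligible residual driven by $\|V - V_K\|_{\mathrm{op}}\to 0$). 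The intrinsic-dimension prefactor $\mathrm{tr}(V)/\|V\|$ is preserved in the limit, giving the stated bound with the $\beta$ as defined.
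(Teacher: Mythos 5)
The paper does not actually prove this lemma: it is quoted as a known result, with the proof deferred to the literature (Theorem 27 of \citet{fischer2020sobolev}, Lemma 26 of \citet{lin2020optimal}, Lemma 32 of \citet{zhang2023optimality}, all of which trace back to Minsker's intrinsic-dimension operator Bernstein inequality). Your proposal reconstructs exactly that standard proof -- moment/cumulant bound, operator Chernoff via Lieb, the intrinsic-dimension trace lemma, and a finite-rank truncation to make sense of $\mathrm{tr}\,e^{\theta S_n}$ in infinite dimensions -- so the route is sound and the key ideas are all present. In particular, you correctly identify the two places where the infinite-dimensional setting bites (the trace of the exponential is infinite, and Lieb's theorem is a matrix statement), and the truncation fix works: since $\E A_i^2 \preceq V$ forces $A_i$ to vanish a.s.\ on $\ker V$ and each $A_i$ is Hilbert--Schmidt (hence compact), $\|P_K S_n P_K\| \to \|S_n\|$ pathwise, and Fatou transfers the finite-dimensional bound to the limit with the intrinsic dimension $\mathrm{tr}(V)/\|V\|$ preserved.

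Two bookkeeping points you should tighten. First, the moment bound should be $\E A_i^k \preceq L^{k-2} V$ (from $\lambda^k \le |\lambda|^{k-2}\lambda^2$ on the spectrum), \emph{without} the $\tfrac{1}{2}k!$ factor; with the factor you wrote, summing the Taylor series yields the denominator $1-\theta L$ rather than $1-\theta L/3$, and the resulting deviation term would be $2L\beta/n$ instead of the stated $2L\beta/(3n)$. The $1-\theta L/3$ form comes from $\sum_{k\ge 2} x^{k-2}/k! \le \tfrac{1}{2}(1-x/3)^{-1}$ applied after the unweighted moment bound. Second, the intrinsic-dimension tail bound with prefactor exactly $\mathrm{tr}(V)/\|V\|$ and no restriction on $t$ is slightly optimistic: the sharp statements (Minsker, Tropp Thm.~7.3.1) carry either an extra constant factor or a side condition of the form $t \gtrsim \sqrt{\|V\|/n} + L/(3n)$. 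Neither issue is fatal here, because the value of $t$ obtained by inverting at level $\delta$ automatically satisfies that side condition (as $\beta \ge \log 4 > 1$), and the extra constants are absorbed into the $4$ inside $\beta = \log(4\,\mathrm{tr}(V)/(\delta\|V\|))$, which is precisely why the cited versions of the lemma carry that factor.
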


\begin{lemma}(Bernstein inequality)\label{lemma: Bernstein inequality}
Let $(\Omega, \mathcal{B}, P)$ be a probability space, $H$ be a separable Hilbert space, and $\xi: \Omega \rightarrow H$ be a random variable with
\begin{equation*}
    \E\|\xi\|_H^m \leq \frac{1}{2} m ! \sigma^2 L^{m-2},
\end{equation*}
for all $m>2$. Then for $\delta \in(0,1)$, $\xi_i$ are i.i.d. random variables, with probability at least $1-\delta$, we have
\begin{equation*}
    \left\|\frac{1}{n} \sum_{i=1}^n \xi_i-\E \xi\right\|_H \leq 4 \sqrt{2} \log \left(\frac{2}{\delta}\right)\left(\frac{L}{n}+\frac{\sigma}{\sqrt{n}}\right).
\end{equation*}
\end{lemma}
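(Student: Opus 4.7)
The statement is a Hilbert-space-valued Bernstein-type concentration inequality under the standard Bernstein moment growth condition, and the plan is to follow the classical Chernoff/Laplace-transform argument, in the spirit of Pinelis--Sakhanenko, adapted to Hilbert-valued sums. First I would reduce to the centered case by replacing $\xi_i$ with $\eta_i := \xi_i - \E\xi$ and checking that the growth condition $\E\|\eta_i\|_H^m \leq \tfrac12 m!\tilde\sigma^2 \tilde L^{m-2}$ still holds with harmless rescaled constants $\tilde\sigma, \tilde L$, so that the remaining task is to bound $\|\frac{1}{n}\sum_{i=1}^n \eta_i\|_H$ for mean-zero i.i.d.\ $\eta_i$.

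Next I would establish a one-variable MGF estimate: for $0 < \lambda < 1/L$, expanding $\cosh$ as a power series and plugging in the moment hypothesis telescopes the geometric series to give
\begin{equation*}
\E \cosh(\lambda\|\eta_i\|_H) \leq 1 + \sum_{k\geq 1}\frac{\lambda^{2k}}{(2k)!}\E\|\eta_i\|_H^{2k} \leq 1 + \frac{\lambda^2\sigma^2}{2(1-\lambda L)}.
\end{equation*}
Pinelis' symmetrization for Hilbert-valued martingale differences (which compares $\|\sum_i \eta_i\|_H$ to a one-dimensional sum via a Rademacher-sign/$\cosh$ comparison) combined with independence then lifts this to
\begin{equation*}
\E\exp\!\Bigl(\lambda \bigl\| \textstyle\sum_i \eta_i\bigr\|_H\Bigr) \leq 2\exp\!\left(\frac{n\lambda^2\sigma^2}{2(1-\lambda L)}\right).
\end{equation*}

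From here the plan is standard Bernstein bookkeeping: apply Markov's inequality to the exponential, optimize over $\lambda \in (0,1/L)$ to obtain a tail bound of the form $\mbP(\|n^{-1}\sum_i \eta_i\|_H > t) \leq 2\exp\!\bigl(-nt^2/[2(\sigma^2 + Lt)]\bigr)$, and then invert in $t$ for the confidence level $\delta$, using $\sqrt{a+b}\leq\sqrt{a}+\sqrt{b}$ to separate the sub-Gaussian regime (contributing $\sigma\sqrt{2\log(2/\delta)/n}$) from the sub-exponential regime (contributing $2L\log(2/\delta)/n$); absorbing numerical constants yields the stated $4\sqrt{2}\log(2/\delta)(L/n + \sigma/\sqrt n)$ bound.

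The hard part will be the Hilbert-valued MGF step, since $\E\exp(\lambda\|\cdot\|_H)$ does not factor across independent summands the way a real-valued MGF does; one genuinely needs Pinelis' symmetrization (or an equivalent Rademacher-sign argument) to reduce to the one-dimensional Bernstein computation. As a fallback I would bypass MGFs entirely and instead control the raw moment $\E\|\sum_i \eta_i\|_H^m$ directly, e.g.\ by a Rosenthal/Marcinkiewicz--Zygmund inequality, then apply the polynomial Markov bound $\mbP(\|\cdot\|>t)\leq t^{-m}\E\|\cdot\|^m$ and optimize over integer $m$; the Bernstein moment growth is precisely what makes this optimization reproduce the same two-term decomposition and the same final constants.
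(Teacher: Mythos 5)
The paper never proves this lemma: it is imported as a standard concentration inequality for Hilbert-space-valued sums (the Pinelis--Sakhanenko/Yurinsky form of Bernstein's inequality that appears throughout the kernel-methods literature, e.g.\ Proposition 2 of Caponnetto and De Vito), so there is no in-paper argument to compare against. Your proposal is essentially the canonical proof of that imported result and is sound: the $\cosh$-series computation from the moment hypothesis (note your series telescopes to $\lambda^2\sigma^2/(2(1-\lambda^2L^2))$, which is dominated by the $\lambda^2\sigma^2/(2(1-\lambda L))$ you wrote), Pinelis' martingale comparison to lift the one-dimensional estimate to $\E\exp(\lambda\|\sum_i\eta_i\|_H)$, and the Chernoff optimization and inversion in $t$ all go through; the generous constant $4\sqrt{2}$ leaves ample slack both for the sub-Gaussian/sub-exponential split via $\sqrt{a+b}\le\sqrt{a}+\sqrt{b}$ and for the constant inflation from recentering (since $\|\E\xi\|_H\le\sigma$ by Jensen, the centered variable satisfies the hypothesis with $(\sigma,L)$ replaced by $(2\sigma,2L)$, which still fits because $\log(2/\delta)\ge\log 2$). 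Two small points to keep in order: the Hilbert-valued MGF step is, as you say, the one place where a genuine external input (Pinelis' $(2,D)$-smoothness argument, or your Rosenthal-moment fallback) is unavoidable and must be cited or reproved rather than hand-waved; and the hypothesis as printed holds only for $m>2$, so you should either read it as intended for all $m\ge2$ (which is how it is used wherever the lemma is applied) or handle the $k=1$ term of the even-moment series separately before summing.
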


\subsection{Bound of estimation error under Assumption~\ref{assumption: assumption on Gaussian kernel}~\ref{assumption: bounds on adjusted eigenfunction}}\label{apd: Bound of estimation error under adjusted eigenfunction}

The key idea to introduce the non-decreasing function $h$ in Assumption~\ref{assumption: assumption on Gaussian kernel}~\ref{assumption: bounds on adjusted eigenfunction} is to replace the effective dimension $\mcN(\lambda)$ in Theorem~\ref{theorem: bounds on B1} by a so-called amplified effective dimension $\mcN_{h}(\lambda)$, which is defined by 
\[
    \mcN_{h}(\lambda):= \sum_{j\geq 1}\frac{s_{j}}{s_{j} + \lambda}h(j).
\]

Then, the conclusion of Theorem~\ref{theorem: bounds on B1} still holds by directly using the following two Lemmas, i.e., Lemma~\ref{lemma: bound for kernel function under adjusted eigenfunction} and Lemma~\ref{lemma: bound for effective dimension with adjusted eigenfunction}.

\begin{lemma}\label{lemma: bound for kernel function under adjusted eigenfunction}
    Suppose Assumption~\ref{assumption: assumption on Gaussian kernel}~\ref{assumption: bounds on adjusted eigenfunction} holds. Then, for $Q_{X}$-almost $x\in \mcX$, we have 
    \begin{equation*}
        \left\| \left( T_{K} + \lambda I \right)^{-\frac{1}{2}} K(x,\cdot) \right\|_{\mcH_{K}}^{2} \leq E_{K}^{2} \mcN_{h}(\lambda), \quad \text{and}\quad \E \left\| \left( T_{K} + \lambda I \right)^{-\frac{1}{2}} K(x,\cdot) \right\|_{\mcH_{K}}^{2} \leq \mcN_{h}(\lambda).
    \end{equation*}
    For some constant $E_K$. Consequently, we also have 
    \begin{equation*}
        \left\| \left( T_{K} + \lambda I \right)^{-\frac{1}{2}} T_{K,x} \left( T_{K} + \lambda I \right)^{-\frac{1}{2}}\right\|_{op} \leq E_{K}^{2} \mcN_{h}(\lambda).
    \end{equation*}
\end{lemma}
\begin{proof}
    For the first inequality, we have 
    \begin{equation*}
    \begin{aligned}
        \left\| \left( T_{K} + \lambda I \right)^{-\frac{1}{2}} K(x,\cdot) \right\|_{\mcH_{K}}^{2} & =  \left\| \sum_{j=1} \frac{1}{\sqrt{s_{j}+\lambda}} s_{j} e_{j}(x)e_{j}(\cdot)\right\|_{\mcH_{K}}^{2}\\
        & = \sum_{j=1}^{\infty} \frac{s_{j}}{s_{j} + \lambda} h(j) \left( h(j)^{-\frac{1}{2}}  e_{j}(x) \right)^{2}\\
        & \leq \mcN_{h}(\lambda) \sup_{x,j} h(j)^{-1}  e_{j}^{2}(x) \\
        & \leq E_{K}^{2} \mcN_{h}(\lambda),
    \end{aligned}
    \end{equation*}
    where the inequality is by Assumption~\ref{assumption: assumption on Gaussian kernel}~\ref{assumption: bounds on adjusted eigenfunction}. The rest of the proof follows the same as Lemma~\ref{lemma: bound for kernel function}
\end{proof}

\begin{lemma}\label{lemma: bound for effective dimension with adjusted eigenfunction}
By choosing $\log(1/\lambda)\asymp n^{\frac{2}{2m+d}}$, we have 
\begin{equation*}
    \mcN_{h}(\lambda) = O\left( n^{\frac{d}{2m+d}} \right).
\end{equation*}
\end{lemma}
\begin{proof}
For a positive integer $J\geq 1$
\begin{equation*}
\begin{aligned}
    \mcN_{h}(\lambda) & = \sum_{j=1}^{J} \frac{s_{j}}{s_{j} + \lambda} h(j) + \sum_{j = J+1}^{\infty} \frac{s_{j}}{s_{j} + \lambda} h(j) \\
    & \leq Jh(J) + \sum_{j = J+1}^{\infty} \frac{s_{j}}{s_{j} + \lambda}h(j) \\
    & \leq Jh(J) + \frac{C_{1}}{\lambda} \int_{J}^{\infty} \exp\{ -C_{2}x^{2} \}h(x) dx \\
    & \leq Jh(J) + \frac{C_{1}}{ \lambda} \frac{1}{2C_{2} - c} \frac{\exp\{ -C_{2}J^{2}\}}{J} h(J).
\end{aligned}
\end{equation*}
where $c$ is a constant in $(0,2C_{2})$. The last inequality is based on Lemma~\ref{lemma: bound for tails of amplified effective dimensions}. Given $\log(\lambda^{-1}) \asymp n^{\frac{2}{2 m + d}}$, we select $J = \lfloor n^{\frac{1}{2 m + d}} \rfloor$ and $\lambda = \exp\{ - C^{'} n^{\frac{2}{2 m + d}}\}$ with $C'\leq C_{2}$, which leads to 
\begin{equation*}
    \mcN_{h}(\lambda) = O\left( n^{\frac{1}{2m+d}} h(n^{\frac{1}{2m+d}}) \right).
\end{equation*}
Hence, as long as $h(x) = O(x^{d-1})$, we have 
\begin{equation*}
    \mcN_{h}(\lambda) = O\left( n^{\frac{d}{2m+d}} \right).
\end{equation*}
\end{proof}

\begin{lemma}\label{lemma: bound for tails of amplified effective dimensions}
    Let $x\geq 1$ and assume
    \begin{enumerate}
        \item $h:[x,\infty) \rightarrow [1,\infty)$ is non-decreasing and absolutely continuous;
        \item for some $c \in [0,2C)$,
        \begin{equation*}
            \frac{d}{dt} \log(h(t)) = \frac{h'(t)}{h(t)} \leq ct \quad \forall t\geq x,
        \end{equation*}
    \end{enumerate}
    Then
    \begin{equation*}
        \int_{x}^{\infty} \exp\{ -Ct^{2} \} h(t)\, dt \leq \frac{1}{2C - c} \frac{\exp\{ -Cx^{2}\}}{x} h(x).
    \end{equation*}
\end{lemma}
\begin{proof}
    For $t\geq x$, $\frac{t}{x} \geq 1$, hence
    \begin{equation*}
        \int_{x}^{\infty} \exp\{ -Ct^{2} \}h(t)dt \leq \int_{x}^{\infty} \frac{t}{x} \exp\{ -Ct^{2} \}h(t)dt 
    \end{equation*}
    Let $u = h(t)$, $dv = \frac{t}{x} \exp\{ -Ct^{2} \}dt$, we have
    \begin{equation*}
        \int_{x}^{\infty} \frac{t}{x} \exp\{ -Ct^{2} \}h(t)dt = \frac{\exp\{-Cx^{2}\}}{2Cx}h(x) + \frac{1}{2Cx} \int_{x}^{\infty} \exp\{ -Ct^{2}\} h'(t)dt.
    \end{equation*}
    By the assumption, we have $h'(t) \leq ct h(t)$, hence
    \begin{equation*}
        \int_{x}^{\infty} \frac{t}{x} \exp\{ -Ct^{2} \}h(t)dt \leq \frac{\exp\{-Cx^{2}\}}{2Cx}h(x) + \frac{c}{2C}\int_{x}^{\infty} \frac{t}{x} \exp\{ -Ct^{2} \}h(t)dt
    \end{equation*}
    which leads to
    \begin{equation*}
        \int_{x}^{\infty} \frac{t}{x} \exp\{ -Ct^{2} \}h(t)dt \leq \frac{1}{2C-c}\frac{\exp\{-Cx^{2}\}}{x}h(x).
    \end{equation*}
\end{proof}


\section{Proofs of Main Results for Robust Learning under Concept Shift}\label{apd: proof of transfer learning}

\subsection{Proof of Theorem~\ref{thm: lower bound of OTL}}
For the lower bound, we prove the alternative but asymptotically equivalent version, i.e. 
\begin{equation}\label{eqn: alternative version of lower bound}
    \inf_{\tilde{f}} \sup_{\Theta(R_{P},R_{\delta},m_{P},m_{\delta})} \mathbb{P} \left(  \| \tilde{f} - f^{Q} \|_{L^{2}}^2  \geq C \delta  \left( R_{P}^2\cdot (n_{P} + n_{Q})^{-\frac{2m_{P}}{2m_{P}+d}} + R_{\delta}^{2} \cdot n_{Q}^{-\frac{2m_{\delta}}{2m_{\delta}+d}}  \right) \right) \geq 1 - \delta
\end{equation}
for some $C$ that does not depend on $n_{P}$, $n_{Q}$, $R_{P}$, $R_{\delta}$ and $\delta$.

\begin{proof}
Any lower bound for a specific case directly implies a lower bound for the general case. Thus, we analyze the following two cases.

\begin{itemize}
    \item Consider $f^{\delta}(x)=0$ for all $x\in \mcX$, i.e., $R_{\delta} = 0$. This means both source and target data are drawn from the same distribution; thus, there is no concept shift. In such case, the lower bound of the transfer learning problem becomes finding the lower bound of the classical nonparametric regression problem with sample size $n_{P} + n_{Q}$ where the true function $f^{Q}$ lies in $H^{m_{P}}$ with Sobolev norm $R_{P}$. Therefore, applying the Lemma~\ref{lemma: lower bound for single task SA} leads to
    \begin{equation*}
        \inf_{\tilde{f}} \sup_{ \Theta(R_{P},R_{\delta},m_{P},m_{\delta})} \mathbb{P} \left( \| \tilde{f} - f^{Q} \|_{L^{2}}^2 \geq C_{1} \delta R_{P}^{2} (n_{P} +n_{Q})^{-\frac{2m_{P}}{2m_{P} + d}}  \right) \geq 1 - \delta,
    \end{equation*}
    where $C_{1}$ is independent of $\delta$, $R_{P}$, $n_{P}$ and $n_{Q}$.

    \item Consider $f^{P}(x) = 0$ for all $x\in \mcX$. This means source domain provide no further information about $f^{Q}$, making the parameter space $\Theta$ reduce to $\{Q: \|f^{\delta}\|_{H^{m_{\delta}}} \leq R_{\delta}\}$. Therefore, the lower bound for this case is to use the target dataset to recover $f^{\delta}$ (also $f^{Q}$), which leads to 
    \begin{equation*}
        \inf_{\tilde{f}} \sup_{ \Theta(R_{P},R_{\delta},m_{P},m_{\delta})} \mathbb{P} \left( \| \tilde{f} - f^{Q} \|_{L^{2}}^2 \geq C_{2} \delta R_{\delta}^{2} (n_{Q})^{-\frac{2m_{\delta}}{2m_{\delta} + d}}  \right) \geq 1 - \delta,
    \end{equation*}
    where $C_{2}$ is independent of $\delta$, $R_{\delta}$, and $n_{Q}$.
\end{itemize}
Combining the lower bound from both cases, we obtain the desired lower bound.
\end{proof}

\begin{remark}
This alternative version is also used in other transfer learning contexts like high-dimensional linear regression or GLM; see \citep{li2022transfer,tian2022transfer}. While takes a slightly different form, the upper bound of the excess risk for the HTL algorithm is still sharp since we consider sufficient large $n_{P}$ and $n_{Q}$ in the transfer learning regime, i.e., it is always assumed $n_{P} \gg n_{Q}$, and leads to $(n_{P} + n_{Q})^{-\frac{2m_{P}}{2m_{P}+d}} \asymp n_{P}^{-\frac{2m_{P}}{2m_{P}+d}}$. 
\end{remark}

\subsection{Proof of Theorem~\ref{thm: upper bound of OTL}}

\begin{proof}[Proof of Theorem~\ref{thm: upper bound of OTL}]
    We first decompose the excess risk of $\hat{f}^{Q}$ as follows,
    \begin{equation*}
    \begin{aligned}
        \left\| \hat{f}^{Q} - f^{Q}\right\|_{L^{2}}^{2} & = \left\| G(\hat{f}^{\delta}, \hat{f}^{P}) - G(f^{\delta},f^{P})\right\|_{L^{2}}^{2} \\
        & \leq L_{G}^{2} \left( \underbrace{\left\| \hat{f}^{P} - f^{P}\right\|_{L^{2}}^{2} }_{\text{pre-training error}} + \underbrace{\left\| \hat{f}^{\delta} - f^{\delta}\right\|_{L^{2}}^{2}}_{\text{fine-tuning error}} \right) \\
        & \leq L_{G}^{2} \left( \underbrace{\left\| \hat{f}^{P} - f^{P}\right\|_{L^{2}}^{2} }_{\text{pre-training error}} + \underbrace{2\left\| \hat{f}^{\delta} - \tilde{f}^{\delta}\right\|_{L^{2}}^{2}}_{\text{fine-tuning error I}} + \underbrace{2\left\| \tilde{f}^{\delta} - f^{\delta}\right\|_{L^{2}}^{2}}_{\text{fine-tuning error II}}\right).
    \end{aligned}
    \end{equation*}
    The first inequality is by Lipschitz continuity of $G$. Denote $\tilde{y}_{i}^{\delta} = g(y_{i}^{Q}, f^{P}(x_{i}^{Q}))$ as the true intermediate labels, i.e., the label that uses the true source function $f^{P}$ to construct. Then, the term $\tilde{f}^{\delta}$ is defined as
    \begin{equation*}
        \tilde{f}^{\delta}:= \mcA_{K,\lambda_{2}}^{\delta}(\{x_{i}^{Q}, \tilde{y}^{\delta}\}_{i=1}^{n_{Q}})= \phi_{\lambda_{2}}(T_{K,n}^{Q}) \frac{1}{n_{Q}} \sum_{i=1}^{n_{Q}} K_{x_{i}^{Q}}^{*} \tilde{y}_{i}^{\delta}.
    \end{equation*}
    Therefore, the fine-tuning error II can be viewed as using the true observed intermediate dataset $\tilde{\mcD}^{\delta} = \{x_{i}^{Q}, \tilde{y}^{\delta}\}_{i=1}^{n_{Q}}$ to recover $f^{\delta}$, which can be reduced to classical single dataset nonparametric regression case. While for fine-tuning error I, by the Lipschitz continuity of data transformation function $g$, we have
    \begin{equation*}
    \begin{aligned}
        \left\| \hat{f}^{\delta} - \tilde{f}^{\delta}\right\|_{L^{2}} & = \left\| \phi_{\lambda_{2}}(T_{K,n}^{Q})\frac{1}{n_{Q}}\sum_{i=1}^{n_{Q}}K_{x_{i}^{Q}}^{*}\left( y_{i}^{\delta} - \tilde{y}_{i}^{\delta}\right)\right\|_{L^{2}} \\
        & = \left\| \phi_{\lambda_{2}}(T_{K,n}^{Q})\frac{1}{n_{Q}}\sum_{i=1}^{n_{Q}}K_{x_{i}^{Q}}^{*}\left\{ g\left(y_{i}^{Q},\hat{f}^{P}(x_{i}^{Q})\right) - g\left(y_{i}^{Q},f^{P}(x_{i}^{Q})\right)\right\}\right\|_{L^{2}} \\
        & \leq L_{g} \left\| \phi_{\lambda_{2}}(T_{K,n}^{Q})\frac{1}{n_{Q}}\sum_{i=1}^{n_{Q}}K_{x_{i}^{Q}}^{*}\left( \hat{f}^{P}(x_{i}^{Q}) - f^{P}(x_{i}^{Q}) \right)\right\|_{L^{2}}.
    \end{aligned}
    \end{equation*}
    Therefore, the fine-tuning error I can be viewed as the error induced by using $\hat{f}^{P}$ to construct the intermediate label for recovering $f^{\delta}$. The proof remains to bound three errors respectively. 

    For the pre-training error, applying Theorem~\ref{thm: adaptive rate of SA with Gaussian} leads to, with probability $1-\delta$,
    \begin{equation}\label{eqn: upper bound proof pre-training error}
    \begin{aligned}
        \left\| \hat{f}^{P} - f^{P}\right\|_{L^{2}}^{2} \leq C_{1} \left( \log \frac{4}{\delta} \right)^{2}  \left(\frac{n_{P}}{\log n_{P}} \right)^{-\frac{2m_{P}}{2m_{P} +d}}.
    \end{aligned}
    \end{equation}
    Similarly, for the fine-tuning error II, we have, with probability $1-\delta$,
    \begin{equation}\label{eqn: upper bound proof fine-tuning error II}
        \left\| \tilde{f}^{\delta} - f^{\delta}\right\|_{L^{2}}^{2} \leq C_{2} \left( \log \frac{4}{\delta} \right)^{2}  \left(\frac{n_{Q}}{\log n_{Q}} \right)^{-\frac{2m_{\delta}}{2m_{\delta} +d}}.
    \end{equation}
    For the fine-tuning error I, applying Theorem~\ref{thm: bounds on intermediate error} leads to, with probability $1-\delta$ and sufficient large $n_{Q}$,
    \begin{equation*}
    \begin{aligned}
         \left\| \hat{f}^{\delta} - \tilde{f}^{\delta}\right\|_{L^{2}} & \leq 6E_{\tau_{2}} \left\{1 + 4\sqrt{2} \log \left( \frac{6}{\delta} \right)  E_{K} n_{Q}^{-\frac{1}{2}\frac{4m_{\delta} + d}{2m_{\delta}+d}}  \right\} \left\| \hat{f}^{P} - f^{P}  \right\|_{L^{2}} \\
        &  \quad \quad   + 24 E_{\tau_{2}}\sqrt{2} \log \left( \frac{6}{\delta} \right)  n_{Q}^{-\frac{m_{\delta}}{2m_{\delta}+d}}. 
    \end{aligned}
    \end{equation*}
    Therefore, for $\delta\in (0,1)$, with sufficient large $n_{Q}$, we have
    \begin{equation}\label{eqn: upper bound proof fine-tuning error I}
        \left\| \hat{f}^{\delta} - \tilde{f}^{\delta}\right\|_{L^{2}}^{2} \leq   288 E_{\tau_{2}}^{2}C_{1} \left( \log \frac{4}{\delta} \right)^{2} \left(\frac{n_{P}}{\log n_{P}} \right)^{-\frac{2m_{P}}{2m_{P} +d}} .
    \end{equation}
    Combing \eqref{eqn: upper bound proof pre-training error}, \eqref{eqn: upper bound proof fine-tuning error II} and \eqref{eqn: upper bound proof fine-tuning error I}, we have with probability $1-3\delta$
    \begin{equation*}
        \left\| \hat{f}^{Q} - f^{Q}\right\|_{L^{2}}^{2} \leq \left( \log \frac{4}{\delta} \right)^{2} \left\{ C_{1}^{'} \left(\frac{n_{P}}{\log n_{P}} \right)^{-\frac{2m_{P}}{2m_{P} +d}} + C_{2}^{'} \left(\frac{n_{Q}}{\log n_{Q}} \right)^{-\frac{2m_{\delta}}{2m_{\delta} +d}}  \right\}, 
    \end{equation*}
    where $C_{1}' \propto \sigma_{P}^{2} + \|f^{P}\|_{H^{m_{P}}}^2 \leq \sigma_{P}^{2} + R_{P}^{2}$  and $C_{2}' \propto \sigma_{Q}^{2} + \|f^{\delta}\|_{H^{m_{\delta}}}^2 \leq \sigma_{Q}^{2} + R_{\delta}^{2}$. 
\end{proof}

\subsection{Proof of Proposition~\ref{prop: smoothness of target}}
The property of composition preserves the Sobolev order and is termed the superposition property for (fractional) Sobolev spaces \citep{leoni2023first} and has been studied from the 1970s to 1990s under various conditions. However, most of the existing results are built on univariate composition function; see a detailed review in \citet{brezis2001gagliardo}. We thus prove the superposition property for the bivariate composition functions, which take two functions within different Sobolev spaces as input.

\begin{proof}
    We first show that for $u \in H^{m_{\delta}}(\mcX)$ and $v \in H^{m_{P}}(\mcX)$, the mapping 
    \begin{equation*}
        (u,v) \mapsto DG(u,v) = \frac{\partial G}{\partial u} Du + \frac{\partial G}{\partial v} Dv \in H^{\min\{m_{P}, m_{\delta}\} - 1}(\mcX)
    \end{equation*}
    is well-defined and continuous. It suffices to show that the following mappings
    \begin{equation}\label{equation: mapping 1 in the smoothness of target}
        u \mapsto \frac{\partial G}{\partial u} Du \in H^{m_{\delta}-1}(\mcX) \qquad \text{for all fixed } v
    \end{equation}
    and 
    \begin{equation}\label{equation: mapping 2 in the smoothness of target}
        v \mapsto \frac{\partial G}{\partial v} Dv \in H^{m_{P}-1}(\mcX) \qquad \text{for all fixed } u
    \end{equation}
    is well-defined and continuous. Denote $k:= \max\{\lceil m_{P} \rceil,\lceil m_{\delta} \rceil\}$, since $G(x,y) \in C^{k}(\mbR^{2})$, then for each fixed $y$, we have $G(x,y)\in C^{k}(\mbR)$, and similarly for each fixed $x$, we have $G(x,y)\in C^{k}(\mbR)$. Then, the proof for showing the mappings~\eqref{equation: mapping 1 in the smoothness of target} and \eqref{equation: mapping 2 in the smoothness of target} is well-defined and continuous follows the same as Theorem 1 of \citet{brezis2001gagliardo}. This leads to 
    \begin{equation*}
        \frac{\partial G}{\partial u} Du + \frac{\partial G}{\partial v} Dv \in H^{\min\{m_{P}-1, m_{\delta}-1\}} (\mcX) = H^{\min\{m_{P}, m_{\delta}\}-1} (\mcX)
    \end{equation*}
    is well-defined and continuous. Finally, since $G$ is Lipschitz continuous, and $\mcX$ is compact, the mapping $(u,v) \mapsto G(u,v)$ is well-defined and continuous, which completes the proof.
\end{proof}

\subsection{Supporting Theorems}

In the proof for the upper bound of the excess risk, the following Theorem is used, which provides an upper bound for the error induced by using estimator $\hat{f}^{P}$ to construct intermediate labels. 

\begin{theorem}\label{thm: bounds on intermediate error}
Suppose the spectral algorithm used in the second phase possesses a filter function with regularization parameter $\lambda_{2}$ and qualification $\tau_{2}$. Given $\lambda_{2} = C \exp\{-n_{Q}^{\frac{2}{2m_{\delta}+d}}\}$, then for all $\delta \in (0, 1)$, with probability at least $1-\delta$, we have
\begin{equation*}
\begin{aligned}
    & \left\| \phi_{\lambda_{2}}(T_{K,n}^{Q}) \frac{1}{n_{Q}} \sum_{i=1}^{n_{Q}} K_{x_{i}^{Q}}^{*} \left( \hat{f}^{P}(x_{i}^{Q}) - f^{P}(x_{i}^{Q}) \right) \right\|_{L^{2}} \leq 6E_{\tau_{2}} \cdot \\
    &  \quad \quad  \left\{ \left[ 1 + 4\sqrt{2} \log \left( \frac{6}{\delta} \right)  E_{K} n_{Q}^{-\frac{1}{2}\frac{4m_{\delta} + d}{2m_{\delta}+d}}  \right]  \left\| \hat{f}^{P} - f^{P}  \right\|_{L^{2}} + 4\sqrt{2} \log \left( \frac{6}{\delta} \right)  n_{Q}^{-\frac{m_{\delta}}{2m_{\delta}+d}} \right\}. 
\end{aligned}
\end{equation*}
with respect to $(\mcD^{P}, \mcD^{Q})$.
\end{theorem}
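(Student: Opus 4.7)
My plan is to adapt the integral–operator techniques of \citet{smale2007learning} to this three–argument object. Write $\Delta := \hat{f}^{P}-f^{P}$ and interpret $T_{K,n}^{Q}\Delta := \tfrac{1}{n_{Q}}\sum_{i} K_{x_{i}^{Q}}^{*}\Delta(x_{i}^{Q})\in\mcH_{K}$; this pointwise extension of the sample covariance operator is legitimate because $\hat{f}^{P}\in\mcH_{K}$ and $f^{P}\in H^{m_{P}}$ (with $m_{P}>d/2$) both admit continuous representatives. The first step is to split the object of interest into a \emph{population} part and a mean-zero \emph{fluctuation} part,
$$
\phi_{\lambda_{2}}(T_{K,n}^{Q})\,T_{K,n}^{Q}\Delta \;=\; \underbrace{\phi_{\lambda_{2}}(T_{K,n}^{Q})\,T_{K}\Delta}_{A_{1}} \;+\; \underbrace{\phi_{\lambda_{2}}(T_{K,n}^{Q})\bigl(T_{K,n}^{Q}-T_{K}\bigr)\Delta}_{A_{2}}.
$$
The leading factor ``$1$'' on the right-hand side of the theorem should emerge from $A_{1}$, whereas the two smaller corrections in braces should come from a Bernstein-type concentration applied to $A_{2}$.

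For $A_{1}$, I would use the identity $\|f\|_{L^{2}}=\|T_{K}^{1/2}f\|_{\mcH_{K}}$ together with the fact that $T_{K}^{1/2}:L^{2}\to\mcH_{K}$ is an isometry, so the factorization $T_{K}\Delta=T_{K}^{1/2}(T_{K}^{1/2}\Delta)$ puts $T_{K}^{1/2}\Delta\in\mcH_{K}$ with $\|T_{K}^{1/2}\Delta\|_{\mcH_{K}}=\|\Delta\|_{L^{2}}$. This gives $\|A_{1}\|_{L^{2}}\leq \|T_{K}^{1/2}\phi_{\lambda_{2}}(T_{K,n}^{Q})T_{K}^{1/2}\|_{op}\,\|\Delta\|_{L^{2}}$, and I control the operator norm by inserting $(T_{K,n}^{Q}+\lambda_{2}I)^{\pm 1/2}$ buffers:
$$
\bigl\|T_{K}^{1/2}\phi_{\lambda_{2}}(T_{K,n}^{Q})T_{K}^{1/2}\bigr\|_{op} \;\leq\; \bigl\|T_{K}^{1/2}(T_{K,n}^{Q}+\lambda_{2}I)^{-1/2}\bigr\|_{op}^{2}\cdot \bigl\|(T_{K,n}^{Q}+\lambda_{2}I)\phi_{\lambda_{2}}(T_{K,n}^{Q})\bigr\|_{op}.
$$
The middle factor is at most $2E_{\tau_{2}}$ by the filter-function properties ($z\phi_{\lambda}(z)+\lambda\phi_{\lambda}(z)\leq 2E_{\tau_{2}}$), and the outer factor is $O(1)$ on a high-probability event after passing from $T_{K,n}^{Q}$ to $T_{K}$ via Lemma~\ref{lemma: bounds for A2}.

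For $A_{2}$, applying the same buffer trick reduces matters to controlling $\|(T_{K}+\lambda_{2}I)^{-1/2}\xi\|_{\mcH_{K}}$, where $\xi:=(T_{K,n}^{Q}-T_{K})\Delta$ is an average of i.i.d.\ mean-zero $\mcH_{K}$-valued summands $\zeta_{i}=K_{x_{i}^{Q}}^{*}\Delta(x_{i}^{Q})-T_{K}\Delta$. I would invoke Bernstein's inequality (Lemma~\ref{lemma: Bernstein inequality}) with variance estimate $\mathbb{E}\|(T_{K}+\lambda_{2}I)^{-1/2}K_{x}^{*}\Delta(x)\|_{\mcH_{K}}^{2}\leq E_{K}^{2}\mcN(\lambda_{2})\|\Delta\|_{L^{2}}^{2}$ and uniform bound $E_{K}\sqrt{\mcN(\lambda_{2})}\|\Delta\|_{L^{\infty}}$, both following from Lemma~\ref{lemma: bound for kernel function} together with Assumption~\ref{assumption: bounds on effective dimension}. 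Substituting $\lambda_{2}\asymp \exp(-Cn_{Q}^{2/(2m_{\delta}+d)})$ into Lemma~\ref{lemma: bound for effective dimension} yields $\mcN(\lambda_{2})\asymp n_{Q}^{d/(2m_{\delta}+d)}$, so the Bernstein variance contribution scales like $n_{Q}^{-m_{\delta}/(2m_{\delta}+d)}$, while the sup-bound contribution scales like $\sqrt{\mcN(\lambda_{2})}/n_{Q}\asymp n_{Q}^{-\frac{1}{2}\frac{4m_{\delta}+d}{2m_{\delta}+d}}$, producing exactly the two exponents visible in the statement.

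The main technical obstacle is that the sup-bound naturally carries $\|\Delta\|_{L^{\infty}}$ rather than $\|\Delta\|_{L^{2}}$. I plan to handle this by the truncation device already deployed in Lemma~\ref{lemma: bounds on B1}: split $\mcX$ according to whether $|f^{P}(x)|\leq t$, use the $L^{q}$-integrability of $f^{P}$ coming from Sobolev embedding on the large-value set, and choose $t$ so that the uniform term gets multiplied by $\|\Delta\|_{L^{2}}$ while the residual contribution is absorbed into the additive $n_{Q}^{-m_{\delta}/(2m_{\delta}+d)}$ rate. The proof is closed by a union bound that distributes $\delta$ equally among the three exceptional events (the outer $T_{K}$ versus $T_{K,n}^{Q}$ comparison, the inner buffer-switching step, and the Bernstein bound on $\xi$), which explains the $\log(6/\delta)$ factors appearing in the theorem.
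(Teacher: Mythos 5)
Your proposal is correct and follows essentially the same route as the paper: the same population-plus-fluctuation split of $\phi_{\lambda_{2}}(T_{K,n}^{Q})T_{K,n}^{Q}\Delta$, the same $(T_{K,n}^{Q}+\lambda_{2}I)^{\pm 1/2}$ buffer insertions controlled by Lemma~\ref{lemma: bounds for A2} and the filter property $\|(T_{K,n}^{Q}+\lambda_{2}I)\phi_{\lambda_{2}}(T_{K,n}^{Q})\|_{op}\leq 2E_{\tau_{2}}$, and the same Bernstein bound on $(T_{K}+\lambda_{2}I)^{-1/2}(T_{K,n}^{Q}-T_{K})\Delta$ with $\mcN(\lambda_{2})\asymp n_{Q}^{d/(2m_{\delta}+d)}$, all conditional on $\mcD^{P}$ (this is the content of the paper's Lemma~\ref{lemma: bounds for D3}). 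The only place you diverge is the final $L^{\infty}$-to-$L^{2}$ step: truncating on $|f^{P}(x)|\leq t$ does not control $|\Delta(x)|=|\hat{f}^{P}(x)-f^{P}(x)|$, but this is harmless since $\|\Delta\|_{L^{\infty}}$ is already bounded ($\hat{f}^{P}\in\mcH_{K}$ and $f^{P}\in H^{m_{P}}\hookrightarrow L^{\infty}$), and the paper instead disposes of the issue via a Cauchy--Schwarz moment bound $\E|\Delta(X)|^{r}\lesssim\|\Delta\|_{L^{2}}^{2}$ valid for large fixed $n_{P}$.
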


\begin{proof}[Proof of Theorem~\ref{thm: bounds on intermediate error}]
The error induced by using pre-trained models to generate offset labels is
\begin{equation*}
\begin{aligned}
    & \left\| \phi_{\lambda_{2}}(T_{K,n}^{Q}) \frac{1}{n_{Q}} \sum_{i=1}^{n_{Q}} K_{x_{i}^{Q}}^{*} \left( \hat{f}^{P}(x_{i}^{Q}) - f^{P}(x_{i}^{Q}) \right) \right\|_{L^{2}} \\
    = & \left\| T_{K}^{\frac{1}{2}} \phi_{\lambda_{2}}(T_{K,n}^{Q}) \frac{1}{n_{Q}} \sum_{i=1}^{n_{Q}} K_{x_{i}^{Q}}^{*} \left( \hat{f}^{P}(x_{i}^{Q}) - f^{P}(x_{i}^{Q}) \right) \right\|_{\mcH_{K}} \\
    \leq & \underbrace{\left\|T_{K}^{\frac{1}{2}} \left( T_{K} + \lambda I  \right)^{-\frac{1}{2}} \right\|_{op}}_{D_{1}}  \cdot  \underbrace{\left\| \left( T_{K} + \lambda I  \right)^{\frac{1}{2}} \left( T_{K,n}^{Q} + \lambda I  \right)^{-\frac{1}{2}}  \right\|_{op}}_{D_{2}}  \\
        & \quad \cdot \underbrace{\left\| \left( T_{K,n}^{Q} + \lambda I  \right)^{\frac{1}{2}} \phi_{\lambda_{2}}(T_{K,n}^{Q}) \frac{1}{n_{Q}} \sum_{i=1}^{n_{Q}} K_{x_{i}^{Q}}^{*} \left( \hat{f}^{P}(x_{i}^{Q}) - f^{P}(x_{i}^{Q}) \right)\right\|_{\mcH_{K}}}_{D_{3}} 
\end{aligned}
\end{equation*}
For the first term $D_{1}$, we have 
\begin{equation}
    D_{1} = \left\|T_{K}^{\frac{1}{2}} \left( T_{K} + \lambda I  \right)^{-\frac{1}{2}} \right\|_{op} = \sup_{j\geq 1} \left( \frac{s_{j}}{s_{j} + \lambda}\right)^{\frac{1}{2}} \leq 1.
\end{equation}
For the second term $D_{2}$, using Lemma~\ref{lemma: bounds for A2} with sufficient large $n$, we have 
\begin{equation}
    u:=\frac{\mcN(\lambda)}{n} \log ( \frac{12\mcN(\lambda)}{\delta} \frac{(\|T_{K}\|_{op}+\lambda)}{\|T_{K}\|_{op}}) \leq \frac{1}{8}
\end{equation}
such that
\begin{equation}
     \left\| \left( T_{K} + \lambda I \right)^{-\frac{1}{2}} \left( T_{K} - T_{K,n}^{Q} \right) \left( T_{K} + \lambda I \right)^{-\frac{1}{2}} \right\|_{op} \leq \frac{4}{3}u + \sqrt{2u}\leq  \frac{2}{3}
\end{equation}
holds with probability $1-\frac{\delta}{3}$. Thus,
\begin{equation}
\begin{aligned}
D_{2}^{2} &= \left\| (T_{K} + \lambda I)^{\frac{1}{2}} (T_{K,n}^{Q} + \lambda I)^{-\frac{1}{2}}\right\|_{op}^2 \\
& = \left\| (T_{K} + \lambda I)^{\frac{1}{2}} (T_{K,n}^{Q} + \lambda I)^{-1} (T_{K} + \lambda I )^{\frac{1}{2}}\right\|_{op}\\
& =\left\|\left( (T_{K} + \lambda I)^{-\frac{1}{2}}\left(T_{K,n}^{Q}+\lambda\right) (T_{K} + \lambda I) ^{-\frac{1}{2}}\right)^{-1}\right\|_{op} \\
& =\left\|\left(I-  (T_{K} + \lambda I )^{-\frac{1}{2}}\left(T_{K,n}^{Q}-T_{K}\right) (T_{K} + \lambda I )^{-\frac{1}{2}}\right)^{-1}\right\|_{op} \\
& \leq \sum_{k=0}^{\infty}\left\| (T_{K} + \lambda I )^{-\frac{1}{2}}\left(T_{K} - T_{K,n}^{Q}\right) (T_{K} + \lambda I )^{-\frac{1}{2}}\right\|_{op}^k \\
& \leq \sum_{k=0}^{\infty}\left(\frac{2}{3}\right)^k \leq 3,
\end{aligned}
\end{equation}
For the third term $D_{3}$, notice 
\begin{equation*}
\begin{aligned}
    D_{3} & = \left\| \left( T_{K,n}^{Q} + \lambda I  \right)^{\frac{1}{2}} \phi_{\lambda_{2}}(T_{K,n}^{Q}) \frac{1}{n_{Q}} \sum_{i=1}^{n_{Q}} K_{x_{i}^{Q}}^{*} \left( \hat{f}^{P}(x_{i}^{Q}) - f^{P}(x_{i}^{Q}) \right)\right\|_{\mcH_{K}} \\
    & \leq \underbrace{\left\| \left( T_{K,n}^{Q} + \lambda I \right)^{\frac{1}{2}} \phi_{\lambda_{2}}(T_{K,n}^{Q}) \left( T_{K,n}^{Q} + \lambda I \right)^{\frac{1}{2}}  \right\|_{op}}_{D_{31}} \cdot \underbrace{\left\| \left( T_{K,n}^{Q} + \lambda I \right)^{-\frac{1}{2}} \left( T_{K} + \lambda I \right)^{\frac{1}{2}}  \right\|_{op}}_{D_{32}} \\
    & \quad \cdot \underbrace{\left\| \left( T_{K} + \lambda I  \right)^{-\frac{1}{2}} \frac{1}{n_{Q}} \sum_{i=1}^{n_{Q}} K_{x_{i}^{Q}}^{*} \left( \hat{f}^{P}(x_{i}^{Q}) - f^{P}(x_{i}^{Q}) \right)\right\|_{\mcH_{K}}}_{D_{33}} 
\end{aligned}
\end{equation*}
For $D_{31}$, the properties of the filter function indicate $z \phi_{\lambda_{2}} \leq E$, thus we have
\begin{equation*}
     \left\| \left( T_{K,n}^{Q} + \lambda I \right)^{\frac{1}{2}} \phi_{\lambda}(T_{K,n}^{Q})  \left( T_{K,n}^{Q} + \lambda I \right)^{\frac{1}{2}}  \right\|_{op} =  \left\| \left( T_{K,n}^{Q} + \lambda I \right) \phi_{\lambda}(T_{K,n}^{Q})  \right\|_{op} \leq 2E_{\tau_{2}}.
\end{equation*}
For $D_{32}$, the bound for $D_{2}$ implies, with probability $1-\frac{\delta}{3}$, we have
\begin{equation*}
    \left\| \left( T_{K,n}^{Q} + \lambda I \right)^{-\frac{1}{2}} \left( T_{K} + \lambda I \right)^{\frac{1}{2}}  \right\|_{op} \leq \sqrt{3}.
\end{equation*}
For $D_{33}$, applying Lemma~\ref{lemma: bounds for D3} and Lemma~\ref{lemma: bound for effective dimension}, we have
\begin{equation*}
\begin{aligned}
    & \left\| \left( T_{K} + \lambda_{2} I  \right)^{-\frac{1}{2}} \frac{1}{n_{Q}} \sum_{i=1}^{n_{Q}} K_{x_{i}^{Q}}^{*} \left( \hat{f}^{P}(x_{i}^{Q}) - f^{P}(x_{i}^{Q}) \right)\right\|_{\mcH_{K}} \\
    & \quad \leq 
    \left\{1 + 4\sqrt{6} \log \left( \frac{2}{\delta} \right)  E_{K} n_{Q}^{-\frac{1}{2}\frac{4m_{\delta} + d}{2m_{\delta}+d}}  \right\} \left\| \hat{f}^{P} - f^{P}  \right\|_{L^{2}} + 4\sqrt{2} \log \left( \frac{6}{\delta} \right)  n_{Q}^{-\frac{m_{\delta}}{2m_{\delta}+d}} 
\end{aligned}
\end{equation*}
with probability $1-\frac{\delta}{3}$ with respect to $\mcD^{Q}$ given $\mcD^{P}$. Combining all terms, we have 
\begin{equation*}
\begin{aligned}
    & \left\| \phi_{\lambda_{2}}(T_{K,n}^{Q}) \frac{1}{n_{Q}} \sum_{i=1}^{n_{Q}} K_{x_{i}^{Q}}^{*} \left( \hat{f}^{P}(x_{i}^{Q}) - f^{P}(x_{i}^{Q}) \right) \right\|_{L^{2}}\\
    & \quad  \leq 6E_{\tau_{2}} \left\{1 + 4\sqrt{2} \log \left( \frac{6}{\delta} \right)  E_{K} n_{Q}^{-\frac{1}{2}\frac{4m_{\delta} + d}{2m_{\delta}+d}}  \right\} \left\| \hat{f}^{P} - f^{P}  \right\|_{L^{2}} + 4\sqrt{2} \log \left( \frac{6}{\delta} \right)  n_{Q}^{-\frac{m_{\delta}}{2m_{\delta}+d}}
\end{aligned}
\end{equation*}
with probability at least $1-\delta$ with respect to $(\mcD^{P}, \mcD^{Q})$.
\end{proof}

\subsection{Supporting Lemmas}
The following lemma controls the error induced by using $\hat{f}^{P}$ under the Tikhonov regularization function.
\begin{lemma}\label{lemma: bounds for D3}
    Suppose $n_{P}$ is sufficient large and fixed (thus $\hat{f}^{P}$ is fixed), then for $\delta \in (0,1)$, with probability $1-\delta$, we have
    \begin{equation*}
    \begin{aligned}
        & \left\| \left( T_{K} + \lambda_{2} I  \right)^{-\frac{1}{2}} \frac{1}{n_{Q}} \sum_{i=1}^{n_{Q}} K_{x_{i}^{Q}}^{*} \left( \hat{f}^{P}(x_{i}^{Q}) - f^{P}(x_{i}^{Q}) \right)\right\|_{\mcH_{K}} \\
        & \quad \leq 
        \left\{1 + 4\sqrt{2} \log \left( \frac{2}{\delta} \right)  \frac{E_{K} \sqrt{\mcN(\lambda_{2})} }{n_{Q}}  \right\} \left\| \hat{f}^{P} - f^{P}  \right\|_{L^{2}} + 4\sqrt{2} \log \left( \frac{2}{\delta} \right)  \frac{\sqrt{ \mcN(\lambda_{2})} }{\sqrt{n_{Q}}}
    \end{aligned}
    \end{equation*}
\end{lemma}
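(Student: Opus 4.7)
Condition on $\mcD^P$ so that $h := \hat{f}^P - f^P$ is a deterministic element of $L^2(\mcX, Q_X)$ (in fact of $L^\infty$, since $f^P \in H^{m_P} \hookrightarrow L^\infty$ as $m_P > d/2$ and $\hat{f}^P$ lies in the Gaussian RKHS). Define the i.i.d.\ $\mcH_K$-valued random variables
\begin{equation*}
\xi_i := (T_K + \lambda_2 I)^{-1/2} K_{x_i^Q}^{*} h(x_i^Q) = h(x_i^Q) \, (T_K + \lambda_2 I)^{-1/2} K_{x_i^Q}, \qquad i=1,\dots,n_Q,
\end{equation*}
so the target of the lemma is $\|n_Q^{-1} \sum_i \xi_i\|_{\mcH_K}$. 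I would split this by the triangle inequality into a deterministic mean piece and a centered fluctuation, and treat each separately.

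For the mean, the key identity is $\E[h(x) K_x] = T_K h$, yielding $\E \xi = (T_K + \lambda_2 I)^{-1/2} T_K h$. Expanding in the Mercer basis $\{e_j\}$ (orthonormal in $L^2$) and using that $\{s_j^{1/2} e_j\}$ is orthonormal in $\mcH_K$, a direct spectral computation gives
\begin{equation*}
\|\E \xi\|_{\mcH_K}^2 = \sum_j \frac{s_j}{s_j + \lambda_2} \langle h, e_j\rangle_{L^2}^2 \leq \|h\|_{L^2}^2,
\end{equation*}
which supplies the $\|\hat f^P - f^P\|_{L^2}$ leading term in the claim.

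For the fluctuation I would apply the Hilbert-space Bernstein inequality (Lemma~\ref{lemma: Bernstein inequality}). The needed moment control comes from Lemma~\ref{lemma: bound for kernel function}, which yields $\|(T_K + \lambda_2 I)^{-1/2} K_x\|_{\mcH_K} \leq E_K \sqrt{\mcN(\lambda_2)}$ for $Q_X$-a.e.\ $x$; hence $\|\xi\|_{\mcH_K} \leq E_K \sqrt{\mcN(\lambda_2)} |h(x)|$ almost surely and $\E\|\xi\|_{\mcH_K}^2 \leq E_K^2 \mcN(\lambda_2) \|h\|_{L^2}^2$. Choosing the Bernstein parameters $\sigma \propto E_K\sqrt{\mcN(\lambda_2)}\|h\|_{L^2}$ (from the variance) and $L \propto E_K\sqrt{\mcN(\lambda_2)}\|h\|_{L^\infty}$ (from the a.s.\ bound), the inequality produces a fluctuation of order $\log(2/\delta)[L/n_Q + \sigma/\sqrt{n_Q}]$; combining with $\|\E\xi\|_{\mcH_K}\leq \|h\|_{L^2}$ and rearranging puts $\|h\|_{L^2}$ in front of the $1/n_Q$ factor in the first bracket and leaves an $\|h\|_{L^\infty}/\sqrt{n_Q}$ contribution.

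The main obstacle is reconciling this with the stated form of the bound, whose stand-alone second term $4\sqrt{2}\log(2/\delta)\sqrt{\mcN(\lambda_2)}/\sqrt{n_Q}$ carries no $\|h\|$ prefactor and in particular no explicit $E_K$ either, while the first bracket couples $\|h\|_{L^2}$ with $\sqrt{\mcN(\lambda_2)}/n_Q$ rather than $\sqrt{\mcN(\lambda_2)}/\sqrt{n_Q}$. The straightforward Bernstein split above gives the roles of variance and almost-sure bound the other way around, so a refinement is needed. I would handle this by truncating $h$ at a data-independent threshold in the spirit of the $\Omega_1 / \Omega_2$ decomposition used in Lemma~\ref{lemma: bounds for A3}: on $\Omega_1 = \{|h|\leq t\}$ the uniform bound $L$ scales as $t$ and the variance as $\|h\|_{L^2}^2$, while the contribution from $\Omega_2$ is controlled by Markov/Chebyshev together with the a priori $L^\infty$ bound on $\hat f^P - f^P$ coming from Sobolev embedding and the pre-training consistency. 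Tuning $t$ so that the $t/n_Q$ term is absorbed into the constant on $\sqrt{\mcN(\lambda_2)}/\sqrt{n_Q}$, while the $\|h\|_{L^2}$ dependence stays only on the $1/n_Q$ contribution, is the delicate step that produces the precise form of the lemma.
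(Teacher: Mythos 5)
Your decomposition is exactly the paper's: split off the mean $(T_K+\lambda_2 I)^{-1/2}T_K h$ (call it $E_1$), bound it by $\|h\|_{L^2}$ via the spectral identity $\|(T_K+\lambda_2 I)^{-1/2}T_K^{1/2}\|_{op}\le 1$, and control the centered fluctuation $E_2$ with the Hilbert-space Bernstein inequality using $\|(T_K+\lambda_2 I)^{-1/2}K_x\|_{\mcH_K}\le E_K\sqrt{\mcN(\lambda_2)}$ from Lemma~\ref{lemma: bound for kernel function}. The only divergence is in how the Bernstein moment condition is certified, and here your proposed truncation at a threshold $t$ is more elaborate than what the paper does. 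The paper simply bounds the $r$-th moment by Cauchy--Schwarz, $\E|h|^r\le\|h\|_{L^2}\,\|h^{r-1}\|_{L^2}\le\|h\|_{L^2}^2$, the last step using that $n_P$ is ``sufficiently large and fixed'' so that $\|h\|_{L^\infty}\lesssim 1$ (this is precisely the role of that hypothesis), and then reads off $\tilde\sigma\propto\sqrt{\mcN(\lambda_2)}\,\|h\|_{L^2}$ and $\tilde L\propto E_K\sqrt{\mcN(\lambda_2)}$ --- no truncation, no $\Omega_1/\Omega_2$ split. The ``obstacle'' you flag is real but is a matter of bookkeeping rather than substance: Bernstein with the paper's own $\tilde\sigma,\tilde L$ literally yields $E_K\sqrt{\mcN(\lambda_2)}/n_Q+\sqrt{\mcN(\lambda_2)}\,\|h\|_{L^2}/\sqrt{n_Q}$, i.e.\ the $\|h\|_{L^2}$ factor sits on the $1/\sqrt{n_Q}$ term, whereas the lemma's statement places it on the $1/n_Q$ term; in the regime $\|h\|_{L^2}\le\|h\|_{L^\infty}\lesssim 1$ where the lemma is invoked the two forms are interchangeable up to constants, and only the prefactor-free $\sqrt{\mcN(\lambda_2)}/\sqrt{n_Q}$ term matters downstream (it gives the $n_Q^{-m_\delta/(2m_\delta+d)}$ contribution in Theorem~\ref{thm: bounds on intermediate error}). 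So you may drop the truncation step entirely and instead invoke the large-$n_P$ hypothesis once, at the moment-bound stage; alternatively, if you want a bound with no smallness assumption on $h$, your truncation route works but buys nothing that the lemma's application requires.
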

\begin{proof}
\begin{equation*}
\begin{aligned}
    & \left\| \left( T_{K} + \lambda_{2} I  \right)^{-\frac{1}{2}} \frac{1}{n_{Q}} \sum_{i=1}^{n_{Q}} K_{x_{i}^{Q}}^{*} \left( \hat{f}^{P}(x_{i}^{Q}) - f^{P}(x_{i}^{Q}) \right)\right\|_{\mcH_{K}} \\
    \leq & \underbrace{\left\| \left( T_{K} + \lambda_{2} I \right)^{-\frac{1}{2}} T_{K} \left( \hat{f}^{P} - f^{P} \right) \right\|_{\mcH_{K}}}_{E_{1}} \\
    \quad & + \underbrace{\left\| \left( T_{K} + \lambda{2} I  \right)^{-\frac{1}{2}}  \left\{ \frac{1}{n_{Q}} \sum_{i=1}^{n_{Q}} K_{x_{i}^{Q}}^{*} \left( \hat{f}^{P}(x_{i}^{Q}) - f^{P}(x_{i}^{Q}) \right) - T_{K}\left(\hat{f}^{P} - f^{P}\right)\right\} \right\|_{\mcH_{K}}}_{E_{2}} .
\end{aligned}
\end{equation*}
For term $E_{1}$, 
\begin{equation*}
\begin{aligned}
    \left\| \left( T_{K} + \lambda_{2} I \right)^{-\frac{1}{2}} T_{K} \left( \hat{f}^{P} - f^{P} \right) \right\|_{\mcH_{K}} & = \left\| \left( T_{K} + \lambda_{2} I \right)^{-\frac{1}{2}} T_{K}^{\frac{1}{2}} \left( \hat{f}^{P} - f^{P} \right) \right\|_{L^{2}} \\
    & \leq \left\| \left( T_{K} + \lambda_{2} I \right)^{-\frac{1}{2}} T_{K}^{\frac{1}{2}} \right\|_{op} \cdot \left\| \hat{f}^{P} - f^{P}  \right\|_{L^{2}} \\
    & \leq \left\| \hat{f}^{P} - f^{P}  \right\|_{L^{2}},
\end{aligned}
\end{equation*}
where the last inequality is based on the fact that the operator norm of  $( T_{K} + \lambda_{2} I )^{-\frac{1}{2}} T_{K}^{\frac{1}{2}}$ is bounded by $1$. 

For term $E_{2}$, denote
\begin{equation*}
\begin{aligned}
    & \xi_{i} = \left( T_{K} + \lambda_{2} I \right)^{-\frac{1}{2}} K_{x_{i}^{Q}}^{*} \left( \hat{f}^{P}(x_{i}^{Q}) - f^{P}(x_{i}^{Q})\right), \\
    & \xi_{x} = \left( T_{K} + \lambda_{2} I \right)^{-\frac{1}{2}} K_{x^{Q}} \left( \hat{f}^{P}(x^{Q}) - f^{P}(x^{Q})\right).
\end{aligned}
\end{equation*}
Therefore, bounding $E_{2}$ is equivalent to bounding 
\begin{equation*}
    \left \| \frac{1}{n_{Q}} \sum_{i=1}^{n_{Q}} \xi_{i} - \E \xi_{x} \right\|_{\mcH_{K}}.
\end{equation*}
Notice for $r> 2$
\begin{equation*}
\begin{aligned}
    \E_{x^{Q}} \left\| \xi_{x} \right\|_{\mcH_{K}}^{r} & = \E_{x^{Q}} \left\| \left( T_{K} + \lambda_{2} I \right)^{-\frac{1}{2}} K_{x^{Q}} \left( \hat{f}^{P}(x^{Q}) - f^{P}(x^{Q}) \right)\right\|_{\mcH_{K}}^{r}\\
    & \leq \E_{x^{Q}} \left( \left\| \left( T_{K} + \lambda_{2} I \right)^{-\frac{1}{2}} K_{x^{Q}} \right\|_{\mcH_{K}}^{r} \cdot \left| \hat{f}^{P}(x^{Q}) - f^{P}(x^{Q})\right |^{r} \right).
\end{aligned}
\end{equation*}
By Lemma~\ref{lemma: bound for kernel function},
\begin{equation*}
    \left\| \left( T_{K} + \lambda_{2} I \right)^{-\frac{1}{2}} K_{x^{Q}} \right\|_{\mcH_{K}}^{r} \leq E_{K}^{r} \mcN(\lambda_{2})^{\frac{r}{2}}.
\end{equation*}
Further, for sufficient large but fixed $n_{P}$, by Cauchy–Schwarz inequality, we have,
\begin{equation*}
\begin{aligned}
    \E_{X\sim Q_{X}} \left| \hat{f}^{P}(X) - f^{P}(X) \right|^{r} & \leq \sqrt{ \int_{\mcX} (\hat{f}^{P} - f^{P})^{2} d Q_{X}(x) \cdot \int_{\mcX} (\hat{f}^{P} - f^{P})^{2r-2}d Q_{X}(x) } \\
    & = \left\| (\hat{f}^{P} - f^{P}) \right\|_{L^{2}}\cdot \left\| (\hat{f}^{P} - f^{P})^{r} \right\|_{L^{2}}  \\
    & \leq \left\| \hat{f}^{P} - f^{P} \right\|_{L^{2}}^{2}.
\end{aligned}
\end{equation*}
Combining, we have 
\begin{equation*}
\begin{aligned}
    \E_{x^{Q}} \left\| \xi_{x} \right\|_{\mcH_{K}}^{r} 
    & \leq E_{K}^{r } \mcN(\lambda_{2})^{\frac{r}{2}}   \left\| \hat{f}^{P} - f^{P} \right\|_{L^{2}}^{2} \\
    & \leq \frac{1}{2} r! \left( \sqrt{ \mcN(\lambda_{2})} \left\| \hat{f}^{P} - f^{P}\right\|_{L^{2}} \right)^{2} \cdot \left( E_{K} \sqrt{ \mcN(\lambda_{2})} \right)^{r-2} \\
    & := \frac{1}{2} r! \tilde{\sigma}^{2} \tilde{L}^{r-2}.
\end{aligned}
\end{equation*}
Then by applying Lemma~\ref{lemma: Bernstein inequality}, we have
\begin{equation*}
    E_{2} \leq 4\sqrt{2} \log\left(\frac{2}{\delta}\right) \left( \frac{E_{K} \sqrt{ \mcN(\lambda_{2})} \left\| \hat{f}^{P} - f^{P}\right\|_{L^{2}}}{n_{Q}} + \frac{\sqrt{\mcN(\lambda_{2})} }{\sqrt{n_{Q}}} \right)
\end{equation*}
with probability $1-\delta$.
\end{proof}

The following lemma states the minimax lower bound of the excess risk, often referred to as the information-theoretic lower bound, under the single dataset scenario. The proof is standard and can be found in different kernel methods literature; see \citet{rastogi2017optimal,zhang2024optimality}, among others. However, most of the works omit the property of constant $C$. In order to derive the form of $\xi$ in Theorem~\ref{thm: lower bound of OTL}, we apply a version from \citet{lin2024smoothness}, where the authors have shown that $C$ is proportional to the radius of the Sobolev ball.
\begin{lemma}
\label{lemma: lower bound for single task SA}
    Following the same classical nonparametric regression setting in Section~\ref{sec: SA with Gaussians}, the underlying true function $f^{*} \in \{ f\in H^{m}: \|f\|_{H^{m}} \leq R \}:=\mcB_{m}(R)$. Then, when $n$ is sufficiently large, one has 
    \begin{equation*}
        \inf_{\tilde{f}} \sup_{ f\in \mcB_{m}(R) } \mathbb{P} \left( \| \tilde{f} - f \|_{L^{2}}^2 \geq C \delta n^{-\frac{2m}{2m + d}}  \right) \geq 1 - \delta,
    \end{equation*}
    where the constant $C$ is proportional to $R^{2}$ and independent of $\delta$ and $n$.
\end{lemma}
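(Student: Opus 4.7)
The plan is to establish this minimax lower bound by the classical Fano information-theoretic method applied to a bump-function packing of $\mcB_{m_{0}}(R)$, following standard nonparametric lower bound techniques.

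First I would construct a local packing inside $\mcB_{m_{0}}(R)$. Fix a smooth, compactly supported profile $\phi \in H^{m_{0}}(\mbR^{d})$ with $\|\phi\|_{L^{2}} = 1$, and for a bandwidth $h>0$ to be tuned later, partition $\mcX$ into $M \asymp h^{-d}$ disjoint cells with centers $\{x_{k}\}$. The rescaled bumps $\phi_{k}(x) = h^{-d/2}\phi\bigl((x - x_{k})/h\bigr)$ are $L^{2}$-orthonormal and satisfy $\|\phi_{k}\|_{H^{m_{0}}} \asymp h^{-m_{0}}$. Applying the Varshamov--Gilbert bound to $\{0,1\}^{M}$ yields $\Omega \subset \{0,1\}^{M}$ with $|\Omega| \geq 2^{M/8}$ and pairwise Hamming distance at least $M/8$, and the hypothesis family
\begin{equation*}
f_{\omega} \;=\; \rho \sum_{k=1}^{M}\omega_{k}\phi_{k}, \qquad \rho = c\,R\,h^{m_{0}+d/2},
\end{equation*}
lies in $\mcB_{m_{0}}(R)$ since disjoint-support orthogonality gives $\|f_{\omega}\|_{H^{m_{0}}}^{2} \asymp \rho^{2} M h^{-2m_{0}} \leq c^{2} R^{2}$, with pairwise $L^{2}$-separation at least $\rho^{2}M/8 \asymp R^{2} h^{2m_{0}}$.

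Next I would bound the Kullback--Leibler divergence and apply Fano. Under the Bernstein-type moment condition (Assumption~\ref{assumption: error tail}), the per-sample KL admits the usual quadratic bound, yielding $\mathrm{KL}(P_{f_{\omega}}^{\otimes n}, P_{f_{\omega'}}^{\otimes n}) \lesssim n\|f_{\omega} - f_{\omega'}\|_{L^{2}}^{2} \leq c_{1} n R^{2} h^{2m_{0}}$, while $\log|\Omega| \geq (\log 2 / 8)\,h^{-d}$. Fano's inequality with a nearest-neighbor decoder then delivers
\begin{equation*}
\inf_{\tilde f}\sup_{\omega \in \Omega}\mbP_{\omega}\!\left(\|\tilde f - f_{\omega}\|_{L^{2}}^{2} \geq c_{2} R^{2} h^{2m_{0}}\right) \;\geq\; 1 - \frac{c_{1} n R^{2} h^{2m_{0}} + \log 2}{(\log 2 /8)\,h^{-d}}.
\end{equation*}
Choosing $h^{2m_{0}+d} \asymp \delta/(n R^{2})$ (legitimate once $n$ is large enough that $M = h^{-d}$ dominates $\log 2$) drives the Fano ratio below $\delta$, so with probability at least $1-\delta$ the $L^{2}$-error exceeds an order of $R^{2}h^{2m_{0}} \asymp R^{2d/(2m_{0}+d)}\delta^{2m_{0}/(2m_{0}+d)} n^{-2m_{0}/(2m_{0}+d)}$. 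Since $2m_{0}/(2m_{0}+d) \leq 1$ and $\delta \in (0,1)$, we have $\delta^{2m_{0}/(2m_{0}+d)} \geq \delta$; absorbing the residual $R$-power and the $\sigma, d, m_{0}$ dependence into a single constant $C$ that scales like $R^{2}$ then recovers the stated bound.

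The main obstacle will be reconciling constants: the Fano tuning naturally produces the sharper-looking exponents $R^{2d/(2m_{0}+d)} \delta^{2m_{0}/(2m_{0}+d)}$, and rewriting this in the paper's form $C\delta$ with $C \propto R^{2}$ requires the soft weakenings above, so one must verify that the final $C$ is indeed independent of both $n$ and $\delta$ and scales as $R^{2}$. A secondary technical point is that Assumption~\ref{assumption: error tail} is a Bernstein moment condition rather than a literal Gaussian noise model, so the quadratic KL bound used above must be justified either via a standard truncation/LAN argument or by replacing KL by Hellinger in Fano's inequality (with the pairwise Hellinger distance again quadratic in $\|f_{\omega} - f_{\omega'}\|_{L^{2}}$ up to constants). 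Finally, one must confirm that `for $n$ sufficiently large' is enough to ensure $M = h^{-d}$ is large enough for Varshamov--Gilbert to apply and for the $\log 2$ term in the Fano ratio to be negligible.
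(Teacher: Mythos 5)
The paper does not actually prove this lemma: it is stated as a known result, with the proof deferred to \citet{rastogi2017optimal,zhang2023optimality} and, for the specific claim that $C\propto R^2$, to \citet{lin2024smoothness}. Your Fano/Varshamov--Gilbert construction is the standard route those references take, and it correctly delivers the rate $n^{-2m_0/(2m_0+d)}$: the bump packing, the membership of $f_\omega$ in $\mcB_{m_0}(R)$, the $L^2$-separation $\asymp R^2h^{2m_0}$, the reduction to testing, and the choice of a Gaussian (or otherwise Bernstein-compliant) hard sub-family for the KL computation are all sound, and your weakening $\delta^{2m_0/(2m_0+d)}\geq\delta$ is a legitimate one-sided bound on the threshold.

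The genuine gap is the last "absorption" step, which is precisely the part of the lemma the paper actually needs. Your tuning $h^{2m_0+d}\asymp\delta/(nR^2)$ produces a separation of order $R^{2d/(2m_0+d)}\,\delta^{2m_0/(2m_0+d)}\,n^{-2m_0/(2m_0+d)}$, and to conclude the stated bound you must show this dominates $C_0R^2\delta\,n^{-2m_0/(2m_0+d)}$. The $\delta$-factor is fine, but the $R$-factor is not: you need $R^{2d/(2m_0+d)}\geq C_0R^2$, i.e. $R^{-4m_0/(2m_0+d)}\geq C_0$, which fails whenever $R>1$ is allowed to grow, since $R^{2d/(2m_0+d)}<R^2$ there. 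You cannot lower-bound a smaller quantity by a larger one, so the claimed proportionality $C\propto R^2$ is not established by your argument. This is not a cosmetic constant-chasing issue here: the entire purpose of tracking $C\propto R^2$ is to produce the ratio $\xi\propto R_\delta^2/R_P^2$ in Theorem~\ref{thm: lower bound of OTL}, so replacing $R^2$ by $R^{2d/(2m_0+d)}$ changes the downstream statement. Indeed, the exponent your Fano tuning produces is the information-theoretically sharp one, and the $R^2$ form can only be recovered in a restricted regime (e.g. after tracking the noise level one finds the separation is $R^{2d/(2m_0+d)}\sigma^{4m_0/(2m_0+d)}(\cdots)$, which dominates $R^2(\cdots)$ only when $R\lesssim\sigma$). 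To close the gap you would need either to impose such a restriction explicitly, or to import the specific construction of \citet{lin2024smoothness} that the paper relies on, rather than asserting the residual $R$-power can be absorbed.
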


\section{Additional Experimental Results}\label{apd: additional experiments}

In Figure~\ref{fig: Gaus_SA_all_slop}, we present the results that are based on different choices of $C$. It shows that the empirical excess risk decay rates still closely align with theoretical ones in both non-adaptive and adaptive cases.

\begin{figure}[ht]
    \centering
    \subfloat[Non-adaptive]{
        \centering
        \includegraphics[width=0.48\linewidth]{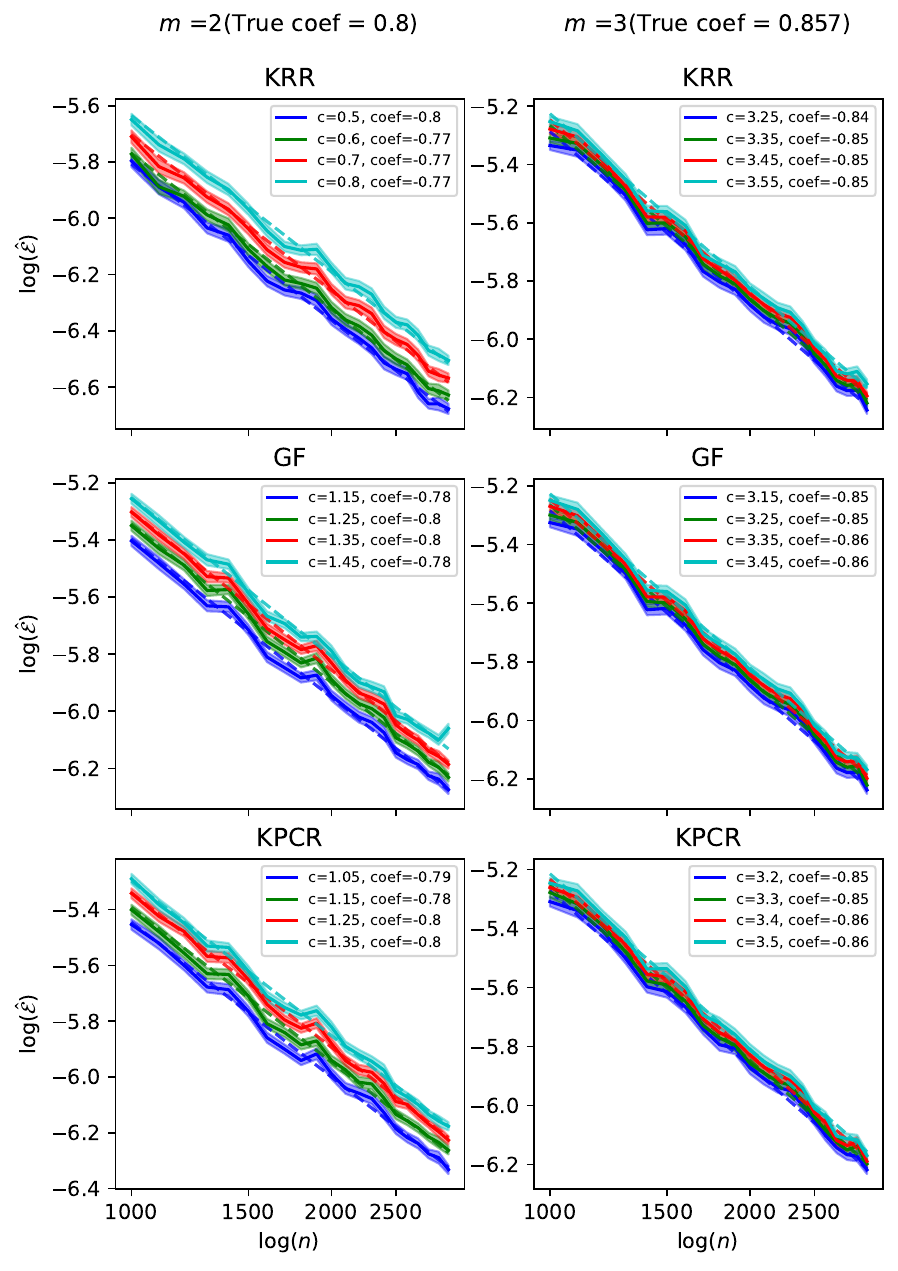} 
    }
    \subfloat[Adaptive]{
        \centering
        \includegraphics[width=0.48\linewidth]{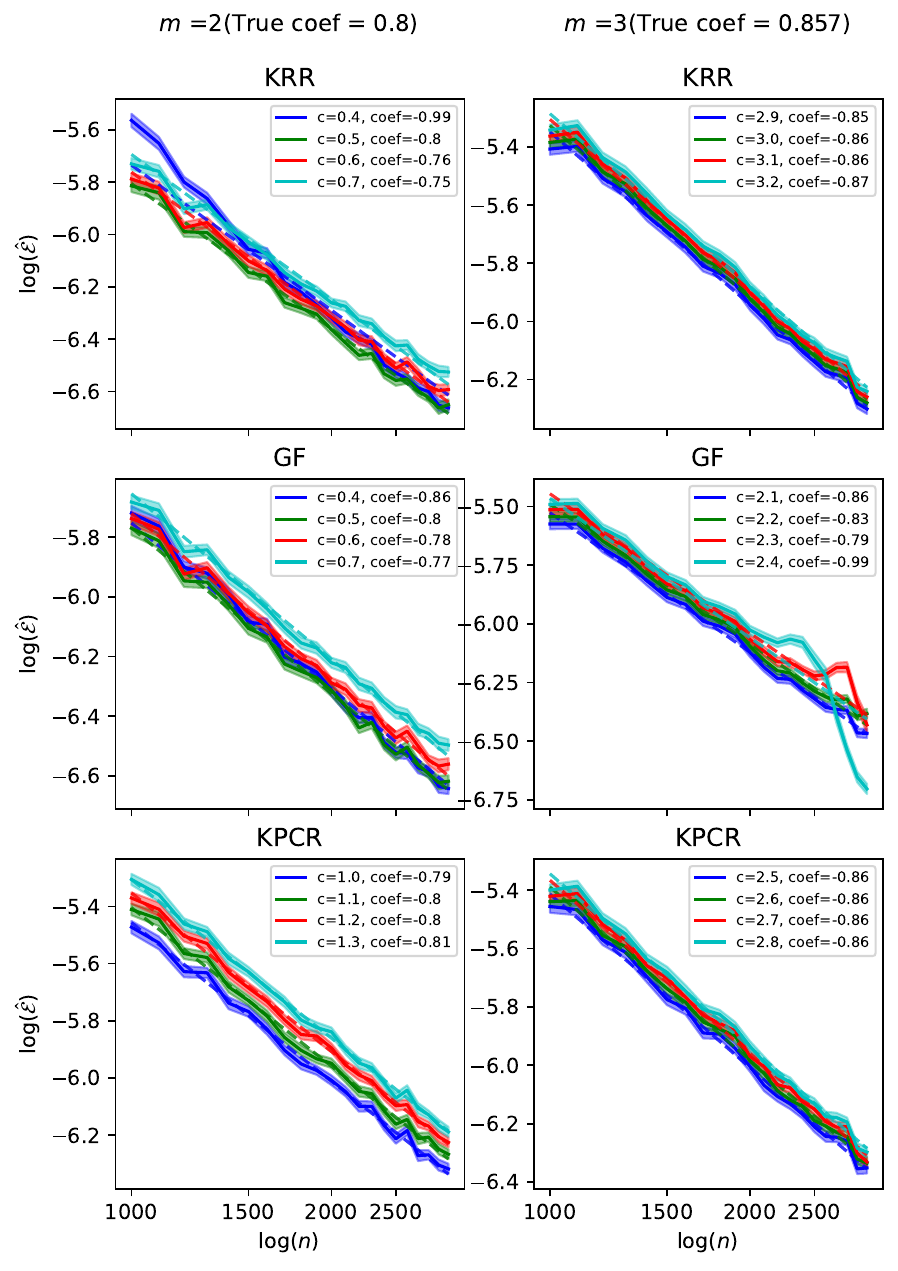} 
    }
    \caption{Error decay curves of spectral algorithms with Gaussian kernels with different selections of $C$. Both axes are plotted on a log scale. The dashed black lines denote the theoretical regression line of $\log \mcE$ on $\log n$ with slope $-\frac{2m}{2m+1}$, denoted by ``True''. Blue curves denote the average empirical excess risk over repeated trials, with shaded regions indicating $\pm 1$ standard error of the mean. "Est." denotes the estimated slope of the regression line.} 
    \label{fig: Gaus_SA_all_slop}
\end{figure}

\clearpage

{
\bibliographystyle{plainnat}
\bibliography{references}
}

\end{document}